\documentclass{amsart}
\usepackage{graphicx}
\usepackage{amssymb}
\usepackage{amsmath}
\usepackage{mathrsfs}
\usepackage{subfigure}
\usepackage{verbatim}
\usepackage{graphicx}
\usepackage{epsfig}
\usepackage{longtable}
\usepackage{epsfig,indentfirst,latexsym,bm,amsmath,amssymb,amsfonts,lscape,rawfonts,eufrak}
\usepackage{color}
\marginparwidth 0pt
\oddsidemargin 0pt
\evensidemargin 0pt
\marginparsep 0pt

\topmargin .5in
\hoffset -0.1in
\textwidth 6.5in \textheight 8.7in

\newtheorem{theorem}{Theorem}[section]

\newtheorem{remark}[theorem]{Remark}

\newcommand{\beq}{\begin{equation}}
\newcommand{\eeq}{\end{equation}}
\newcommand{\beqa}{\begin{eqnarray}}
\newcommand{\eeqa}{\end{eqnarray}}
\newcommand{\beqas}{\begin{eqnarray*}}
\newcommand{\eeqas}{\end{eqnarray*}}
\newcommand{\ba}{\begin{array}}
\newcommand{\ea}{\end{array}}
\newcommand{\bi}{\begin{itemize}}
\newcommand{\ei}{\end{itemize}}
\newcommand{\gap}{\hspace*{2em}}

\numberwithin{equation}{section}

\def\Argmin{{\rm Argmin}}

\def\vgap{\vspace*{.1in}}
\def\QED{\ifhmode\unskip\nobreak\fi\ifmmode\ifinner\else\hskip5pt\fi\fi
  \hbox{\hskip5pt\vrule width5pt height5pt depth1.5pt\hskip1pt}}
\def\Arg{{\rm Arg}}

\def\bJ{{\bar J}}

\def\cS{{\mathcal S}}

\def\cX{{\mathcal X}}
\def\cY{{\mathcal Y}}

\def\eps{{\epsilon}}
\def\feas{{\rm feas}}

\def\ph{{\partial h}}
\def\pg{{\partial g}}

\def\tu{{\tilde u}}
\def\vrho{{\varrho}}

\def\R{{{\mathbb R}}}

\def\cI{\mathcal{I}}

%%%%%%%%%%%%%%%%%%%%%%%%

\begin{document}

\title{$\ell_0$ Minimization for Wavelet Frame Based Image Restoration
}

\author{Yong Zhang}
\address{Department of Mathematics,
    Simon Fraser University, Burnaby, BC, V5A 1S6,
    Canada.} \email{yza30@sfu.ca}

\author{Bin Dong}
\address{Department of Mathematics,
The University of Arizona, 617 N. Santa Rita Ave., Tucson,
Arizona, 85721-0089} \email{dongbin@math.arizona.edu}

\author{Zhaosong Lu}
\address{Department of Mathematics,
    Simon Fraser University, Burnaby, BC, V5A 1S6,
    Canada.} \email{zhaosong@sfu.ca}

\subjclass[2000]{80M50, 90C26, 42C40, 68U10}

\keywords{$\ell_0$ minimization, hard thresholding, wavelet frame,
image restoration.}

\thanks{The first and third authors were supported in part by NSERC Discovery Grant.}

\maketitle

\begin{abstract}

The theory of (tight) wavelet frames has been extensively studied in the past
twenty years and they are currently widely used for image restoration and
other image processing and analysis problems. The success of wavelet frame
based models, including balanced approach \cite{chan2003wavelet,CCSS} and
analysis based approach \cite{cai2009split,elad2005simultaneous,
starck2005image}, is due to their capability of sparsely approximating
piecewise smooth functions like images. Motivated by the balanced approach
and analysis based approach, we shall propose a wavelet frame based $\ell_0$
minimization model, where the $\ell_0$ ``norm" of the frame coefficients is
penalized. We adapt the penalty decomposition (PD) method of
\cite{LuZhangTech2010} to solve the proposed optimization problem. Some
convergence analysis of the adapted PD method will also be provided.
Numerical results showed that the proposed model solved by the PD method can
generate images with better quality than those obtained by either analysis
based approach or balanced approach in terms of restoring sharp features as
well as maintaining smoothness of the recovered images.

\end{abstract}

\section{Introduction} \label{introduction}

Mathematics has been playing an important role in the modern developments of
image processing and analysis. Image restoration, including image denoising,
deblurring, inpainting, tomography, etc., is one of the most important areas
in image processing and analysis. Its major purpose is to enhance the quality
of a given image that is corrupted in various ways during the process of
imaging, acquisition and communication; and enable us to see crucial but
subtle objects residing in the image. Therefore, image restoration is an
important step to take towards accurate interpretations of the physical world
and making optimal decisions.

\subsection{Image Restoration}

Image restoration is often formulated as a linear inverse problem.
For the simplicity of the notations, we denote the images as
vectors in $\R^n$ with $n$ equals to the total number of pixels. A
typical image restoration problem is formulated as \beq
\label{model}
 f= Au+\eta,
\eeq where $f \in \R^d$ is the observed image (or measurements),
$\eta$ denotes white Gaussian noise with variance $\sigma^2$, and
$A \in \R^{d \times n}$ is some linear operator. The objective is
to find the unknown true image $u\in\R^n$ from the observed image
$f$. Typically, the linear operator in \eqref{model} is a
convolution operator for image deconvolution problems, a
projection operator for image inpainting and partial Radon
transform for computed tomography.

To solve $u$ from \eqref{model}, one of the most natural choices
is the following least square problem
$$\min_{u\in\R^n} \|Au-f\|_2^2,$$ where $\|\cdot\|_2$ denotes the $\ell_2$-norm.
This is, however, not a good idea in general. Taking image
deconvolution problem as an example, since the matrix $A$ is
ill-conditioned, the noise $\eta$ possessed by $f$ will be
amplified after solving the above least squares problem.
Therefore, in order to suppress the effect of noise and also
preserve key features of the image, e.g., edges, various
regularization based optimization models were proposed in the
literature. Among all regularization based models for image
restoration, variational methods and wavelet frames based
approaches are widely adopted and have been proven successful.

The trend of variational methods and partial differential equation
(PDE) based image processing started with the refined
Rudin-Osher-Fatemi (ROF) model \cite{ROF} which
penalizes the total variation (TV) of $u$. Many of the current PDE
based methods for image denoising and decomposition utilize TV
regularization for its beneficial edge preserving property (see
e.g., \cite{meyer2001oscillating,sapiro2001geometric,OF}). The ROF
model is especially effective on restoring images that are
piecewise constant, e.g., binary images. Other types of variational
models were also proposed after the ROF model. We refer the
interested readers to
\cite{geman1995nonlinear,chambolle1997image,meyer2001oscillating,OF,
chan2006total,aubert2006mathematical,ChanShen,wang2008new} and
the references therein for more details.

Wavelet frame based approaches are relatively new and came from a different
path. The basic idea for wavelet frame based approaches is that images can be
sparsely approximated by properly designed wavelet frames, and hence, the
regularization used for wavelet frame based models is the $\ell_1$-norm of
frame coefficients. Although wavelet frame based approaches take similar
forms as variational methods, they were generally considered as different
approaches than variational methods because, among many other reasons,
wavelet frame based approaches is defined for discrete data, while
variational methods assume all variables are functions. Some studies in the
literature (see for example \cite{steidl2005equivalence}) indicated that
there was a relation between Haar wavelet and total variation. However, it
was not clear if there exists a general relation between wavelet frames and
variational models (with general differential operators) in the context of
image restorations. In a recent paper \cite{CDOS2011}, the authors
established a rigorous connection between one of the wavelet frame based
approaches, namely the analysis based approach, and variational models. It
was shown in \cite{CDOS2011} that the analysis based approach can be regarded
as a finite difference approximation of a certain type of general variational
model, and such approximation will be exact when image resolution goes to
infinity. Furthermore, through Gamma-convergence, the authors showed that the
solutions of the analysis based approach also approximate the solutions of
the corresponding variational model. Such connections not only grant
geometric interpretation to wavelet frame based approaches, but also lead to
even wider applications of them, e.g., image segmentation \cite{Dong2010Seg}
and 3D surface reconstruction from unorganized point sets
\cite{Dong2011SurfRec}. On the other hand, the discretization provided by
wavelet frames was shown, in e.g.,
\cite{chan2003wavelet,chan2005frameletreport,
cai2009linearized,cai2009split,CDOS2011,Dong2010IASNotes}, to be superior
than the standard discretizations for some of the variational models, due to
the multiresolution structure and redundancy of wavelet frames which enable
wavelet frame based models to adaptively choose a proper differential
operators in different regions of a given image according to the order of the
singularity of the underlying solutions. For these reasons, as well as the
fact that digital images are always discrete, we use wavelet frames as the
tool for image restoration in this paper.

\subsection{Wavelet Frame Based Approaches}
\label{Wavelet}

We now briefly introduce the concept of tight frames and tight
wavelet frame, and then recall some of the frame based image
restoration models. Interesting readers should consult
\cite{ron1997affine,Dau,Daubechies2003} for theories of frames and
wavelet frames, \cite{ShenICM2010} for a short survey on theory
and applications of frames, and \cite{Dong2010IASNotes} for a more
detailed survey.

A countable set $X\subset L_2(\mathbb{R})$ is called a tight frame
of $L_2(\mathbb{R})$ if
\begin{equation*}
f=\sum_{h\in X}\langle f,h \rangle h \quad \forall f \in
L_2(\mathbb{R}),
\end{equation*}
where $\langle \cdot,\cdot \rangle$ is the inner product of
$L_2(\mathbb{R})$. The tight frame $X$ is called a tight wavelet
frame if the elements of $X$ are generated by dilations and
translations of finitely many functions called framelets. The
construction of framelets can be obtained by the unitary extension
principle (UEP) of \cite{ron1997affine}. In our implementations,
we will mainly use the piecewise linear B-spline framelets constructed by
\cite{ron1997affine}. Given a 1-dimensional framelet system for
$L_2(\mathbb{R})$, the $s$-dimensional tight wavelet frame system
for $L_2(\mathbb{R}^s)$ can be easily constructed by using tensor
products of 1-dimensional framelets (see e.g., \cite{Dau,
Dong2010IASNotes}).

In the discrete setting, we will use $W\in \R^{m\times n}$ with
$m\ge n$ to denote fast tensor product framelet decomposition and
use $W^\top$ to denote the fast reconstruction. Then by the
unitary extension principle \cite{ron1997affine}, we have $W^\top
W=I$, i.e., $u=W^\top Wu$ for any image $u$. We will further denote
an $L$-level framelet decomposition of $u$ as
\begin{equation*}
Wu=\left(\ldots, {W}_{l,j}u,\ldots\right)^\top \quad\mbox{for }
0\le l\le L-1, j\in \cI,
\end{equation*}
where $\cI$ denotes the index set of all framelet bands and
$W_{l,j}u\in\R^n$. Under such notation, we have $m=L\times
|\cI|\times n$. We will also use $\alpha\in\R^m$ to denote the
frame coefficients, i.e., $\alpha=Wu$, where
$$\alpha=\left(\ldots, \alpha_{l,j},\ldots\right)^\top,\quad\mbox{with} \
\alpha_{l,j}=W_{l,j}u.$$ More details on discrete algorithms
of framelet transforms can be found in \cite{Dong2010IASNotes}.

Since tight wavelet frame systems are redundant systems (i.e.,
$m>n$), the representation of $u$ in the frame domain is
not unique. Therefore, there are mainly three formulations
utilizing the sparseness of the frame coefficients, namely,
analysis based approach, synthesis based approach, and balanced
approach. Detailed and integrated descriptions of these three
methods can be found in \cite{Dong2010IASNotes}.

The wavelet frame based image processing started from
\cite{chan2003wavelet,chan2004tight} for high-resolution image
reconstructions, where the proposed algorithm was later analyzed
in \cite{CCSS}. These work lead to the following balanced approach
\cite{cai2008simultaneous} \beq \label{l1b}
\min\limits_{\alpha \in \R^m} \frac{1}{2}\|AW^\top\alpha-f\|^2_D +
\frac{\kappa}{2}\|(I-WW^\top)\alpha\|^2_2
 +\left\|\sum_{l=0}^{L-1} \left(\sum_{j\in\cI}\lambda_{l,j}|\alpha_{l,j}|^p\right)^{1/p}\right\|_1,
\eeq where $p=1$ or 2, $0\leq\kappa\leq\infty$,
$\lambda_{l,j}\ge0$ is a scalar parameter, and $\|\cdot\|_D$
denotes the weighted $\ell_2$-norm with $D$ positive definite. This
formulation is referred to as the balanced approach because it
balances the sparsity of the frame coefficient and the smoothness
of the image. The balanced approach \eqref{l1b} was applied to
various applications in
\cite{chai2007deconvolution,chan2007frameletvideo,shenAPG2009,Xun2010FrameCBCT}.

When $\kappa=0$, only the sparsity of the frame coefficient is
penalized. This is called the synthesis based approach, as the
image is synthesized by the sparsest coefficient vector(see e.g.,
\cite{daubechies2007iteratively,fadili5sparse,fadili2009inpainting,figueiredo2003algorithm,figueiredo2005bound}).
When $\kappa=+\infty$, only the sparsity of canonical wavelet
frame coefficients, which corresponds to the smoothness of the
underlying image, is penalized. For this case, problem
\eqref{l1b} can be rewritten as \beq \label{l1a}
\min\limits_{u \in \R^n} \frac{1}{2}\|Au-f\|^2_D +
\left\|\sum_{l=0}^{L-1}
\left(\sum_{j\in\cI}\lambda_{l,j}|W_{l,j}u|^p\right)^{1/p}\right\|_1.
\eeq This is called the analysis based approach, as the
coefficient is in range of the analysis operator (see, for example,
\cite{cai2009split,elad2005simultaneous, starck2005image}).

Note that if we take $p=1$ for the last term of \eqref{l1b} and
\eqref{l1a}, it is known as the anisotropic $\ell_1$-norm of the
frame coefficients, which is the case used for earlier frame based
image restoration models. The case $p=2$, called isotropic
$\ell_1$-norm of the frame coefficients, was proposed in
\cite{CDOS2011} and was shown to be superior than anisotropic
$\ell_1$-norm. Therefore, we will choose $p=2$ for our
simulations.

\subsection{Motivations and Contributions}

For most of the variational models and wavelet frame based
approaches, the choice of norm for the regularization term is the
$\ell_1$-norm. Taking wavelet frame based approaches for example,
the attempt of minimizing the $\ell_1$-norm of the frame
coefficients is to increase their sparsity, which is the right
thing to do since piecewise smooth functions like images can be
sparsely approximated by tight wavelet frames. Although the
$\ell_1$-norm of a vector does not directly correspond to its
cardinality in contrast to $\ell_0$ ``norm", it can be regarded as a
convex approximation to $\ell_0$ ``norm". Such approximation is also
an excellent approximation for many cases. It was shown by
\cite{candes2010compressed}, which generalizes the exciting
results of compressed sensing \cite{CRT,CT1,CT,Do2}, that for a
given wavelet frame, if the operator $A$ satisfies certain
conditions, and if the unknown true image can be sparsely
approximated by the given wavelet frame, one can robustly recover
the unknown image by penalizing the $\ell_1$-norm of the frame
coefficients.

For image restoration, however, the conditions on $A$ as required
by \cite{candes2010compressed} are not generally satisfied, which
means penalizing $\ell_0$ ``norm" and $\ell_1$-norm may produce
different solutions. Although both the balanced approach
\eqref{l1b} and analysis based approach \eqref{l1a} can generate
restored images with very high quality, one natural question is
whether using $\ell_0$ ``norm" instead of $\ell_1$-norm can
further improve the results.

On the other hand, it was observed, in e.g.,
\cite{Dong2010IASNotes} (also see Figure \ref{image4} and Figure
\ref{image5}), that balanced approach \eqref{l1b} generally
generates images with sharper features like edges than the
analysis based approach \eqref{l1a}, because balanced approach
emphasizes more on the sparsity of the frame coefficients. However,
the recovered images from balanced approach usually contains more
artifact (e.g., oscillations) than analysis based approach, because
the regularization term of the analysis based approach has a
direct link to the regularity of $u$ (as proven by
\cite{CDOS2011}) comparing to balanced approach. Although such
trade-off can be controlled by the parameter $\kappa$ in the
balanced approach \eqref{l1b}, it is not very easy to do in
practice. Furthermore, when a large $\kappa$ is chosen, some of
the numerical algorithms solving \eqref{l1b} will converge slower
than choosing a smaller $\kappa$ (see e.g.,
\cite{shenAPG2009,Dong2010IASNotes}).

Since penalizing $\ell_1$-norm of $Wu$ ensures smoothness while not as much
sparsity as balanced approach, we propose to penalize $\ell_0$ ``norm" of
$Wu$ instead. Intuitively, this should provide us a balance between sharpness
of the features and smoothness for the recovered images. The difficulty here
is that $\ell_0$ minimization problems are generally hard to solve. Recently,
penalty decomposition (PD) methods were proposed by \cite{LuZhangTech2010}
for a general $\ell_0$ minimization problem that can be used to solve our
proposed model due to its generality. Computational results of
\cite{LuZhangTech2010} demonstrated that their methods generally outperform
the existing methods for compressed sensing problems, sparse logistic
regression and sparse inverse covariance selection problems in terms of
quality of solutions and/or computational efficiency. This motivates us to
adapt one of their PD methods to solve our proposed $\ell_0$ minimization
problem. Same as proposed in \cite{LuZhangTech2010}, the block coordinate
descent (BCD) method is used to solve each penalty subproblem of the PD
method. However, the convergence analysis of the BCD method was missing from
\cite{LuZhangTech2010} when $\ell_0$ ``norm'' appears in the objective
function. Indeed, the convergence of the BCD method generally requires the
continuity of the objective function as discussed in
\cite{tseng2001convergence}. In addition, the BCD method for the optimization
problem with the nonconvex objective function has only been proved to
converge to a stationary point which is not a local minimizer in general (see
\cite{tseng2001convergence} for details).

\subsection*{Contributions}
The main contributions of this paper are summarized as follows.
\begin{itemize}
\item[1)] We propose a new wavelet frame based model for image
    restoration problems that penalizes the $\ell_0$ ``norm'' of the
    wavelet frame coefficients. Numerical simulations show that the PD
    method that solves the proposed model generates recovered images with
    better quality than those obtained by either balanced approach and
    analysis based approach.
\item[2)] Given the discontinuity and nonconvexity of the $\ell_0$
    ``norm'' term in the objective function, we have proved some
    convergence results for the BCD method which is missing from the
    literature.
\end{itemize}

We now leave the details of the model and algorithm to Section \ref{method} and
details of simulations to Section \ref{results}.

\section{Model and Algorithm}\label{method}

We start by introducing some simple notations. The space of
symmetric $n \times n$ matrices will be denoted by $\cS^n$. If $X
\in \cS^n$ is positive definite, we write $X \succ 0$. We denote
by $I$ the identity matrix, whose dimension should be clear from
the context. Given an index set $J \subseteq \{1,\ldots,n\}$,
$x_J$ denotes the sub-vector formed by the entries of $x$ indexed
by $J$. For any real vector, $\|\cdot\|_0$ and $\|\cdot\|_2$ denote
the cardinality (i.e., the number of nonzero entries) and the
Euclidean norm of the vector, respectively. In addition, $\|x\|_D$
denotes the weighted $\ell_2$-norm defined by $\|x\|_D =\sqrt{x^\top Dx}$
with $D \succ 0$.

\subsection{Model}

We now propose the following optimization model for image restoration
problems, \beq \label{l0A} \min\limits_{u \in \cY} \frac{1}{2}\|Au-f\|^2_D +
\sum_{\bm{i}} \lambda_{\bm{i}}\|(Wu)_{\bm{i}}\|_0, \eeq where $\cY$ is some
convex subset of $\R^n$. Here we are using the multi-index $\bm{i}$ and
denote $(Wu)_{\bm{i}}$ (similarly for $\lambda_{\bm{i}}$) the value of $Wu$
at a given pixel location within a certain level and band of wavelet frame
transform. Comparing to the analysis based model, we are now penalizing the
number of nonzero elements of $Wu$. As mentioned earlier that if we emphasize
too much on the sparsity of the frame coefficients as in the balanced
approach or synthesis based approach, the recovered image will contain
artifacts, although features like edges will be sharp; if we emphasize too
much on the regularity of $u$ like in analysis based approach, features in
the recovered images will be slightly blurred, although artifacts and noise
will be nicely suppressed. Therefore, by penalizing the $\ell_0$ ``norm" of
$Wu$ as in \eqref{l0A}, we can indeed achieve a better balance between
sharpness of features and smoothness of the recovered images.

Given that the $\ell_0$ ``norm" is an integer-valued, discontinuous and
nonconvex function, problem \eqref{l0a} is generally hard to solve. Some
algorithms proposed in the literature, e.g., iterative hard thresholding
algorithms \cite{BD08,blumensath2009iterative,he06sp}, cannot be directly
applied to the proposed model \eqref{l0A} unless $W=I$. Recently, Lu and
Zhang \cite{LuZhangTech2010} proposed a penalty decomposition (PD) method to
solve the following general $\ell_0$ minimization problem: \beq \label{l0}
\min\limits_{x \in \cX} f(x) + \nu \|x_J\|_0 \eeq for some $\nu > 0$
controlling the sparsity of the solution, where $\cX$ is a closed convex set
in $\R^n$, $f: \R^n \to \R$ is a continuously differentiable function, and
$\|x_J\|_0$ denotes the cardinality of the subvector formed by the entries of
$x$ indexed by $J$. In view of \cite{LuZhangTech2010}, we reformulate
\eqref{l0A} as \beq \label{l0a} \min\limits_{u \in \cY, \alpha = Wu}
\frac{1}{2}\|Au-f\|^2_D + \sum_{\bm{i}} \lambda_{\bm{i}}\|\alpha_{\bm{i}}\|_0
\eeq and then we can adapt the PD method of \cite{LuZhangTech2010} to tackle
problem \eqref{l0A} directly. Same as proposed in \cite{LuZhangTech2010}, the
BCD method is used to solve each penalty subproblem of the PD method. In
addition, we apply the non-monotone gradient projection method proposed in
\cite{birgin2000nonmonotone} to solve one of the subproblem in the BCD
method.

\subsection{Algorithm for Problem
\eqref{l0a}}

In this section, we discuss how the PD method proposed in
\cite{LuZhangTech2010} solving \eqref{l0} can be adapted to solve problem
\eqref{l0a}. Letting $x = ( u_1, \ldots, u_n, \alpha_1, \ldots,\alpha_m)$, $J
=\{n+1,\ldots,n+m\}$, $\bJ = \{1,\ldots,n\}$, $f(x) = \frac{1}{2} \|Ax_{\bJ}
-f\|^2_D$ and $\cX = \{x \in \R^{n+m}:x_{J} = Wx_{\bJ} \ \mbox{and} \ x_{\bJ}
\in \cY\}$, we can clearly see that the problem \eqref{l0a} takes the same
form as \eqref{l0}. In addition, there obviously exists a feasible point
$(u^{\feas},\alpha^{\feas})$ for problem \eqref{l0a} when $\cY \neq
\emptyset$, i.e. there exist $(u^{\feas},\alpha^{\feas})$ such that
$Wu^{\feas}=\alpha^{\feas}$ and $u^{\feas}\in\cY$. In particular, we can
choose $(u^{\feas},\alpha^{\feas})=(0,0)$, which is the choice we make for
our numerical studies. We now discuss the implementation details of the PD
method when solving the proposed wavelet frame based model \eqref{l0a}.

Given a penalty parameter $\vrho>0$, the associated quadratic
penalty function for \eqref{l0a} is defined as \beq \label{l0rp}
p_{\vrho}(u,\alpha) := \frac{1}{2}\|Au-f\|^2_D + \sum_{\bm{i}}
\lambda_{\bm{i}}\|\alpha_{\bm{i}}\|_0 + \frac{\vrho}{2} \|Wu -
\alpha\|^2_2. \eeq Then we have the following PD method for
problem \eqref{l0a} where each penalty subproblem is approximately
solved by a BCD method (see \cite{LuZhangTech2010} for details).

\gap

\noindent
%\begin{minipage}[h]{6.6 in}
{\bf Penalty Decomposition (PD) Method for \eqref{l0a}:}  \\
[5pt] Let $\vrho_0 >0$, $\delta > 1$ be given. Choose an arbitrary
$\alpha^{0,0} \in \R^m$ and a constant $\Upsilon$ such that
$\Upsilon \ge \max\{\frac{1}{2}\|Au^\feas-f\|^2_D + \sum_{\bm{i}}
\lambda_{\bm{i}}\|\alpha^\feas_{\bm{i}}\|_0, \min_{u \in \cY}
p_{\vrho_0}(u,\alpha^{0,0})\}$. Set $k=0$.
\begin{itemize}
\item[1)] Set $q=0$ and apply the BCD method to find an
approximate solution $(u^k, \alpha^k) \in \cY \times \R^m$ for the
penalty subproblem \beq \label{inner-prob2}
\min\{p_{\vrho_k}(u,\alpha): \ u \in \cY, \ \alpha \in \R^m\} \eeq
by performing steps 1a)-1d): \bi \item[1a)] Solve $u^{k,q+1} \in
\Arg\min\limits_{u \in \cY} p_{\vrho_k}(u,\alpha^{k,q})$.

\item[1b)] Solve $\alpha^{k,q+1} \in \Arg\min\limits_{\alpha \in
\R^n} p_{\vrho_k}(u^{k,q+1},\alpha)$. \item[1c)] If $(u^{k,q+1},
\alpha^{k,q+1})$ satisfies the stopping criteria of the BCD method, set $(u^k,
\alpha^k) := (u^{k,q+1},\alpha^{k,q+1})$ and go to step 2).
\item[1d)] Otherwise, set $q \leftarrow q+1$ and go to step 1a). \ei
    \item[2)] If $(u^k,\alpha^k)$ satisfies the stopping criteria of the
    PD method, stop and output $u^k$. Otherwise, set$\vrho_{k+1} :=
    \delta\vrho_k$.
    \item[3)] If $\min\limits_{u \in \cY} p_{\vrho_{k+1}}(u,\alpha^k) >
    \Upsilon$, set $\alpha^{k+1,0} := \alpha^{\feas}$. Otherwise, set
    $\alpha^{k+1,0} := \alpha^k$.
\item[4)] Set $k \leftarrow k+1$ and go to step 1).
\end{itemize}
\noindent
{\bf end}
\vgap

\begin{remark}
In the practical implementation, we terminate the inner iterations of the BCD method
based on the relative progress of $p_{\vrho_k}(u^{k,q},\alpha^{k,q})$
which can be described as follows: \beq \label{inner-term}
\frac{|p_{\vrho_k}(u^{k,q},\alpha^{k,q})-p_{\vrho_k}(u^{k,q+1},\alpha^{k,q+1})|}
           {\max(|p_{\vrho_k}(u^{k,q+1},\alpha^{k,q+1})|,1)} \ \leq \  \eps_I. %10^{-4}
\eeq
Moreover, we terminate the outer iterations of the PD method once
\beq \label{outer-term}
\frac{\|Wu^k-\alpha^k\|_2}{\max(|p_{\vrho_k}(u^k,\alpha^k)|,1)} \ \leq \ \eps_O. %10^{-3}
\eeq
\end{remark}
Next we discuss how to solve two subproblems arising in step 1a)
and 1b) of the PD method.

\subsubsection{The BCD subproblem in step 1a)} The BCD subproblem
in step 1a) is in the form of \beq \label{sub2} \min\limits_{u \in
\cY} \frac{1}{2}\langle u, Qu \rangle - \langle c, u \rangle \eeq
for some $Q \succ 0$ and $c \in \R^n$. Obviously, when $\cY =
\R^n$, problem \eqref{sub2} is an unconstrained quadratic
programming problem that can be solved by the conjugate gradient
method. Nevertheless, the pixel values of an image are usually
bounded. For example, the pixel values of a CT image should be
always greater than or equal to zero and the pixel values of a
grayscale image is between $[0,255]$. Then the corresponding $\cY$
of these two examples are $\cY=\{x \in \R^n:x_i \geq lb \ \forall
i=1,\ldots,n\}$ with $lb=0$ and $\cY=\{x \in \R^n:lb\leq x_i \leq
ub \ \forall i=1,\ldots,n\}$ with $lb=0$ and $ub=255$. To solve
these types of the constrained quadratic programming problems, we
apply the nonmonotone projected gradient method proposed in
\cite{birgin2000nonmonotone} and terminate it using the duality
gap and dual feasibility conditions (if necessary).

For $\cY=\{x \in \R^n:x_i \geq lb \ \forall i=1,\ldots,n\}$, given
a Lagrangian multiplier $\beta \in \R^n$, the associated
Lagrangian dual function of \eqref{sub2} can be written as:
\[
 L(u,\beta) = w(u) + \beta^\top (lb-u),
\]
where $w(u)=\frac{1}{2}\langle u, Qu \rangle - \langle c, u
\rangle$. Based on the Karush-Kuhn-Tucker (KKT) conditions, for an
optimal solution $u^*$ of \eqref{sub2}, there exists a Lagrangian
multiplier $\beta^*$ such that
\[
\ba{l}
 Qu^* - c - \beta^*= 0, \\
 \beta_i^* \geq 0 \ \forall i =1,\ldots,n, \\
  (lb-u_i^*)\beta_i^* = 0 \ \forall i =1,\ldots,n.
\ea
\]
Then at the $s$th iteration of the projected gradient method, we
let $\beta^s=Qu^s - c$. As $\{u^s\}$ approaches the solution $u^*$
of \eqref{sub2}, $\{\beta^s\}$ approaches the Lagrangian
multiplier $\beta^*$ and the corresponding duality gap at each
iteration is given by $\sum_{i=1}^n\beta^s_i(lb-u^s_i)$.
Therefore, we terminate the projected gradient method when
\[
\frac{ |\sum_{i=1}^n\beta^s_i(lb-u^s_i)|}{\max(|w(u^s)|,1)} \ \leq \eps_D \
\mbox{and} \ \frac{ -\min(\beta^s,0)}{\max(\|\beta^s\|_2,1)} \ \leq \eps_F
\]
for some tolerances $\eps_D,\eps_F > 0$.

For $\cY=\{x \in \R^n:lb\leq x_i \leq ub \ \forall
i=1,\ldots,n\}$, given Lagrangian multipliers $\beta, \gamma \in
\R^n$, the associated Lagrangian function of \eqref{sub2} can be
written as:
\[
 L(u,\beta,\gamma) = w(u) + \beta^\top (lb-u) + \gamma^\top (u-ub),
\]
where $w(u)$ is defined as above. Based on the KKT conditions, for
an optimal solution $u^*$ of \eqref{sub2}, there exist Lagrangian
multipliers $\beta^*$ and $\gamma^*$ such that
\[
\ba{l}
 Qu^* - c - \beta^* + \gamma^*= 0, \\
\beta_i^* \geq 0 \ \forall i =1,\ldots,n, \\
 \gamma_i^* \geq 0 \ \forall i =1,\ldots,n, \\
  (lb-u_i^*)\beta_i^* = 0 \ \forall i =1,\ldots,n, \\
(u_i^*-ub)\gamma_i^* = 0 \ \forall i =1,\ldots,n.
\ea
\]
Then at the $s$th iteration of the projected gradient method, we
let $\beta^s= \max(Qu^s - c,0)$ and $\gamma^s=-\min(Qu^s - c,0)$.
As $\{u^s\}$ approaches the solution $u^*$ of \eqref{sub2},
$\{\beta^s\}$ and $\{\gamma^s\}$ approach Lagrangian multipliers
$\beta^*$ and $\gamma^*$. In addition, the corresponding duality
gap at each iteration is given by
$\sum_{i=1}^n(\beta^s_i(lb-u^s_i) + \gamma^s_i(u^s_i-ub))$ and the
duality feasibility is automatically satisfied. Therefore, we
terminate the projected gradient method when
\[
\frac{ |\sum_{i=1}^n(\beta^s_i(lb-u^s_i) + \gamma^s_i(u^s_i-ub))|}{\max(|w(u^s)|,1)} \ \leq \eps_D
\]
for some tolerance $\eps_D > 0$.

\subsubsection{The BCD subproblem in step 1b)}
For $\lambda_{\bm{i}}\ge0$, $\vrho>0$ and $c \in \R^m$, the BCD
subproblem in step 1b) is in the form of
\[
\min_{\alpha \in \R^m}\sum_{\bm{i}}
\lambda_{\bm{i}}\|\alpha_{\bm{i}}\|_0 + \frac{\vrho}{2}
\sum_{{\bm{i}}}(\alpha_{\bm{i}} -c_{\bm{i}})^2.
\]
By \cite[Proposition 2.2]{LuZhangTech2010} (see also
\cite{antoniadis2001regularization,BD08} for example), the solutions of the
above subproblem forms the following set:
\begin{equation}\label{Eqn:1}
\alpha^* \in H_{\tilde\lambda}\left( c
\right)%=H_{\tilde\lambda}\left( \alpha^*+b^* \right),
\quad\mbox{with }
\tilde\lambda_{\bm{i}}:=\sqrt{\frac{2\lambda_{\bm{i}}}{\vrho}} \ \mbox{for all} \ \bm{i},
\end{equation}
where $H_\gamma(\cdot)$ denotes a component-wise hard thresholding operator with
threshold $\gamma$: \beq \label{D:HT} [H_{\gamma}(x)]_{\bm{i}} =
\left\{\ba{ll}
0    & \mbox{if} \ |x_{\bm{i}}| < \gamma_{\bm{i}}, \\
\{0,  x_{\bm{i}}\} & \mbox{if} \ |x_{\bm{i}}| = \gamma_{\bm{i}}, \\
x_{\bm{i}}  & \mbox{if} \ |x_{\bm{i}}| > \gamma_{\bm{i}}. \ea\right. \eeq
%\[
%\alpha^*_i = \left\{\ba{ll}
%c_i & \mbox{if} \ \lambda_i \le \frac{\vrho c_i^2}{2}; \\
%0  & \mbox{otherwise}, \ea\right. \quad i=1, \ldots, m.
%\]
Note that $H_\gamma$ is defined as a set-valued mapping \cite[Chapter
5]{rockafellar2004variational} which is different (only when $|x_{\bm{i}}| =
\gamma_{\bm{i}}$) from the conventional definition of hard thresholding
operator.

\subsection{Convergence of the BCD method}
In this subsection, we establish some convergence results
regarding the inner iterations, i.e., Step 1), of the PD method. In
particular, we will show that the fixed point of the BCD method is
a local minimizer of \eqref{inner-prob2}. Moreover, under certain
conditions, we prove that the sequence $\{(u^{k,q}, \alpha^{k,q})\}$
generated by the BCD method converges and the limit is a local
minimizer of \eqref{inner-prob2}.

For convenience of presentation, we omit the index $k$ from
\eqref{inner-prob2} and consider the BCD method for solving the
following problem: \beq \label{inner-prob}
\min\{p_{\vrho}(u,\alpha): \ u \in \cY, \ \alpha \in \R^m\}. \eeq
Without loss of generality, we assume that $D=I$. We now relabel
and simplify the BCD method described in step 1a)-1c) in the PD
method as follows.
\begin{equation}\label{Algorithm:BCD}
\begin{cases}
{u}^{q+1} = \arg\min_{u\in\cY}\ \frac12\| {Au}-
{f}\|_2^2+\frac{\vrho}{2}\| {Wu}- {\alpha}^q\|_2^2,\cr
{\alpha}^{q+1} \in \Arg\min_{ {\alpha}}\
\sum_{\bm{i}}\lambda_{\bm{i}}\|{\alpha}_{\bm{i}}\|_0+\frac{\vrho}{2}\|
{\alpha}- {Wu}^{q+1}\|_2^2.
\end{cases}
\end{equation}
We first show that the fixed point of the above BCD method is a local minimizer
of \eqref{inner-prob2}.

\begin{theorem}
\label{fixed} Given a fixed point of the BCD method
\eqref{Algorithm:BCD}, denoted as $(u^*, \alpha^*)$, then
$(u^*,\alpha^*)$ is a local minimizer of $p_\vrho(u,\alpha)$.
\end{theorem}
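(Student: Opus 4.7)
The plan is to exploit the two one-sided optimality conditions that a fixed point of \eqref{Algorithm:BCD} must satisfy. The first step is to record these carefully. Because $u^*$ is a minimizer of $p_\vrho(\cdot,\alpha^*)$ over $\cY$, we obtain the variational inequality
\[
\langle A^\top(Au^*-f)+\vrho W^\top(Wu^*-\alpha^*),\,u-u^*\rangle \ge 0\quad\forall u\in\cY.
\]
Because $\alpha^*$ lies in the hard-thresholding set $H_{\tilde\lambda}(Wu^*)$ (with $\tilde\lambda_{\bm i}=\sqrt{2\lambda_{\bm i}/\vrho}$) from \eqref{Eqn:1}, the description \eqref{D:HT} gives a pointwise characterization: on the support $S^*:=\{\bm i:\alpha^*_{\bm i}\neq 0\}$, $\alpha^*_{\bm i}=(Wu^*)_{\bm i}$, so $(\alpha^*-Wu^*)_{\bm i}=0$; off the support, $\alpha^*_{\bm i}=0$ and $|(Wu^*)_{\bm i}|\le \tilde\lambda_{\bm i}$.

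Next I would choose an open neighborhood $N$ of $(u^*,\alpha^*)$ small enough that for every $(u,\alpha)\in N$ two properties hold simultaneously: (i) for $\bm i\in S^*$, $\alpha_{\bm i}\neq 0$, so $\|\alpha_{\bm i}\|_0=\|\alpha^*_{\bm i}\|_0$; and (ii) for $\bm i\notin S^*$ with $\lambda_{\bm i}>0$, $|\alpha_{\bm i}|<\tilde\lambda_{\bm i}/2$. Because only finitely many indices are involved, both constraints can be imposed by a single $\ell_\infty$ ball around $\alpha^*$. Under (i)–(ii) the $\ell_0$ gap is
\[
\sum_{\bm i}\lambda_{\bm i}\bigl(\|\alpha_{\bm i}\|_0-\|\alpha^*_{\bm i}\|_0\bigr)=\sum_{\bm i\notin S^*,\;\alpha_{\bm i}\neq 0}\lambda_{\bm i}\ \ge\ 0,
\]
which encodes the lower semicontinuity of $\|\cdot\|_0$ in a usable form.

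For the smooth part $F(u,\alpha):=\frac12\|Au-f\|_2^2+\frac{\vrho}{2}\|Wu-\alpha\|_2^2$, since $F$ is a convex quadratic, its Taylor expansion at $(u^*,\alpha^*)$ is exact:
\[
F(u,\alpha)-F(u^*,\alpha^*)=\langle\nabla_u F,u-u^*\rangle+\vrho\langle\alpha^*-Wu^*,\alpha-\alpha^*\rangle+\tfrac12\|A(u-u^*)\|_2^2+\tfrac{\vrho}{2}\|W(u-u^*)-(\alpha-\alpha^*)\|_2^2.
\]
The first term is $\ge 0$ by the variational inequality, and the two quadratic terms are manifestly $\ge 0$. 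Using that $(\alpha^*-Wu^*)_{\bm i}=0$ on $S^*$, the remaining cross-term reduces to $-\vrho\sum_{\bm i\notin S^*}(Wu^*)_{\bm i}\alpha_{\bm i}$. Combined with the $\ell_0$ gap, it suffices to show
\[
\sum_{\bm i\notin S^*,\;\alpha_{\bm i}\neq 0}\bigl(\lambda_{\bm i}-\vrho(Wu^*)_{\bm i}\alpha_{\bm i}\bigr)\ \ge\ 0,
\]
and each summand is nonnegative because $|\vrho(Wu^*)_{\bm i}\alpha_{\bm i}|\le \vrho\tilde\lambda_{\bm i}|\alpha_{\bm i}|<\vrho\tilde\lambda_{\bm i}\cdot\tilde\lambda_{\bm i}/2=\lambda_{\bm i}$ in $N$. (The case $\lambda_{\bm i}=0$ is immediate since then $\alpha^*_{\bm i}=(Wu^*)_{\bm i}$ and the cross-term vanishes at that index.) Assembling these inequalities yields $p_\vrho(u,\alpha)\ge p_\vrho(u^*,\alpha^*)$ on $N\cap(\cY\times\R^m)$.

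The main obstacle is the discontinuity of $\|\cdot\|_0$ off the support: the penalty $G$ jumps by the positive constant $\lambda_{\bm i}$ whenever an index moves from zero to nonzero, while the smooth part moves only linearly in that direction. The crucial observation that makes the argument work is the thresholding estimate $|(Wu^*)_{\bm i}|\le\tilde\lambda_{\bm i}$, which caps the slope of the linear penalty so that, in a small enough neighborhood, the discrete $\ell_0$ jump strictly dominates the cross-term. Everything else—the quadratic nonnegativity and the variational inequality in $u$—is standard once the right neighborhood is specified.
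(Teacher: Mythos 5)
Your proof is correct and follows essentially the same route as the paper's: the same two optimality conditions (the variational inequality in $u$ and the hard-thresholding characterization $|(Wu^*)_{\bm i}|\le\tilde\lambda_{\bm i}$ off the support, $\alpha^*_{\bm i}=(Wu^*)_{\bm i}$ on it), the same exact quadratic expansion, and the same reduction to showing that the $\ell_0$ jump $\lambda_{\bm i}$ dominates the linear cross-term off the support. The only cosmetic difference is that you quantify the neighborhood via $|\alpha_{\bm i}|<\tilde\lambda_{\bm i}/2$ while the paper uses $|\pg_{\bm i}|\le\lambda_{\bm i}/(\vrho|(Wu^*)_{\bm i}|)$; both yield the same termwise bound.
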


\begin{proof}
We first note that the first subproblem of \eqref{Algorithm:BCD}
gives us
\begin{equation}
\label{Eqn:Opt:1}
\langle A^\top(Au^*-f)+\vrho W^\top(Wu^*-\alpha^*),
v-u^*\rangle\ge0\quad\mbox{for all } v\in\cY.
\end{equation}
By applying \eqref{Eqn:1}, the second subproblem of \eqref{Algorithm:BCD} leads to:
\beq
\label{Eqn:2}
\alpha^* \in H_{\tilde\lambda}\left( Wu^*
\right).
\eeq
Define index sets
$$\Gamma_0:=\{\bm{i}: \alpha^*_{\bm{i}}=0\}\quad\mbox{and}\quad \Gamma_1:=\{\bm{i}:
\alpha^*_{\bm{i}}\ne0\}.$$ It then follows from \eqref{Eqn:2} and \eqref{D:HT} that
\begin{equation}\label{Eqn:Opt:2}
\begin{cases}
|(Wu^*)_{\bm{i}}|\le \tilde\lambda_{\bm{i}}\quad\mbox{for
}\bm{i}\in\Gamma_0\cr (Wu^*)_{\bm{i}} = \alpha^*_{\bm{i}}
\quad\mbox{for }\bm{i}\in\Gamma_1,
\end{cases}
\end{equation}
where $(Wu^*)_{\bm{i}}$ denotes ${\bm{i}}$th entry of $Wu^*$.

Consider a small deformation vector $(\ph,\pg)$ such that
$u^*+\ph\in\cY$. Using \eqref{Eqn:Opt:1}, we have
\begin{eqnarray*}
p_\vrho(u^*+\ph,\alpha^*+\pg)&=&\frac12\|Au^*+A\ph-f\|_2^2+\sum_{\bm{i}}\lambda_{\bm{i}}\|(\alpha^*+\pg)_{\bm{i}}\|_0
+  \frac{\vrho}{2}\| \alpha^* + \pg-W(u^*+\ph) \|_2^2\\
 &=&\frac12\|Au^*-f\|_2^2+\langle A\ph,Au^*-f
 \rangle+\frac12\|A\ph\|_2^2 +
 \sum_{\bm{i}}\lambda_{\bm{i}}\|(\alpha^*+\pg)_{\bm{i}}\|_0\\
& &+ \frac{\vrho}{2}\| \alpha^* -Wu^* \|_2^2 + \vrho\langle
\alpha^*-Wu^*,\pg-W\ph
 \rangle+ \frac{\vrho}{2}\| \pg -W\ph \|_2^2\\
 &=&\frac12\|Au^*-f\|_2^2 + \sum_{\bm{i}}\lambda_{\bm{i}}\|(\alpha^*+\pg)_{\bm{i}}\|_0
+ \frac{\vrho}{2}\| \alpha^* -Wu^* \|_2^2+\frac12\|A\ph\|_2^2\\
&& +\langle \ph,A^\top(Au^*-f)+\vrho W^\top(Wu^*-\alpha^*)
 \rangle + \vrho\langle \pg,\alpha^*-Wu^*\rangle + \frac{\vrho}{2}\| \pg -W\ph \|_2^2\\
&\ge&\frac12\|Au^*-f\|_2^2 +
\sum_{\bm{i}}\lambda_{\bm{i}}\|(\alpha^*+\pg)_{\bm{i}}\|_0
+ \frac{\vrho}{2}\| \alpha^* -Wu^* \|_2^2\\
&& +\langle \ph,A^\top(Au^*-f)+\vrho W^\top(Wu^*-\alpha^*)
 \rangle+ \vrho \langle \pg,\alpha^*-Wu^*\rangle \\
(\mbox{By \eqref{Eqn:Opt:1}}) &\ge&\frac12\|Au^*-f\|_2^2 +
\sum_{\bm{i}}\lambda_{\bm{i}}\|(\alpha^*+\pg)_{\bm{i}}\|_0
+ \frac{\vrho}{2}\| \alpha^* -Wu^* \|_2^2 + \vrho \langle \pg,\alpha^*-Wu^*\rangle \\
  &=&\frac12\|Au^*-f\|_2^2+\frac{\vrho}{2}\| \alpha^* -Wu^* \|_2^2 +
 \sum_{\bm{i}}\Big(\lambda_{\bm{i}}\|\alpha^*_{\bm{i}}+\pg_{\bm{i}}\|_0 + \vrho
 \pg_{\bm{i}}(\alpha^*_{\bm{i}}- (Wu^*)_{\bm{i}})\Big).
\end{eqnarray*}
Splitting the summation in the last equation with respect to index
sets $\Gamma_0$ and $\Gamma_1$ and using \eqref{Eqn:Opt:2}, we
have
\begin{equation*}
p_\vrho(u^*+\ph,\alpha^*+\pg)\ge\frac12\|Au^*-f\|_2^2+\frac{\vrho}{2}\|
\alpha^* -Wu^* \|_2^2+
 \sum_{\bm{i}\in\Gamma_0}\Big(\lambda_{\bm{i}}\|\pg_{\bm{i}}\|_0 - \vrho
 \pg_{\bm{i}}(Wu^*)_{\bm{i}}\Big)+
 \sum_{\bm{i}\in\Gamma_1}\lambda_{\bm{i}}\|\alpha^*_{\bm{i}}+\pg_{\bm{i}}\|_0.
\end{equation*}
Notice that when $|\pg_{\bm{i}}|$ is small enough, we then have
$$\|\alpha^*_{\bm{i}}+\pg_{\bm{i}}\|_0=\|\alpha^*_{\bm{i}}\|_0\quad\mbox{for
}\bm{i}\in\Gamma_1.$$ Therefore, we have
\begin{eqnarray*}
p_\vrho(u^*+\ph,\alpha^*+\pg)&\ge&\frac12\|Au^*-f\|_2^2+\frac{\vrho}{2}\|
\alpha^* -Wu^* \|_2^2+
 \sum_{\bm{i}\in\Gamma_0}\Big(\lambda_{\bm{i}}\|\pg_{\bm{i}}\|_0 - \vrho
 \pg_{\bm{i}}(Wu^*)_{\bm{i}}\Big)+
 \sum_{\bm{i}\in\Gamma_1}\lambda_{\bm{i}}\|\alpha^*_{\bm{i}}\|_0\\
 &=&p_\vrho(u^*,\alpha^*)+\sum_{\bm{i}\in\Gamma_0}\Big(\lambda_{\bm{i}}\|\pg_{\bm{i}}\|_0 - \vrho
 \pg_{\bm{i}}(Wu^*)_{\bm{i}}\Big).
\end{eqnarray*}
We now show that, for $\bm{i}\in\Gamma_0$ and $\|\pg\|$ small
enough,
\begin{equation}\label{E:WantToShow}
\lambda_{\bm{i}}\|\pg_{\bm{i}}\|_0 -\vrho
 \pg_{\bm{i}}(Wu^*)_{\bm{i}}\ge0.
\end{equation}
For the indices $\bm{i}$ such that $\lambda_{\bm{i}}=0$, first
inequality of \eqref{Eqn:Opt:2} implies that $(Wu^*)_{\bm{i}}=0$
and hence \eqref{E:WantToShow} holds. Therefore, we only need to
consider indices $\bm{i}\in\Gamma_0$ such that
$\lambda_{\bm{i}}\ne0$. Then obviously as long as
$|\pg_{\bm{i}}|\le\frac{\lambda_{\bm{i}}}{\vrho|(Wu^*)_{\bm{i}}|}$,
we will have \eqref{E:WantToShow} hold. We now conclude that there
exists $\varepsilon>0$ such that for all $(\ph,\pg)$ satisfying
$\max(\|\ph\|_{\infty},\|\pg\|_{\infty})<\varepsilon$, we have
$p_{\vrho}(u^*+\ph,\alpha^*+\pg)\ge p_{\vrho}(u^*,\alpha^*)$.
\end{proof}
\vgap

We next show that under some suitable assumptions, the sequence
$\{(u^q,\alpha^q)\}$ generated by \eqref{Algorithm:BCD} converges
to a fixed point of the BCD method.
% Part of the following proof is motivated by \cite{BD08}.

\begin{theorem}
\label{converge} Assume that $\cY = \R^n$ and $A^\top A \succ 0$.
Let $\{(u^q,\alpha^q)\}$ be the sequence generated by the BCD method
described in \eqref{Algorithm:BCD}. Then, the sequence $\{(u^q,\alpha^q)\}$
is bounded. Furthermore, any cluster point of the sequence $\{(u^q,\alpha^q)\}$ is
a fixed point of \eqref{Algorithm:BCD}.
\end{theorem}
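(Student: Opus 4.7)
The plan is to first establish boundedness of $\{(u^q,\alpha^q)\}$ via coercivity, then show the asymptotic smallness $\|u^{q+1}-u^q\|_2\to 0$ from strong convexity of the $u$-subproblem, and finally exploit these to verify both fixed-point conditions at any cluster point. The iteration \eqref{Algorithm:BCD} gives $p_\vrho(u^{q+1},\alpha^q)\le p_\vrho(u^q,\alpha^q)$ from the $u$-step and $p_\vrho(u^{q+1},\alpha^{q+1})\le p_\vrho(u^{q+1},\alpha^q)$ from the $\alpha$-step, so $\{p_\vrho(u^q,\alpha^q)\}$ is non-increasing and bounded above by some $C<\infty$ determined by the initial $\alpha^0$. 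Then $\tfrac12\|Au^q-f\|_2^2\le C$ combined with $A^\top A\succ 0$ gives boundedness of $\{u^q\}$; since $W^\top W=I$ yields $\|Wu^q\|_2=\|u^q\|_2$, the bound $\tfrac{\vrho}{2}\|Wu^q-\alpha^q\|_2^2\le C$ then yields boundedness of $\{\alpha^q\}$.

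Next I would exploit strong convexity. Because $W^\top W=I$, the $u$-subproblem objective $h_q(u):=\tfrac12\|Au-f\|_2^2+\tfrac{\vrho}{2}\|Wu-\alpha^q\|_2^2$ is strongly convex with modulus $\mu:=\lambda_{\min}(A^\top A)+\vrho>0$. Combined with the optimality of $u^{q+1}$, this gives $p_\vrho(u^q,\alpha^q)-p_\vrho(u^{q+1},\alpha^q)=h_q(u^q)-h_q(u^{q+1})\ge\tfrac{\mu}{2}\|u^{q+1}-u^q\|_2^2$. Together with $p_\vrho(u^{q+1},\alpha^{q+1})\le p_\vrho(u^{q+1},\alpha^q)$, telescoping and using that $\{p_\vrho(u^q,\alpha^q)\}$ is bounded below by $0$ yields $\sum_q\|u^{q+1}-u^q\|_2^2<\infty$, hence $\|u^{q+1}-u^q\|_2\to 0$.

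Now let $(u^{q_j},\alpha^{q_j})\to(u^*,\alpha^*)$ along a subsequence, which exists by boundedness. The $u$-subproblem has the unique closed-form solution $u^{q+1}=T(\alpha^q)$, where $T(\alpha):=(A^\top A+\vrho W^\top W)^{-1}(A^\top f+\vrho W^\top\alpha)$ is a continuous affine map. Thus $u^{q_j+1}=T(\alpha^{q_j})\to T(\alpha^*)$; on the other hand, $\|u^{q_j+1}-u^{q_j}\|_2\to 0$ combined with $u^{q_j}\to u^*$ gives $u^{q_j+1}\to u^*$, so $u^*=T(\alpha^*)$. This is precisely the first fixed-point condition, i.e., $u^*\in\arg\min_{u\in\cY}\,p_\vrho(u,\alpha^*)$.

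The main obstacle is the second fixed-point condition $\alpha^*\in H_{\tilde\lambda}(Wu^*)$, because $H_{\tilde\lambda}$ is set-valued and discontinuous at the boundary $|(Wu^*)_{\bm{i}}|=\tilde\lambda_{\bm{i}}$. I would address this componentwise using the structural fact from \eqref{Eqn:1} that $\alpha^{q_j}_{\bm{i}}\in\{0,(Wu^{q_j})_{\bm{i}}\}$. When $|(Wu^*)_{\bm{i}}|<\tilde\lambda_{\bm{i}}$, for $j$ large $|(Wu^{q_j})_{\bm{i}}|<\tilde\lambda_{\bm{i}}$ forces $\alpha^{q_j}_{\bm{i}}=0$, so $\alpha^*_{\bm{i}}=0$; when $|(Wu^*)_{\bm{i}}|>\tilde\lambda_{\bm{i}}$, for $j$ large $\alpha^{q_j}_{\bm{i}}=(Wu^{q_j})_{\bm{i}}$, so $\alpha^*_{\bm{i}}=(Wu^*)_{\bm{i}}$; in both non-boundary cases the limit lies in $H_{\tilde\lambda_{\bm{i}}}((Wu^*)_{\bm{i}})$. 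In the boundary case, either $\alpha^{q_j}_{\bm{i}}=0$ infinitely often (forcing $\alpha^*_{\bm{i}}=0$) or $\alpha^{q_j}_{\bm{i}}=(Wu^{q_j})_{\bm{i}}$ for all large $j$ (forcing $\alpha^*_{\bm{i}}=(Wu^*)_{\bm{i}}$), and both values belong to $H_{\tilde\lambda_{\bm{i}}}((Wu^*)_{\bm{i}})=\{0,(Wu^*)_{\bm{i}}\}$. This verifies the second fixed-point condition and completes the proof.
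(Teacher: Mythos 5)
Your proof is correct, and while it follows the same overall skeleton as the paper's (boundedness, then an asymptotic regularity estimate, then passage to the limit through the set-valued hard-thresholding map), the two key technical steps are carried out quite differently. For boundedness, the paper does not use the monotone decrease of $p_\vrho$ at all: it composes the two block updates into a single iterative-hard-thresholding-type iteration $\alpha^{q+1}\in H_{\tilde\lambda}(\alpha^q+Wx-Q\alpha^q)$ and bounds $\|W^\top\alpha^q\|_2$ through a chain of inequalities involving $\|(I-P)W^\top\alpha^q\|_2$ and the a priori bound on $\|Wu^{q+1}-\alpha^{q+1}\|_2$ coming from the thresholding rule; your descent-plus-coercivity argument (using $A^\top A\succ 0$ and $\|Wu\|_2=\|u\|_2$) is shorter and more transparent. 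For asymptotic regularity, the paper invokes the Blumensath--Davies surrogate $F(\alpha,\beta)$ and $0\prec Q\preceq I$ to obtain $\sum_q\|W^\top(\alpha^{q+1}-\alpha^q)\|_2^2<\infty$, whereas you get $\sum_q\|u^{q+1}-u^q\|_2^2<\infty$ directly from the $(\lambda_{\min}(A^\top A)+\vrho)$-strong convexity of the $u$-subproblem; the two estimates are interchangeable here since $u^{q+1}-u^q=PW^\top(\alpha^q-\alpha^{q-1})$. Finally, at the cluster point the paper simply asserts that the limit passes through $H_{\tilde\lambda}$, which implicitly relies on the closedness of the graph of the set-valued map \eqref{D:HT}; your componentwise three-case analysis proves exactly that closedness and is the more careful treatment of the only genuinely delicate point, namely the boundary case $|(Wu^*)_{\bm{i}}|=\tilde\lambda_{\bm{i}}$. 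Both routes are valid: yours buys elementarity and rigor at the limit step, while the paper's buys the reinterpretation of the BCD iteration as iterative hard thresholding, which it then reuses for its boundedness estimate.
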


\begin{proof}
% Since $\cY=\R^n$ we first observe that the BCD algorithm
% \eqref{Algorithm:BCD} can be rewritten only in terms of variable
% $\alpha^q$ by using the optimality condition of the first
% subproblem of \eqref{Algorithm:BCD}. Then we have the following
% iterative hard shrinkage algorithm
In view of $\cY=\R^n$ and the optimality condition of the first
subproblem of \eqref{Algorithm:BCD}, one can see that
\beq \label{uq}
u^{q+1} = (A^\top A+\vrho I)^{-1}A^\top
f+\vrho (A^\top A+\vrho I)^{-1}W^\top\alpha^q.
\eeq
Let $x:=(A^\top A+\vrho I)^{-1}A^\top f$, $P:=\vrho (A^\top
A+\vrho I)^{-1}$, equation \eqref{uq} can be rewritten as
\beq \label{uqr}
u^{q+1} = x +PW^\top\alpha^q.
\eeq
Moreover, by the assumption $A^\top A \succ 0$, we have
$0\prec P\prec I$.

Using \eqref{uqr} and \eqref{D:HT}, we observe from the second subproblem of
\eqref{Algorithm:BCD} that
\begin{equation}\label{IHT:0}
\alpha^{q+1} \in H_{\tilde\lambda}(Wu^{q+1})=H_{\tilde\lambda}\left(Wx
+WPW^\top\alpha^q\right).
\end{equation}
Let $Q:=I- WPW^\top$, then \eqref{IHT:0} can be rewritten as
\begin{equation}\label{IHT}
\alpha^{q+1} \in H_{\tilde\lambda}\left(\alpha^q+Wx-Q\alpha^q\right).
\end{equation}
In addition, from $W^{\top}W =I $ we can easily show that
$0 \prec Q \preceq I$.

Let $F(\alpha,\beta):=\frac12\langle \alpha, Q \alpha\rangle -
\langle Wx,\alpha \rangle +
\sum_{\bm{i}}\bar\lambda_{\bm{i}}\|\alpha_{\bm{i}}\|_0 -\frac12\langle
\alpha-\beta,Q(\alpha-\beta) \rangle+\frac12\|\alpha-\beta\|_2^2$
where $\bar\lambda = \frac{\lambda}{\rho}$. Then we have
\begin{equation}\label{F} \Argmin_{\alpha}F(\alpha,\alpha^q) =
\Argmin_{\alpha} \frac12 \|\alpha-(\alpha^q+Wx-Q\alpha^q)\|_2^2 +\sum_{\bm{i}}\bar\lambda_{\bm{i}}\|\alpha_{\bm{i}}\|_0.
\end{equation}
In view of equation \eqref{IHT} and \eqref{F} and the definition of the
hard thresholding operator, we can easily observe that
$\alpha^{q+1} \in \Argmin_{\alpha}F(\alpha,\alpha^q)$. By following
similar arguments as in \cite[Lemma 1, Lemma D.1]{BD08}, we have
\begin{eqnarray*}
F(\alpha^{q+1},\alpha^{q+1})&\le&
F(\alpha^{q+1},\alpha^{q+1})+\frac12\|\alpha^{q+1}-\alpha^q\|_2^2-\frac12\langle
\alpha^{q+1}-\alpha^q, Q(\alpha^{q+1}-\alpha^q)\rangle\\
 &=&F(\alpha^{q+1},\alpha^q)\\
 &\le&F(\alpha^{q},\alpha^q),
\end{eqnarray*}
which leads to
$$\|\alpha^{q+1}-\alpha^q\|_2^2-\langle
\alpha^{q+1}-\alpha^q, Q(\alpha^{q+1}-\alpha^q)\rangle\le
2F(\alpha^{q},\alpha^{q})-2F(\alpha^{q+1},\alpha^{q+1}).$$
Since
$P\succ 0$, we have
\begin{eqnarray*}
\|W^\top(\alpha^{q+1}-\alpha^q)\|_2^2&\le&
\frac1{C_1}\langle
W^\top(\alpha^{q+1}-\alpha^q),PW^\top (\alpha^{q+1}-\alpha^q) \rangle \\
&=&\frac1{C_1} \langle \alpha^{q+1}-\alpha^q, (I-Q)(\alpha^{q+1}-\alpha^q) \rangle\\
&=&\frac1{C_1}\left( \|\alpha^{q+1}-\alpha^q\|_2^2-\langle
\alpha^{q+1}-\alpha^q, Q(\alpha^{q+1}-\alpha^q)\rangle \right) \\
&\le&\frac2{C_1}F(\alpha^{q},\alpha^{q})-\frac2{C_1}F(\alpha^{q+1},\alpha^{q+1})
\end{eqnarray*}
for some $C_1>0$. Telescoping on the above inequality and using
the fact that $ \sum_{\bm{i}}\lambda_{\bm{i}}\|\alpha_{\bm{i}}\|_0
\geq 0$, we have
\begin{eqnarray*}
\sum_{q=0}^N\|W^\top(\alpha^{q+1}-\alpha^q)\|_2^2&\le&
\frac2{C_1}F(\alpha^0,\alpha^0)-\frac2{C_1}F(\alpha^{N+1},\alpha^{N+1}) \nonumber \\
&\le& \frac2{C_1}\left(F(\alpha^0,\alpha^0) - (\frac12\langle \alpha^{N+1}, Q \alpha^{N+1}\rangle - \langle Wx,\alpha^{N+1} \rangle)\right) \nonumber \\
&\le& \frac2{C_1}\left(F(\alpha^0,\alpha^0) - K\right), \label{cauchy}
\end{eqnarray*}
where $K$ is the optimal value of $\min\limits_y \{\frac{1}{2}
\langle y, Qy\rangle - \langle Wx, y \rangle\}$. Since $Q \succ 0$,
we have $K > -\infty$. Then the last inequality implies that
$\lim_{q\to \infty}\|W^\top(\alpha^{q+1}-\alpha^q) \|_2\to0$.

By using \eqref{uqr} and  $P \prec I$, we see that
\begin{eqnarray*}
\|u^{q+1} - W^\top\alpha^{q+1}\|_2 &=& \|x + PW^\top\alpha^q - W^\top\alpha^{q+1} +W^\top\alpha^q - W^\top\alpha^q \|_2 \\
&=& \|x+(P-I)W^\top\alpha^q - W^\top(\alpha^{q+1}- \alpha^q)\|_2 \\
&\geq & \|x+(P-I)W^\top\alpha^q\|_2 - \|W^\top(\alpha^{q+1}- \alpha^q)\|_2  \\
&= & \|(I-P)W^\top\alpha^q-x\|_2 - \|W^\top(\alpha^{q+1}- \alpha^q)\|_2  \\
&\geq & \|(I-P)W^\top\alpha^q\|_2-\|x\|_2 - \|W^\top(\alpha^{q+1}- \alpha^q)\|_2  \\
&\geq & C_2\|W^\top\alpha^q\|_2-\|x\|_2 - \|W^\top(\alpha^{q+1}- \alpha^q)\|_2
\end{eqnarray*}
for some $C_2>0$. Then by rearranging the above inequality and using the fact
$W^\top W=I$, we have
\begin{eqnarray*}
\|W^\top\alpha^q\|_2 &\leq& \frac1{C_2}(\|u^{q+1} - W^\top\alpha^{q+1}\|_2 + \|x\|_2+\|W^\top(\alpha^{q+1}- \alpha^q)\|_2) \\
&=& \frac1{C_2}(\|W^\top(Wu^{q+1} - \alpha^{q+1})\|_2 + \|x\|_2+\|W^\top(\alpha^{q+1}- \alpha^q)\|_2) \\
&\leq & \frac1{C_2}(\|Wu^{q+1} - \alpha^{q+1}\|_2 + \|x\|_2+\|W^\top(\alpha^{q+1}- \alpha^q)\|_2).
\end{eqnarray*}
By the definition of the hard thresholding operator and \eqref{IHT:0}, we can easily
see that $\|Wu^{q+1} - \alpha^{q+1}\|_2$ is bounded. In addition, notice that $\|x\|_2$ is
a constant and $\lim_{q\to \infty}\|W^\top(\alpha^{q+1}-\alpha^q) \|_2\to0$. Thus
$\|W^\top\alpha^q\|_2$ is also bounded. By using \eqref{uqr} and the definition of the hard
thresholding operator again, we can immediately see that both $\{u^{q+1}\}$ and $\{\alpha^{q+1}\}$ are
bounded as well.

%Let $\Gamma_0^q:=\{\bm{i}:\alpha^q_{\bm{i}}=0, \tilde\lambda_{\bm{i}}>0\}$ and
%$\Gamma_1^q=:\{\bm{i}:\alpha^q_{\bm{i}}\ne0, \tilde\lambda_{\bm{i}}>0\} \cup \{\bm{i}: \tilde\lambda_{\bm{i}}=0\}$.
%One can observe from \eqref{IHT:0}
%that $|\alpha^q_{\bm{i}}| \geq \tilde \lambda_{\bm{i}} >0$ whenever $i\in\Gamma_1^q$ and $\tilde\lambda_{\bm{i}}>0$, which together
%with $\lim_{q\to \infty}\|\alpha^{q+1}-\alpha^q \|_2\to0$ implies that when $q$ is large enough, the sets
%$\Gamma_0^q$ and $\Gamma_1^q$ do not change
%with respect to $q$. Then the convergence of $\alpha^q$ follows
%from the same arguments as \cite[Theorem 3]{BD08}, and hence $u^q$
%converges.

Suppose that $(u^*,\alpha^*)$ is a cluster point of the sequence $\{(u^q,\alpha^q)\}$. Therefore,
there exists a subsequence $\{(u^{q_l},\alpha^{q_l})\}_l$ converging to $(u^*,\alpha^*)$.
Using \eqref{IHT:0} and the definition of the hard thresholding operator, we can observe that
\[
\alpha^* = \lim_{l\to \infty} \alpha^{q_{l+1}} \in
H_{\tilde\lambda}(\lim_{l\to \infty} Wu^{q_{l+1}}) =  H_{\tilde\lambda}(Wu^*).
\]
In addition, it follows from \eqref{uq} that
\[
u^* = (A^\top A+\vrho I)^{-1}A^\top
f+\vrho (A^\top A+\vrho I)^{-1}W^\top\alpha^*.
\]
In view of the above two relations, one can immediately conclude
that $\{(u^*,\alpha^*)\}$ is a fixed point of
\eqref{Algorithm:BCD}.
\end{proof}
\vgap In the view of Theorems \ref{fixed}, \ref{converge} and
under some suitable assumptions, we can easily observe the following
convergence of the BCD method.

\begin{theorem}
Assume that $\cY = \R^n$ and $A^\top A \succ 0$. Then,
the sequence $\{(u^q,\alpha^q)\}$ generated by the BCD method
has at least one cluster point. Furthermore, any cluster point of
the sequence $\{(u^q,\alpha^q)\}$ is a local minimizer of
\eqref{inner-prob}.
\end{theorem}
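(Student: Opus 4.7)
The plan is to show that this final theorem is essentially an immediate corollary of Theorems \ref{fixed} and \ref{converge}, requiring only a single additional elementary observation (Bolzano--Weierstrass). There is no substantial obstacle here; the real work was done in the two preceding theorems, and what remains is to chain them together.

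First I would invoke Theorem \ref{converge}: under the hypotheses $\cY=\R^n$ and $A^\top A\succ 0$, the sequence $\{(u^q,\alpha^q)\}$ lies in a bounded subset of $\R^n\times\R^m$. Since $\R^n\times\R^m$ is finite-dimensional, the Bolzano--Weierstrass theorem guarantees that $\{(u^q,\alpha^q)\}$ has at least one convergent subsequence, hence at least one cluster point. This handles the first assertion of the theorem.

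Next, let $(u^*,\alpha^*)$ be any cluster point of $\{(u^q,\alpha^q)\}$. The second part of Theorem \ref{converge} asserts that every such cluster point is a fixed point of the BCD iteration \eqref{Algorithm:BCD}. Applying Theorem \ref{fixed} to the fixed point $(u^*,\alpha^*)$, we conclude that $(u^*,\alpha^*)$ is a local minimizer of $p_\vrho(u,\alpha)$, i.e., a local minimizer of \eqref{inner-prob}. This completes the proof of the second assertion.

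The only modest care needed is to make sure the statements of Theorems \ref{fixed} and \ref{converge} are applied under the same setting as the current theorem (same penalty parameter $\vrho$, same index $k$ dropped from the inner subproblem, $D=I$ as normalized in the earlier subsection). Since both earlier theorems were stated precisely in this reduced setting, the application is direct, and no further computation is required.
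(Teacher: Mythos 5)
Your proposal is correct and matches the paper's intent exactly: the paper itself presents this theorem as an immediate consequence of Theorems \ref{fixed} and \ref{converge}, with the boundedness from Theorem \ref{converge} supplying a cluster point via Bolzano--Weierstrass, Theorem \ref{converge} identifying every cluster point as a fixed point, and Theorem \ref{fixed} upgrading fixed points to local minimizers of \eqref{inner-prob}. No further argument is needed.
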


For the PD method itself, similar arguments as in the proof of
\cite[Theorem 3.2]{LuZhangTech2010} will lead to that every
accumulation point of the sequence $\{(u^k,\alpha^k)\}$ is a
feasible point of \eqref{l0a}. Although it is not clear whether
the accumulation point is a local minimizer of \eqref{l0a}, our
numerical results show that the solutions obtained by the PD
method are superior than those obtained by the balanced approach
and the analysis based approach.

\section{Numerical results}
\label{results}

In this section, we conduct numerical experiments to test the
performance of the PD method for problem \eqref{l0a} presented in
Section \ref{method} and compare the results with the balanced
approach \eqref{l1b} and the analysis based approach \eqref{l1a}.
We use the accelerated proximal gradient (APG) algorithm
\cite{shenAPG2009} (see also \cite{beck2009fast}) to solve the
balanced approach; and we use the split Bregman algorithm
\cite{GoldO,cai2009split} to solve the analysis based approach.

For APG algorithm that solves balanced approach \eqref{l1b}, we
shall adopt the following stopping criteria:
\[
 \min \left\{ \frac{\|\alpha^k-\alpha^{k-1}\|_2} {\max\{1,\|\alpha^k\|_2\}},
\frac{\|AW^\top\alpha^k-f\|_D}{\|f\|_2} \right\} \leq \eps_P.  % balance approach
\]
For split Bregman algorithm that solves the analysis based
approach \eqref{l1a}, we shall use the following stopping
criteria:
\[
 \frac{\|Wu^{k+1}-\alpha^{k+1}\|_2}{\|f\|_2} \leq \eps_S.
\]

Throughout this section, the codes of all the algorithms are written in
MATLAB and all computations below are performed on a workstation with Intel
Xeon E5410 CPU (2.33GHz) and 8GB RAM running Red Hat Enterprise Linux (kernel
2.6.18). If not specified, the piecewise linear B-spline framelets
constructed by \cite{ron1997affine} are used in all the numerical
experiments. We also take $D=I$ for all three methods for simplicity. For the
PD method, we choose $\eps_I=10^{-4}$ and $\eps_O=10^{-3}$ and set
$\alpha^{0,0}$, $\alpha^{\feas}$ and $u^{\feas}$ to be zero vectors. In
addition, we choose\cite[Algorithm 2.2]{birgin2000nonmonotone} and set
$M=20$, $\eps_D=5\times 10^{-5}$ and $\eps_F= 10^{-4}$ (if necessary) for the
projected gradient method applied to one of subproblems arising in the BCD method
$(\mbox{i.e., step 1a) in the PD method})$.

\subsection{Experiments on CT Image Reconstruction}
\label{CT}

In this subsection, we apply the PD method stated in Section
\ref{method} to solve problem \eqref{l0a} on CT images and compare
the results with the balanced approach \eqref{l1b} and the
analysis based approach \eqref{l1a}. The matrix $A$ in
\eqref{model} is taken to be a projection matrix based on fan-beam
scanning geometry using Siddon's algorithm \cite{siddon1985fast},
and $\eta$ is generated from a zero mean Gaussian distribution
with variance $\sigma=0.01\|f\|_\infty$. In addition, we pick
level of framelet decomposition to be 4 for the best quality of
the reconstructed images. For balanced approach, we set $\kappa=2$
and take $\eps_P= 1.5\times 10^{-2}$ for the stopping criteria of
the APG algorithm. We set $\eps_S= 10^{-5}$ for the stopping
criteria of the split Bregman algorithm when solving the analysis
based approach. Moreover, we take $\cY=\{x \in \R^n:x_i \geq 0 \
\forall i=1,\ldots,n\}$ for model \eqref{l0a}, and take
$\delta=10$ and $\vrho_0=10$ for the PD method. To measure quality
of the restored image, we use the PSNR value defined by
\[
 \mbox{PSNR}:=-20\log_{10}\frac{\|u-\tu\|_2}{n},
\]
where $u$ and $\tu$ are the original and restored images
respectively, and $n$ is total number of pixels in $u$.

Table \ref{table1} summarizes the results of all three models when applying
to the CT image restoration problem and the corresponding images and their
zoom-in views are shown in Figure \ref{image1} and Figure \ref{image12}.  In
Table \ref{table1}, the CPU time (in seconds) and PSNR values of all three
methods are given in the first and second row, respectively. In order to
fairly compare the results, we have tuned the parameter $\lambda$ to achieve
the best quality of the restoration images for each individual method. We
observe that based on the PSNR values listed in Table \ref{table1} the
analysis based approach and the PD method obviously achieve better
restoration results than the balanced approach. Nevertheless, the APG
algorithm for the balanced approach is the fastest algorithm in this
experiment. In addition, the PD method is faster and achieves larger PSNR
than the split Bergman algorithm for the analysis based approach. Moreover,
we can observe from Figure \ref{image12} that the edges are recovered better
by the PD method and the balanced approach.

\begin{table}[t!]
\caption{Comparisons: CT image reconstruction} \centering
\label{table1}
\begin{tabular}{|l||c |c |c|}
\hline
\multicolumn{1}{|l||}{} &
\multicolumn{1}{c|}{Balanced approach} & \multicolumn{1}{c|}{Analysis based approach} & \multicolumn{1}{c|}{PD method}
\\
\hline
Time & 56.0  & 204.8 & 147.6 \\
PSNR & 56.06 & 59.90 & 60.22     \\
\hline
\end{tabular}
\\
\end{table}

\begin{figure}[htp]
\centering
    \includegraphics[width=1.5in]{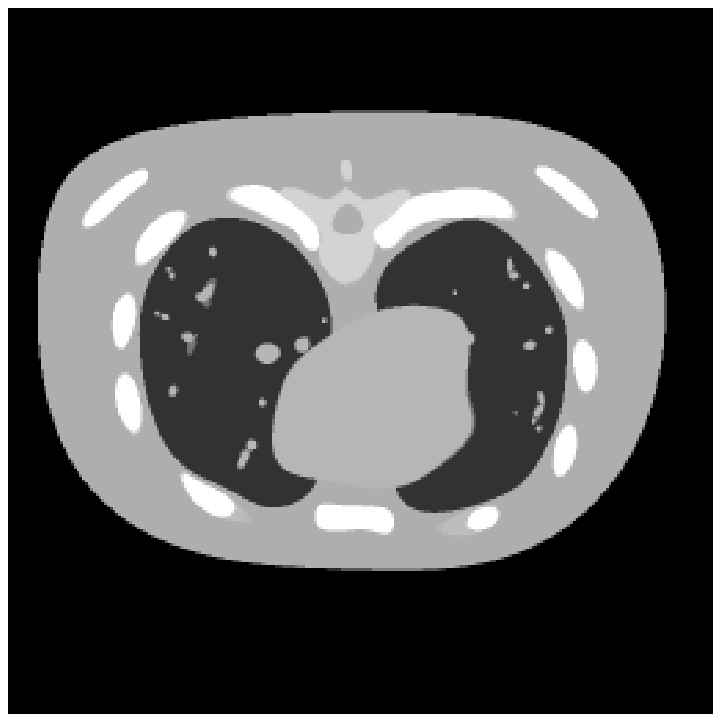}
    \includegraphics[width=1.5in]{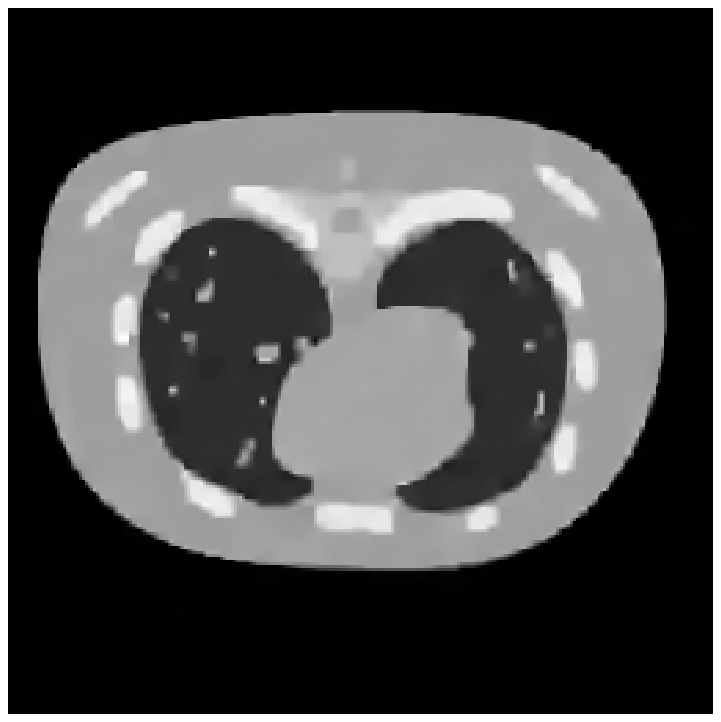}
    \includegraphics[width=1.5in]{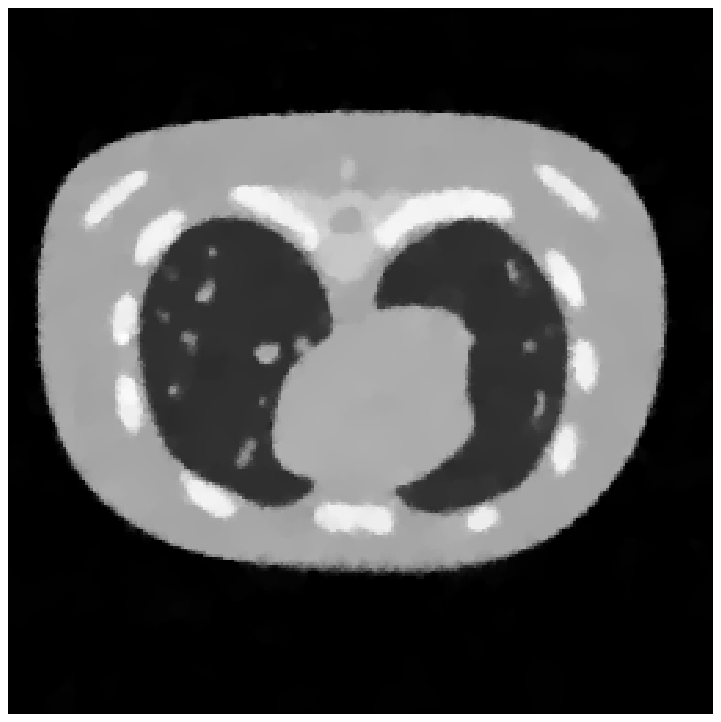}
    \includegraphics[width=1.5in]{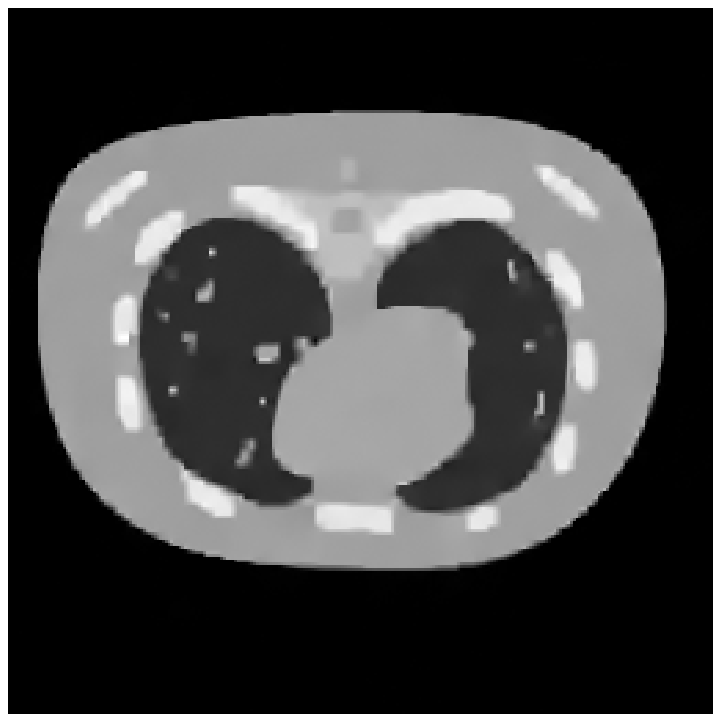}
\caption{CT image reconstruction. Images from left to right are:
original CT image, reconstructed image by balanced approach,
reconstructed image by analysis based approach and reconstructed
image by PD method.}\label{image1}
\end{figure}
\begin{figure}[htp]
\centering
    \includegraphics[width=1.5in]{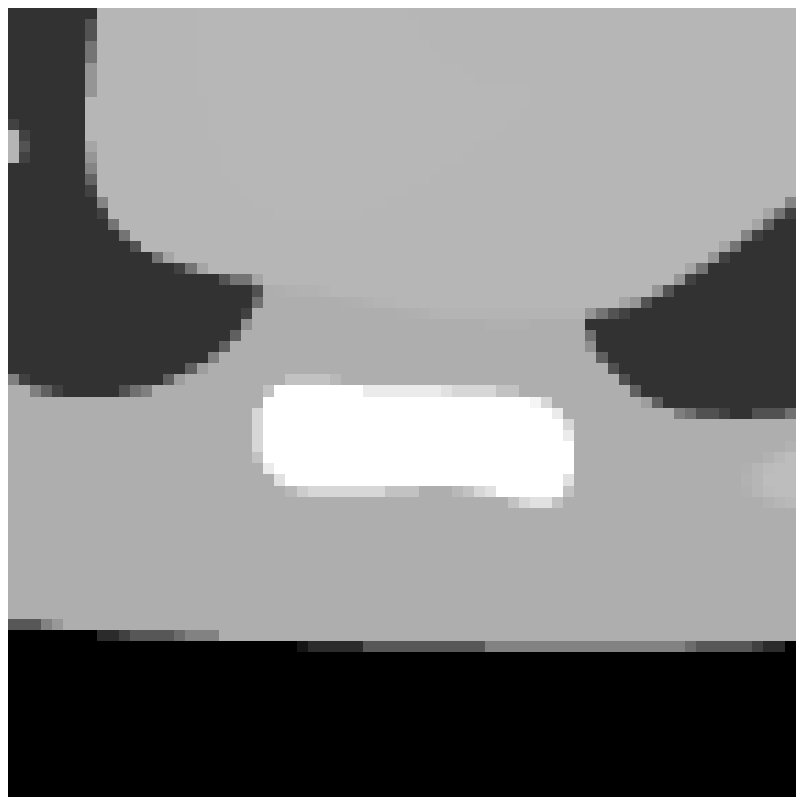}
    \includegraphics[width=1.5in]{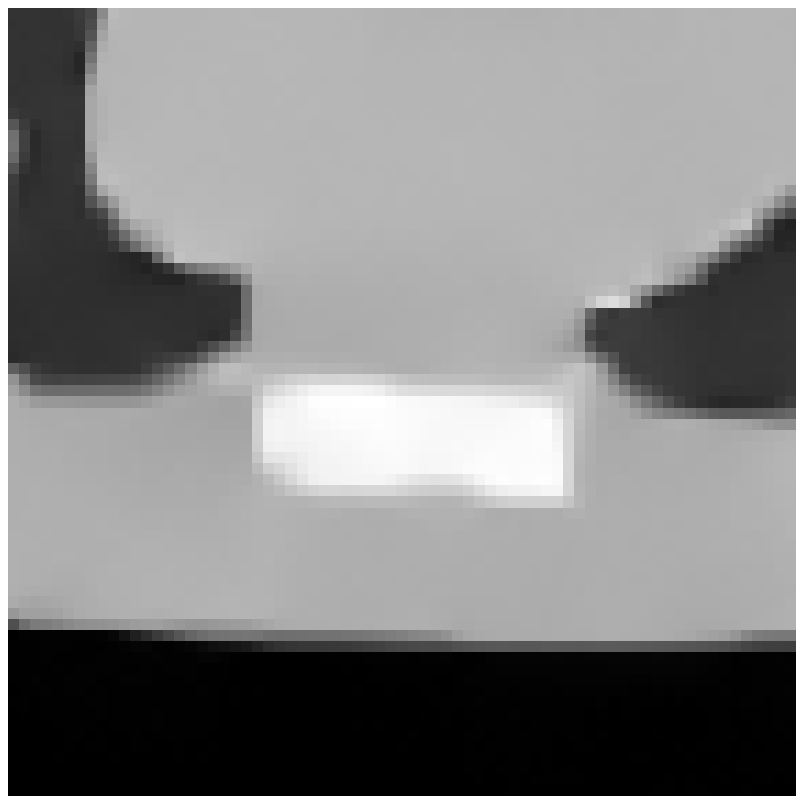}
    \includegraphics[width=1.5in]{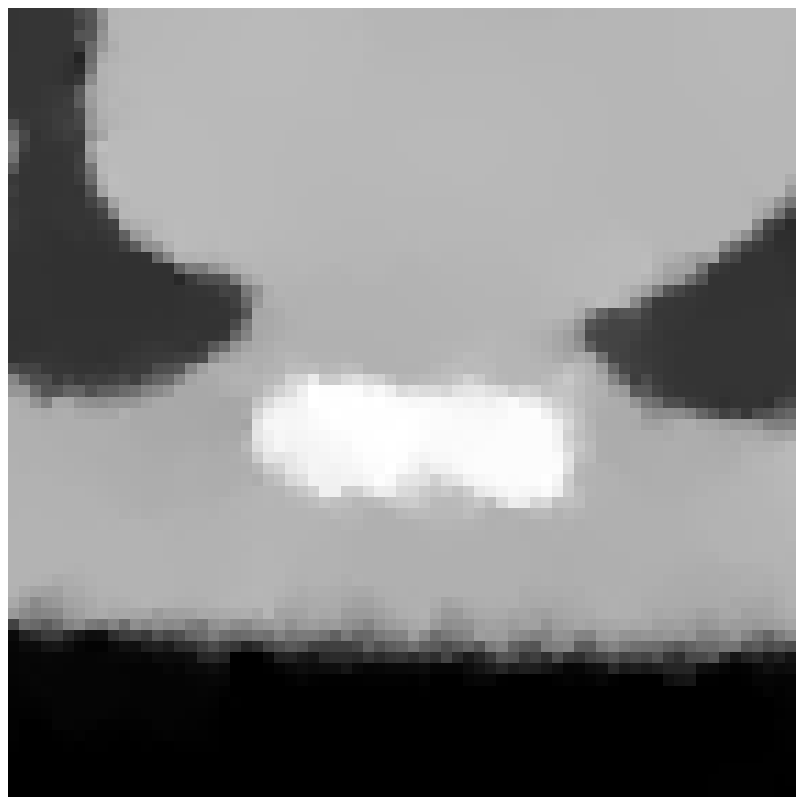}
    \includegraphics[width=1.5in]{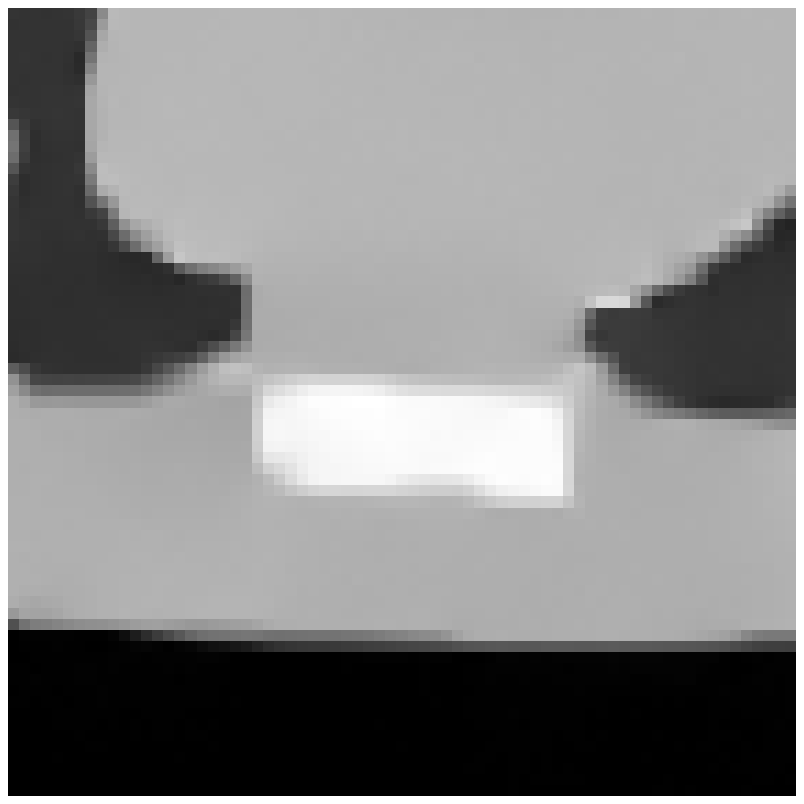}
\caption{Zoom-in views of the CT image reconstruction. Images from left to right are:
original CT image, reconstructed image by balanced approach,
reconstructed image by analysis based approach and reconstructed
image by PD method.}\label{image12}
\end{figure}

\subsection{Experiments on image deconvolution}
\label{image}

In this subsection, we apply the PD method stated in Section
\ref{method} to solve problem \eqref{l0a} on image deblurring
problems and compare the results with the balanced approach
\eqref{l1b} and the analysis based approach \eqref{l1a}. The
matrix $A$ in \eqref{l0a} is taken to be a convolution matrix with
corresponding kernel a Gaussian function (generated in MATLAB by
``fspecial(`gaussian',9,1.5);'') and $\eta$ is generated from a
zero mean Gaussian distribution with variance $\sigma$. If not
specified, we choose $\sigma=3$ in our experiments. In
addition, we pick level of framelet decomposition to be 4 for the
best quality of the reconstructed images. We set $\kappa=1$ for
balanced approach and choose both $\eps_P$ and $\eps_S$ to be
$10^{-4}$ for the stopping criteria of both APG algorithm and the
split Bregman algorithm. Moreover, we set $\cY=\{x \in \R^n:0\leq
x_i \leq 255 \ \forall i=1,\ldots, n\}$ for model \eqref{l0a}, and
take $\delta=10$ and $\vrho_0=10^ {-3}$ for the PD method. To
measure quality of restored image, we use the PSNR value defined
by
\[
 \mbox{PSNR}:=-20\log_{10}\frac{\|u-\tu\|_2}{255n}.
\]

We first test all three methods on twelve different images by using piecewise
linear wavelet and summarize the results in Table \ref{table2}. The names and
sizes of images are listed in the first two columns. The CPU time (in
seconds) and PSNR values of all three methods are given in the rest six
columns. In addition, the zoom-in views of original images, observed images and
recovered images are shown in Figure \ref{image4}-\ref{image5}. In order to fairly compare the
results, we have tuned the parameter $\lambda$ to achieve the best quality of
the restoration images for each individual method and each given image.

We first observe that in Table \ref{table2}, the PSNR values obtained by the
PD method are generally better than those obtained by other two approaches.
Although for some of the images (i.e. ``Downhill", ``Bridge", ``Duck" and
``Barbara"), the PSNR values obtained by the PD methods are comparable to
those of balanced and analysis based approaches, the quality of the restored
images can not only be judged by their PSNR values. Indeed, the zoom-in views
of the recovered images in Figure \ref{image4} and Figure \ref{image5} show
that for all tested images, the PD method produces visually superior results
than the other two approaches in terms of both sharpness of edges and
smoothness of regions away from edges. Takeing the image ``Barbara" as an
example, the PSNR value of the PD method is only slightly greater than that
obtained by the other two approaches. However, the zoom-in views of
``Barbara'' in Figure \ref{image5} show that the face of Barbara and the
textures on her scarf are better recovered by the PD method than the other
two approaches. This confirms the observation that penalizing $\ell_0$
``norm" of $Wu$ should provide good balance between sharpness of features and
smoothness of the reconstructed images. We finally note that the PD method is
slower than other two approaches in these experiments but the processing time
of the PD method is still acceptable.

We next compare all three methods on ``portrait I'' image by using three different
tight wavelet frame systems, i.e., Haar framelets, piecewise linear framelets
and piecewise cubic framelets constructed by \cite{ron1997affine}. We
summarize the results in Table \ref{table3}. The names of three wavelets are
listed in the first column. The CPU time (in seconds) and PSNR values of all
three methods are given in the rest six columns. In Table \ref{table3}, we can see
that the quality of the restored images by using the piecewise linear framelets and
the piecewise cubic framelets is better than that by using the Haar framelets. In addition,
all three methods are generally faster when using Haar framelets and slower when using
piecewise cubic framelets. Overall, all three approaches when using the piecewise linear
have balanced performance in terms of time and quality (i.e., the PSNR value).
Finally, we observe that  the PD method consistently achieves the best quality of
restored images among all the approaches for all three different tight wavelet
frame systems.

Finally, we test how different noise levels affect the restored images
obtained from all the three methods. We choose three different noise levels
(i.e., $\sigma = 3,5,7$) for image ``Portrait I'', and test all the three
methods by using piecewise linear framelets. We summarize the results in
Table \ref{table4}. The variances of noises are listed in the first column.
The CPU time (in seconds) and PSNR values of all three methods are given in
the rest six columns. We observe that the qualities of the restored images by
all three methods degrade when the noise level increases. Nevertheless, the
PD method still outperforms other two methods.

\begin{figure}[t!]
\centering
\subfigure{\includegraphics[width=31mm]{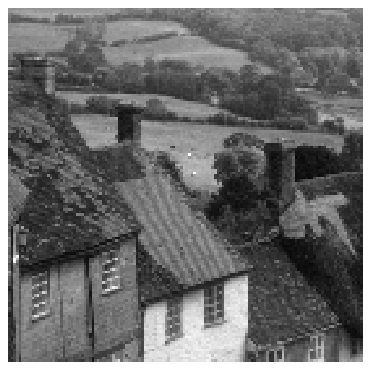}}
\subfigure{\includegraphics[width=31mm]{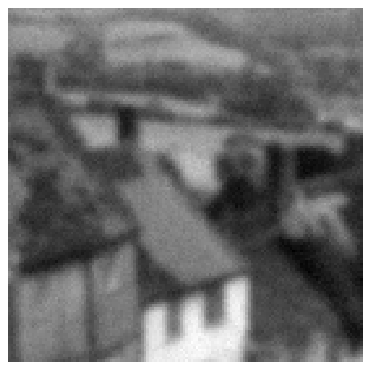}}
\subfigure{\includegraphics[width=31mm]{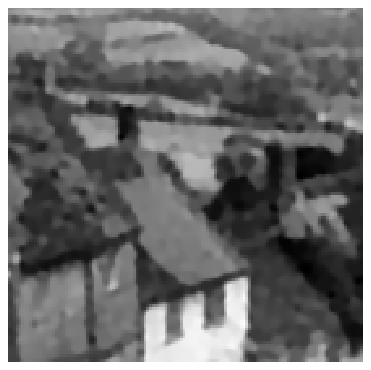}}
\subfigure{\includegraphics[width=31mm]{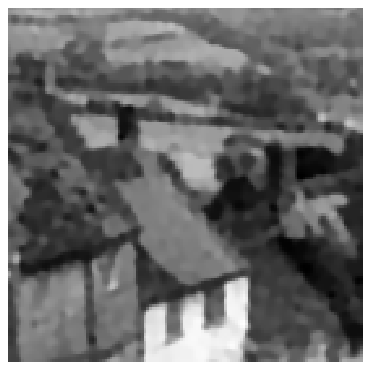}}
\subfigure{\includegraphics[width=31mm]{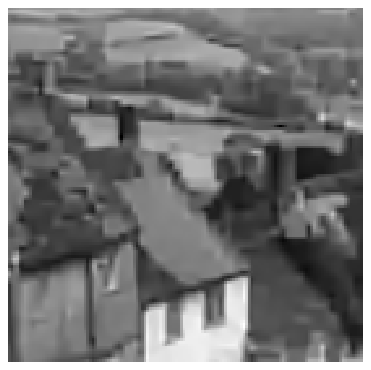}}
\\
\subfigure{\includegraphics[width=31mm]{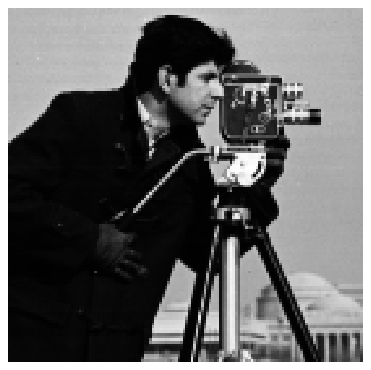}}
\subfigure{\includegraphics[width=31mm]{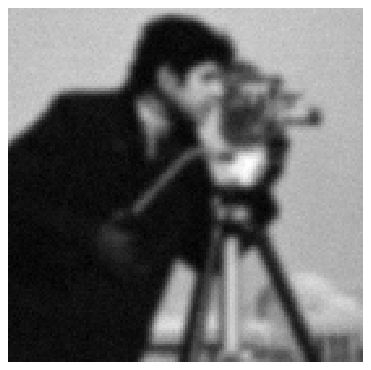}}
\subfigure{\includegraphics[width=31mm]{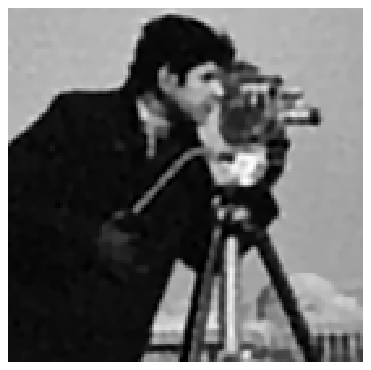}}
\subfigure{\includegraphics[width=31mm]{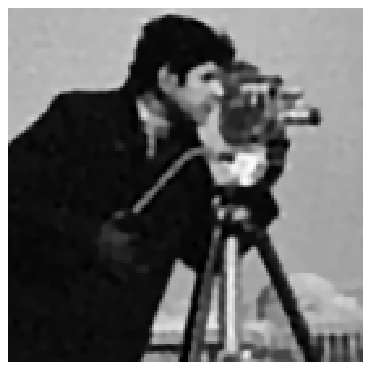}}
\subfigure{\includegraphics[width=31mm]{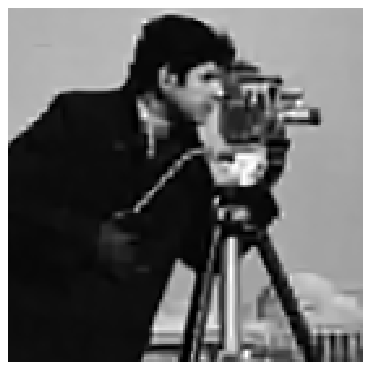}}
\\
\subfigure{\includegraphics[width=31mm]{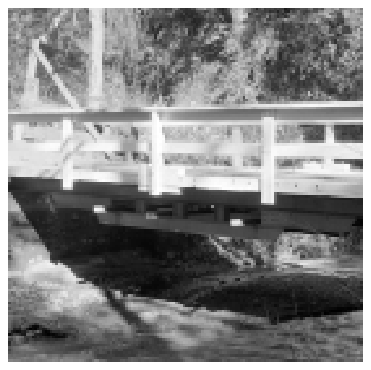}}
\subfigure{\includegraphics[width=31mm]{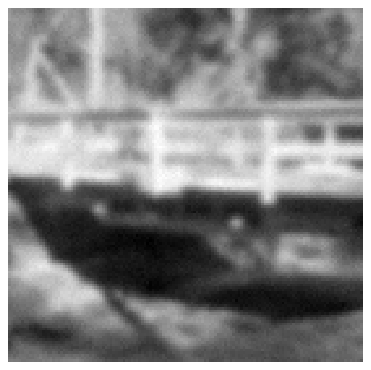}}
\subfigure{\includegraphics[width=31mm]{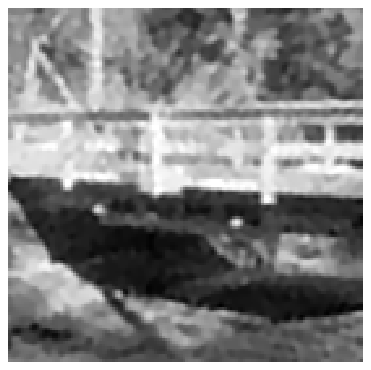}}
\subfigure{\includegraphics[width=31mm]{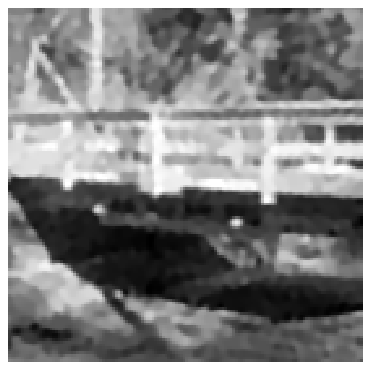}}
\subfigure{\includegraphics[width=31mm]{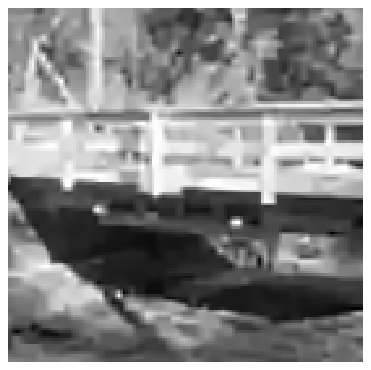}}
\\
\subfigure{\includegraphics[width=31mm]{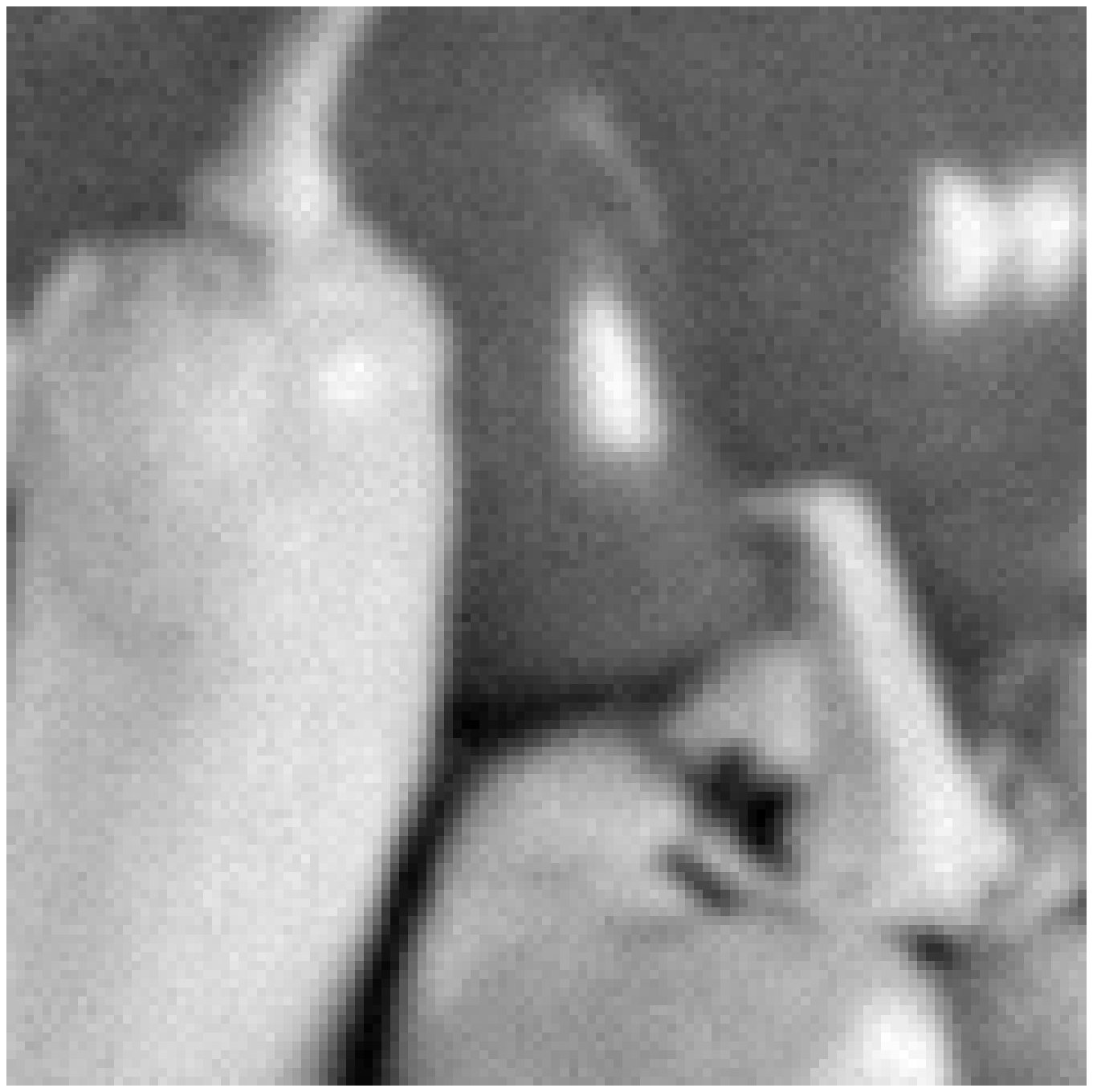}}
\subfigure{\includegraphics[width=31mm]{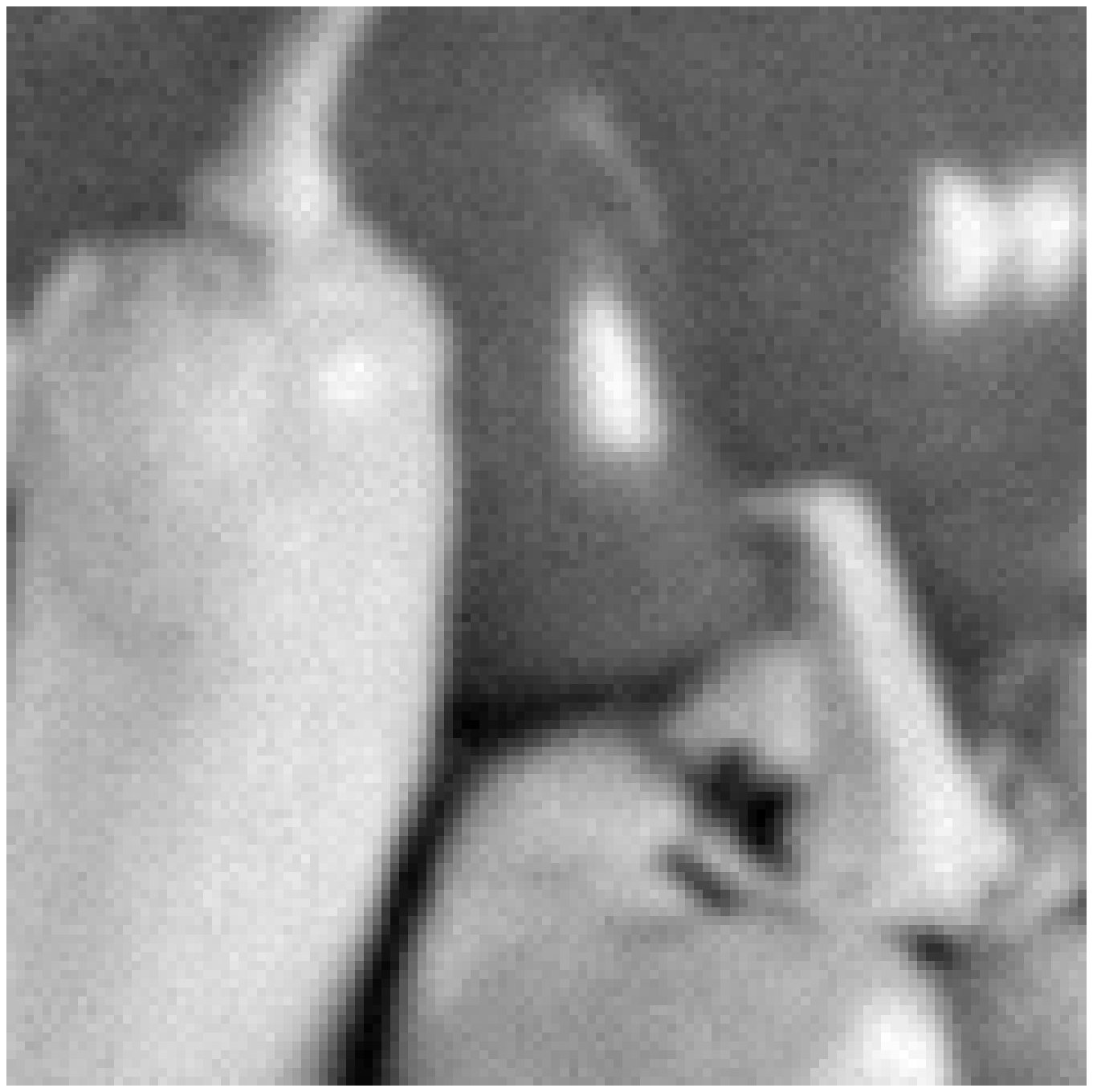}}
\subfigure{\includegraphics[width=31mm]{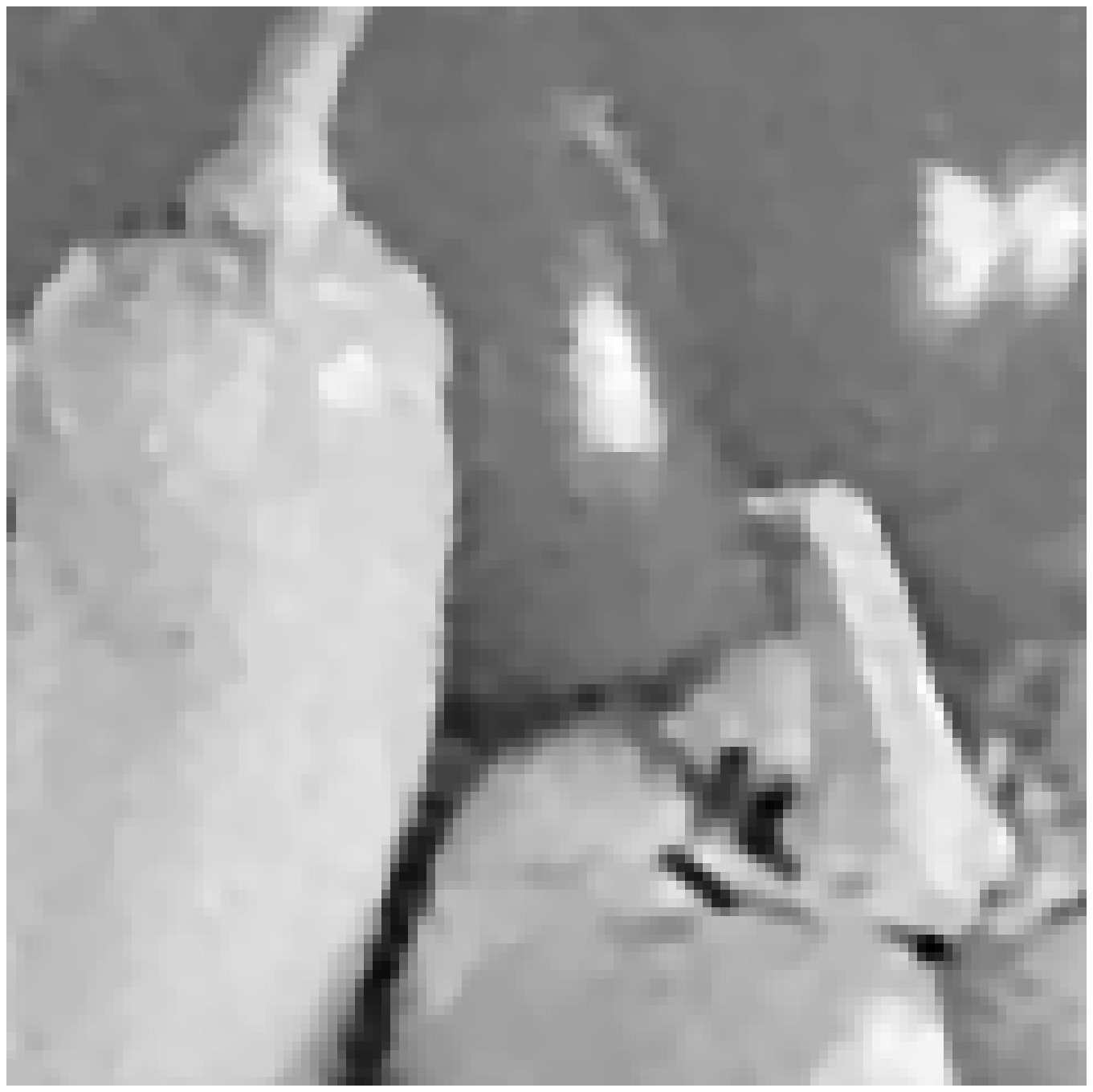}}
\subfigure{\includegraphics[width=31mm]{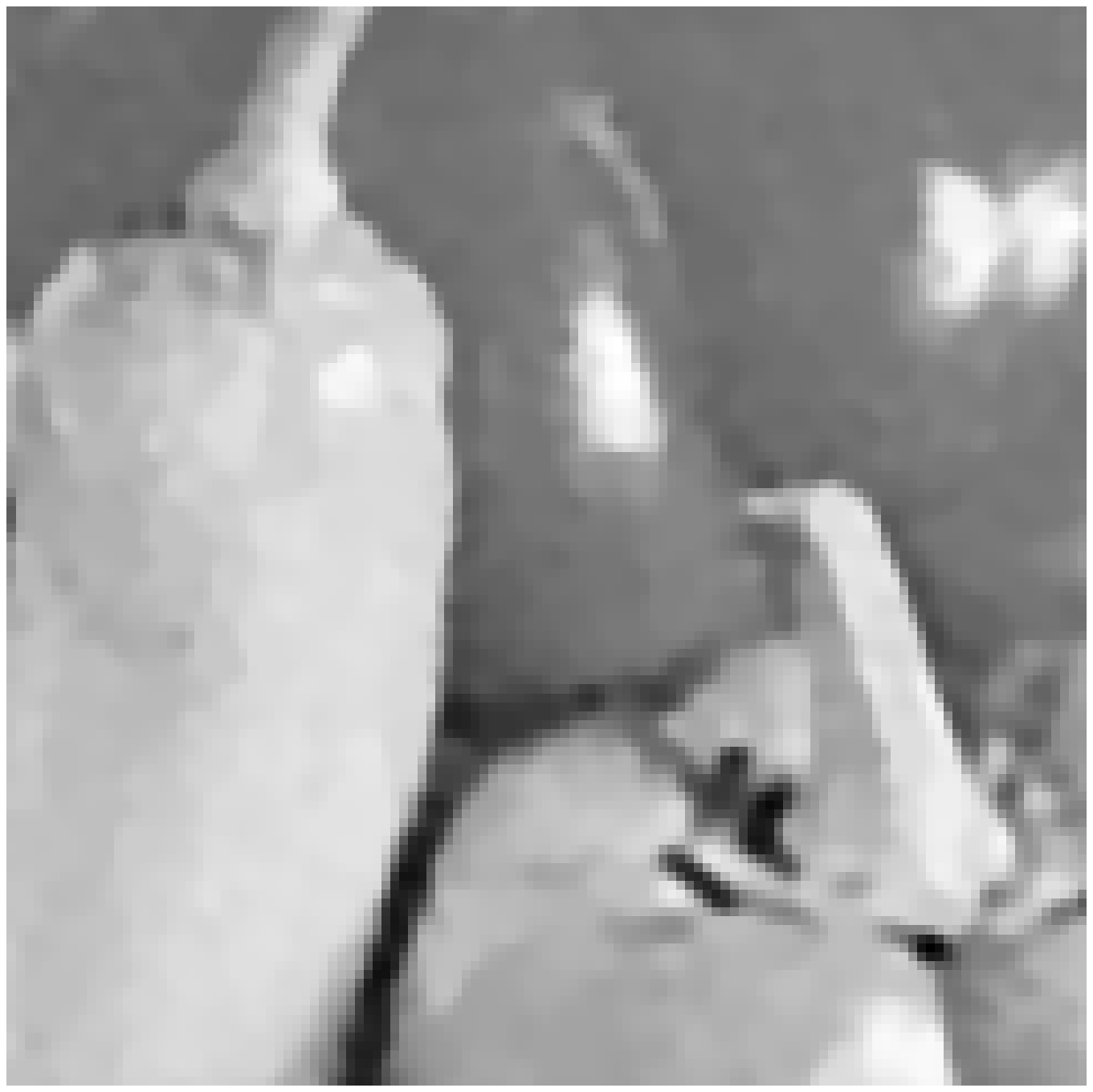}}
\subfigure{\includegraphics[width=31mm]{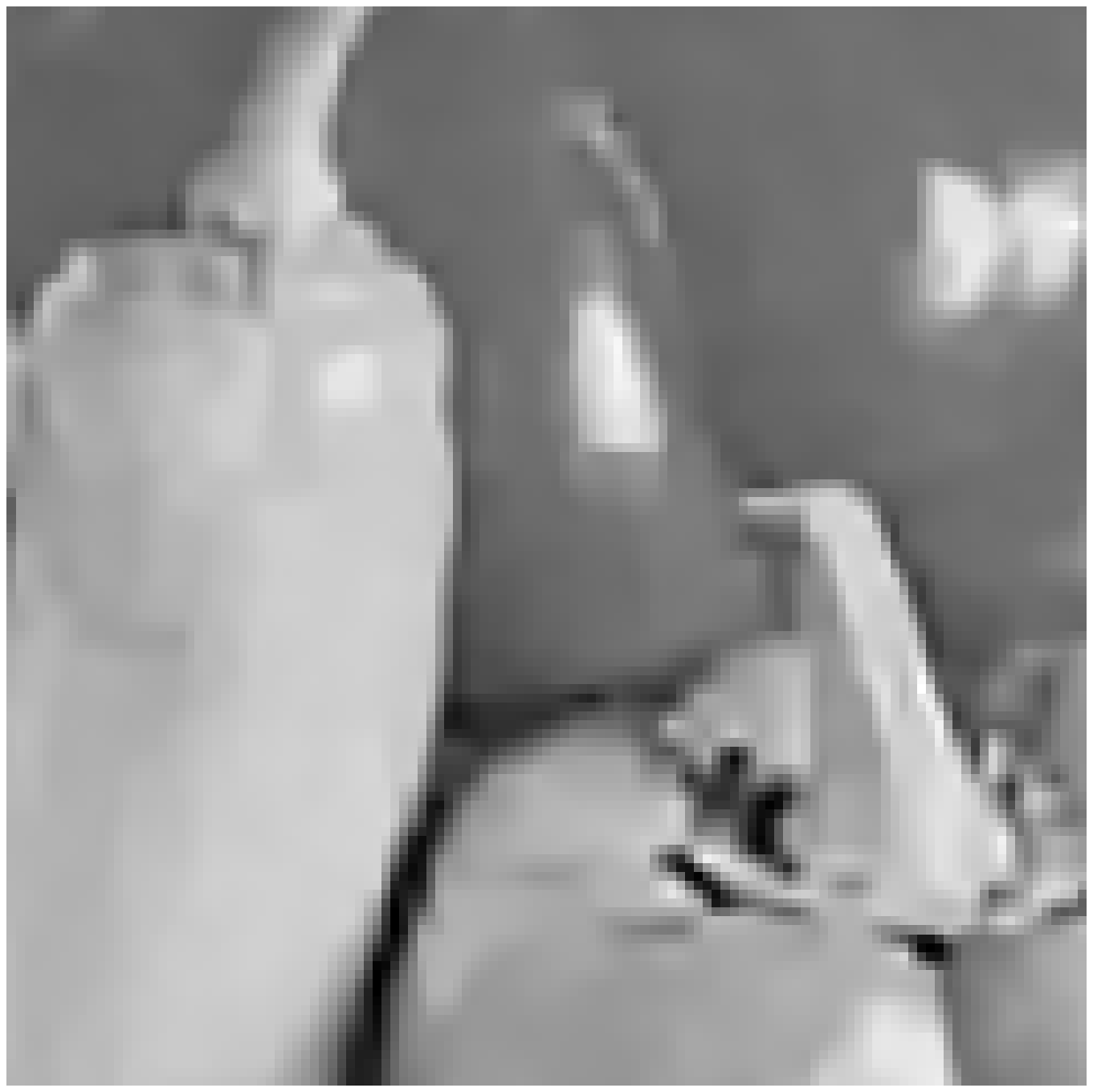}}
\\
\subfigure{\includegraphics[width=31mm]{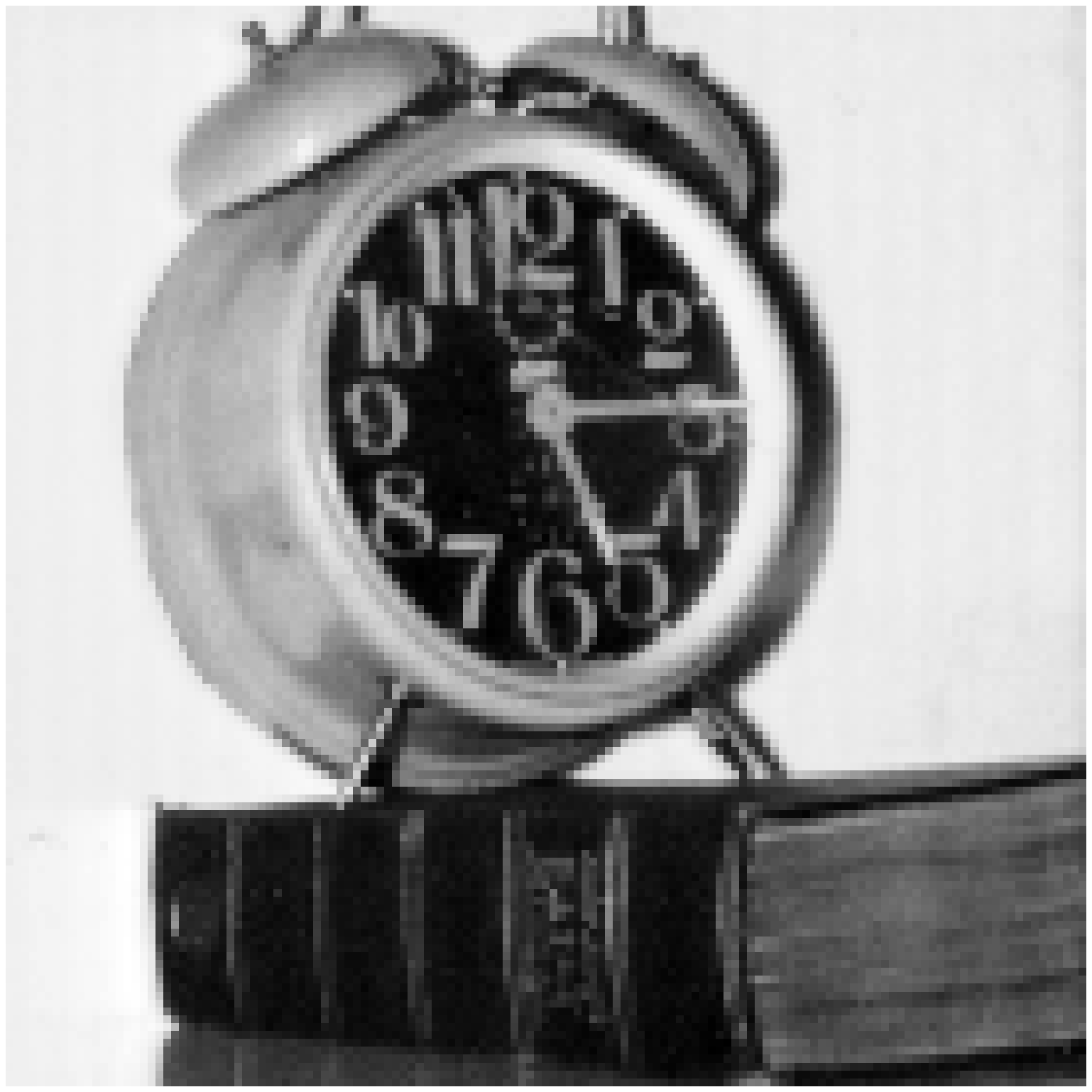}}
\subfigure{\includegraphics[width=31mm]{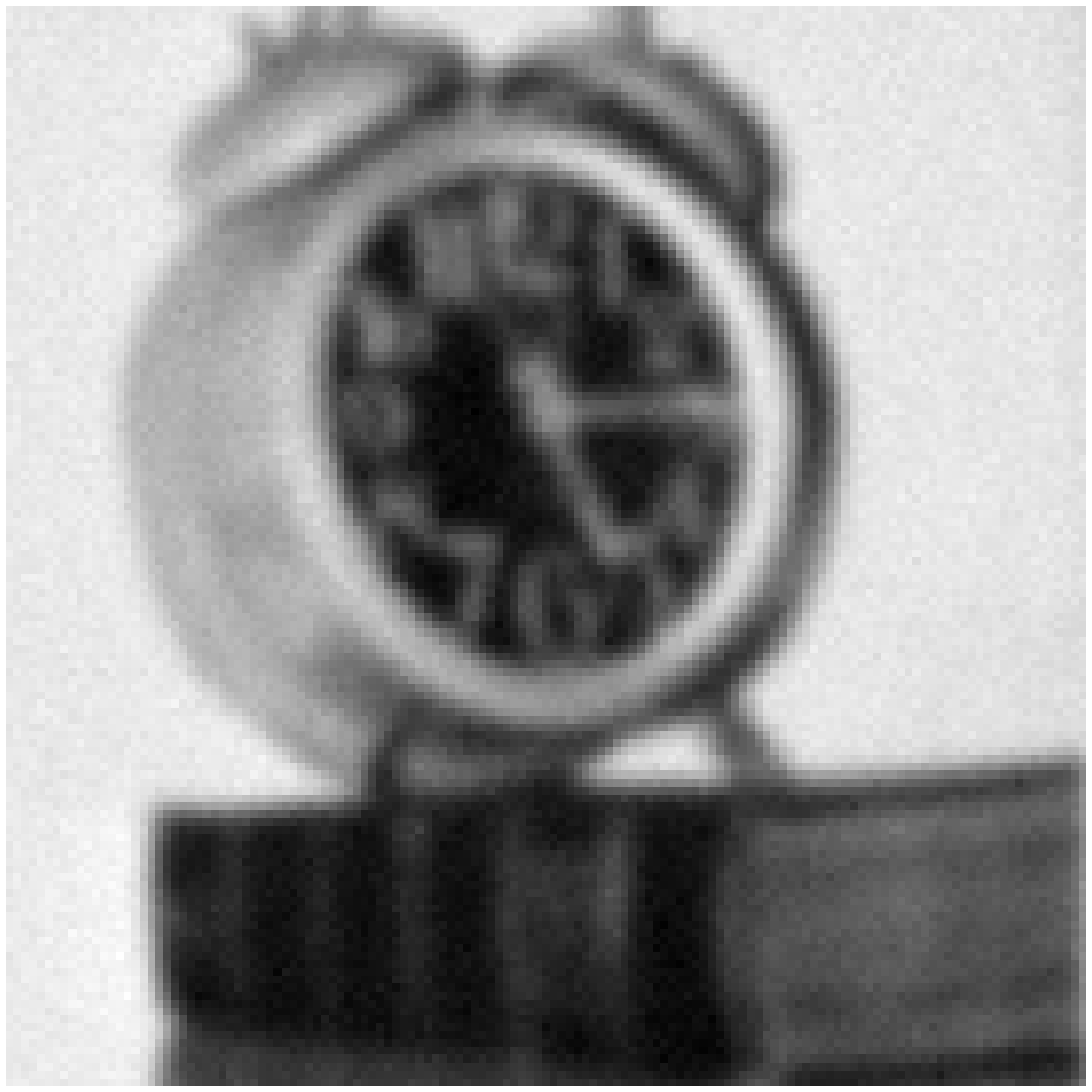}}
\subfigure{\includegraphics[width=31mm]{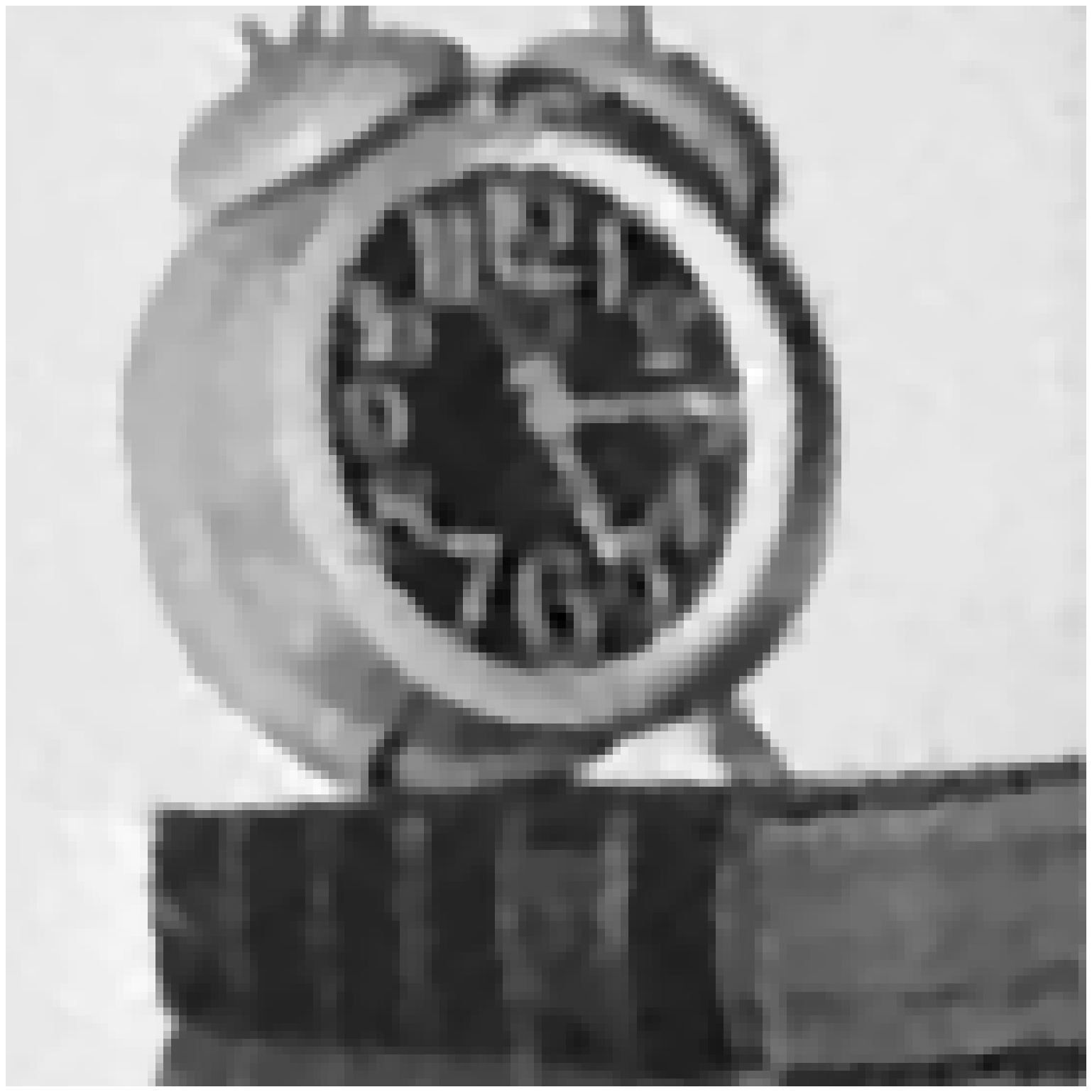}}
\subfigure{\includegraphics[width=31mm]{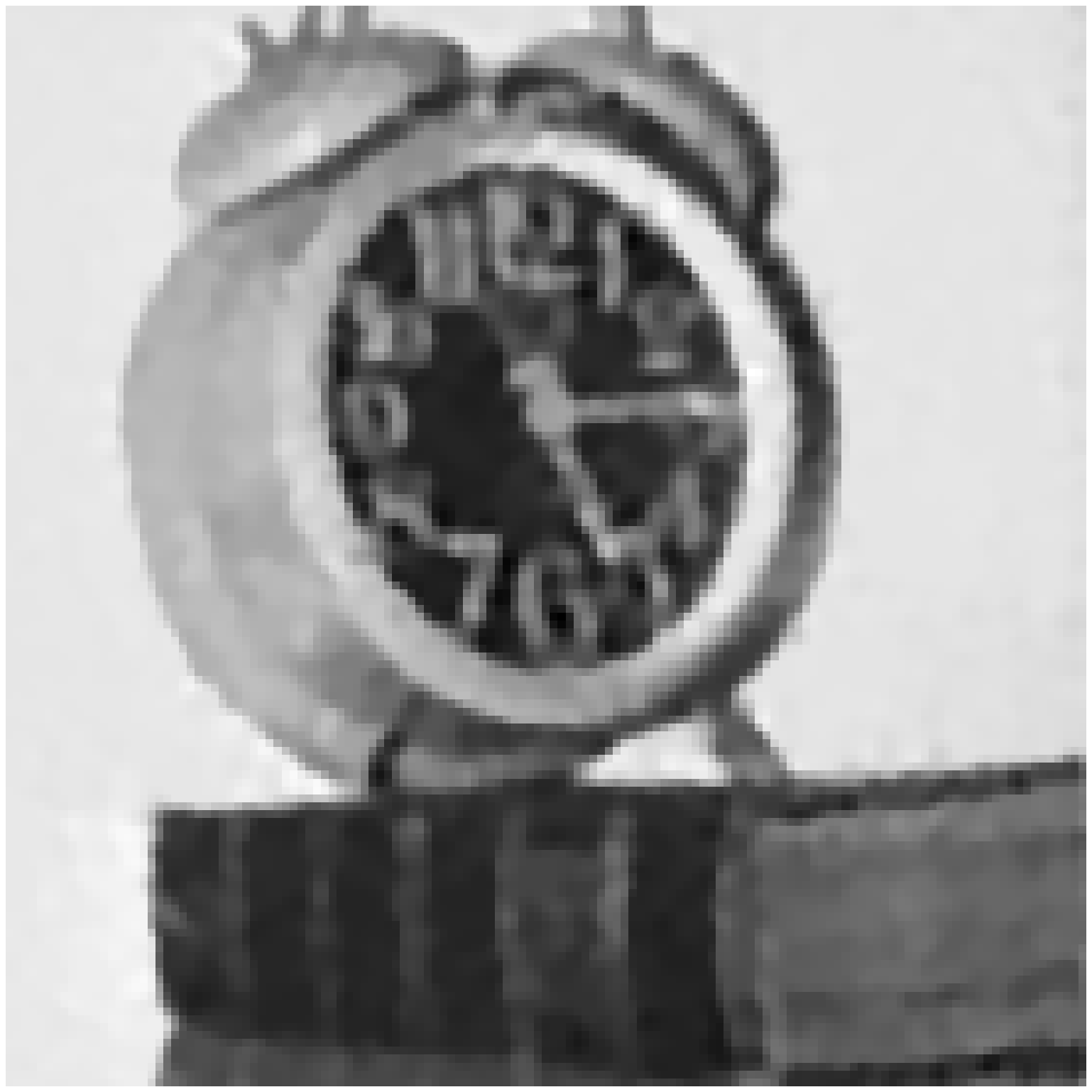}}
\subfigure{\includegraphics[width=31mm]{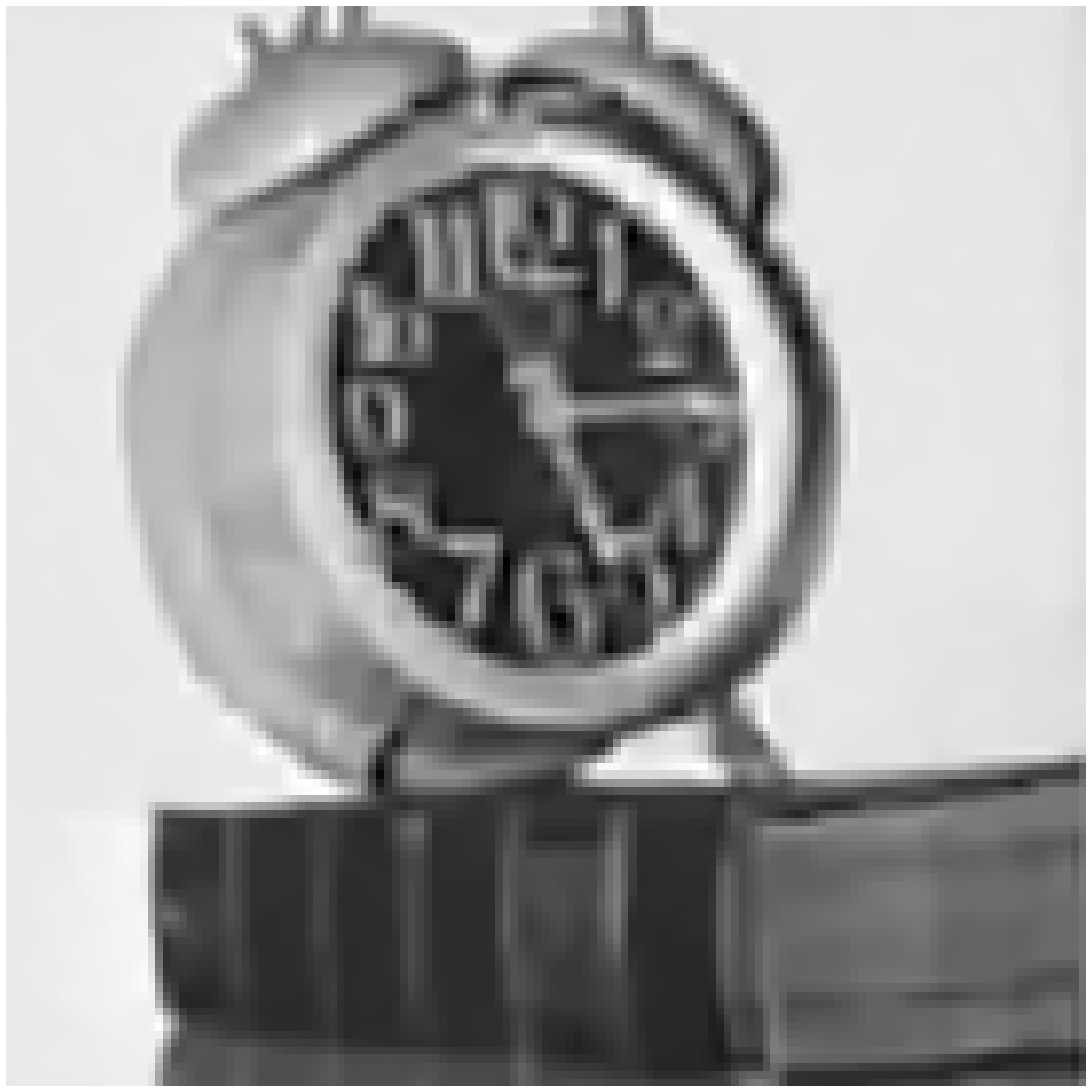}}
\\
\subfigure{\includegraphics[width=31mm]{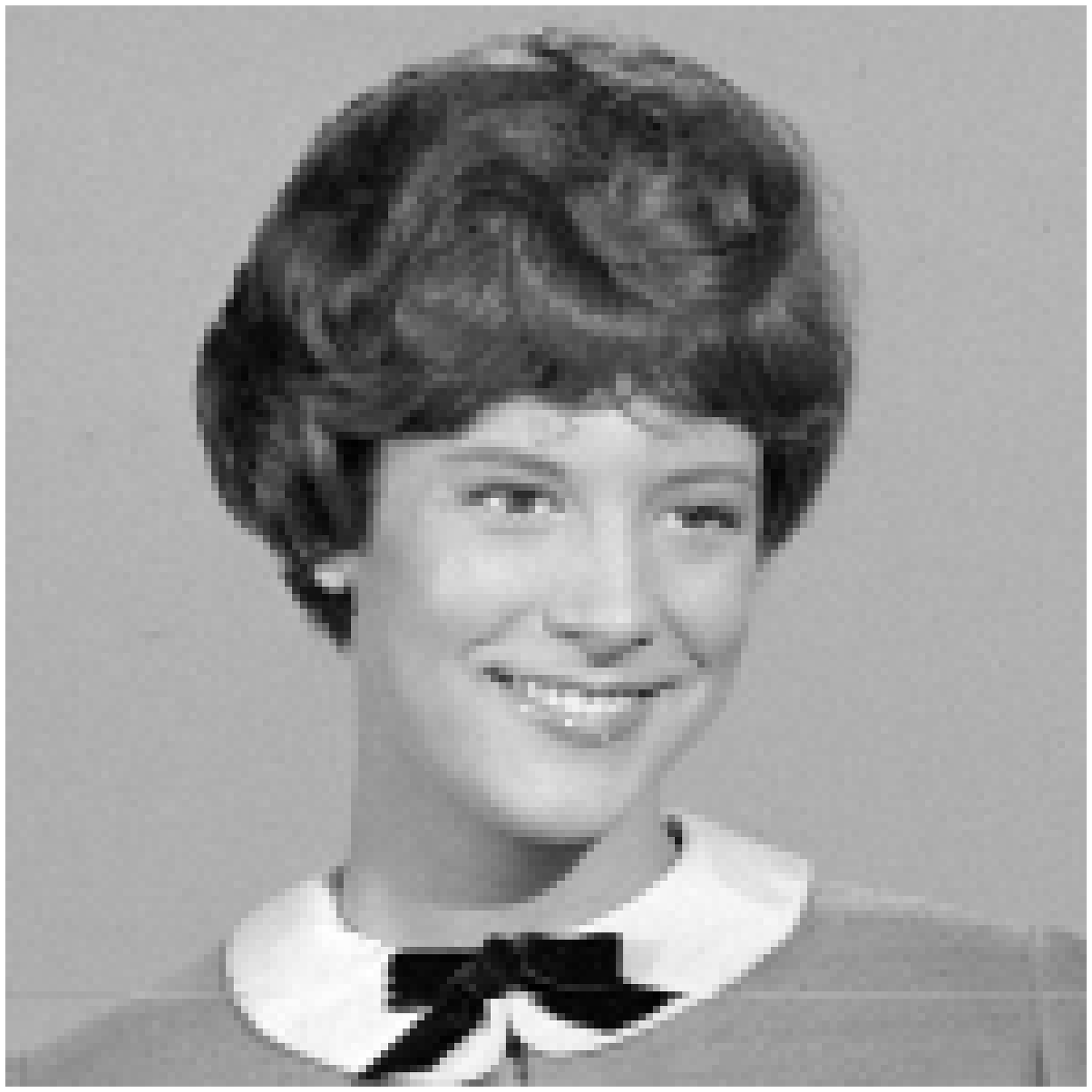}}
\subfigure{\includegraphics[width=31mm]{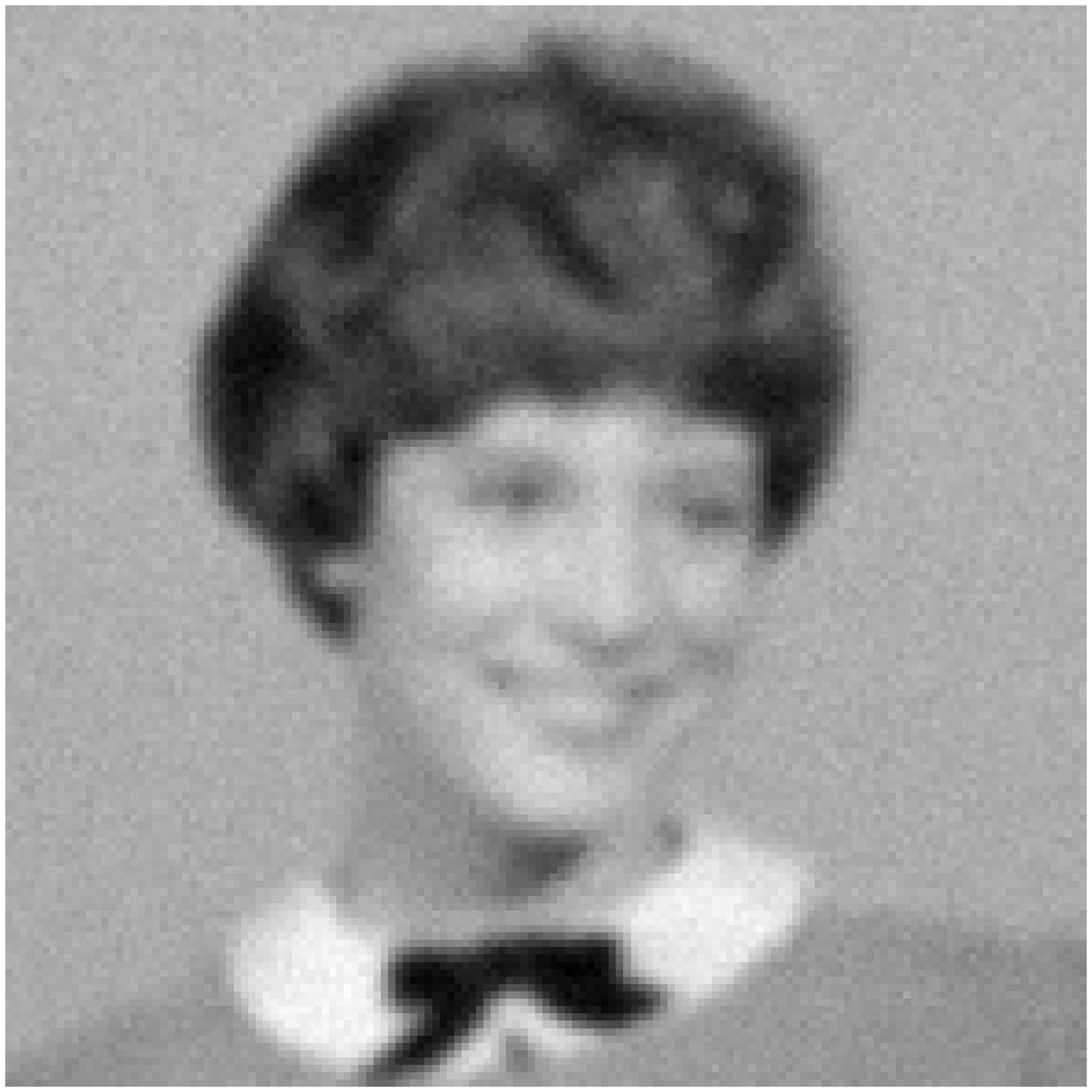}}
\subfigure{\includegraphics[width=31mm]{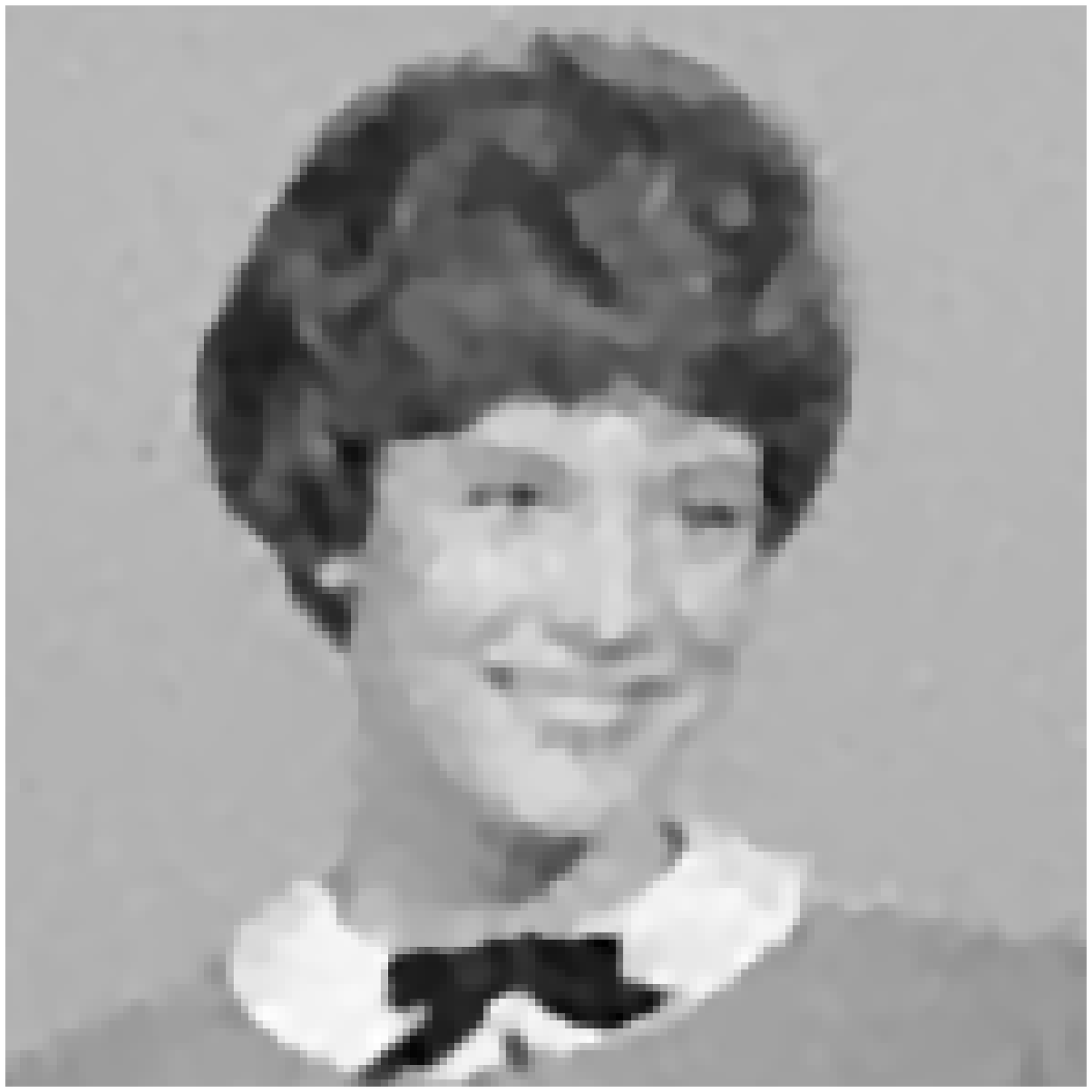}}
\subfigure{\includegraphics[width=31mm]{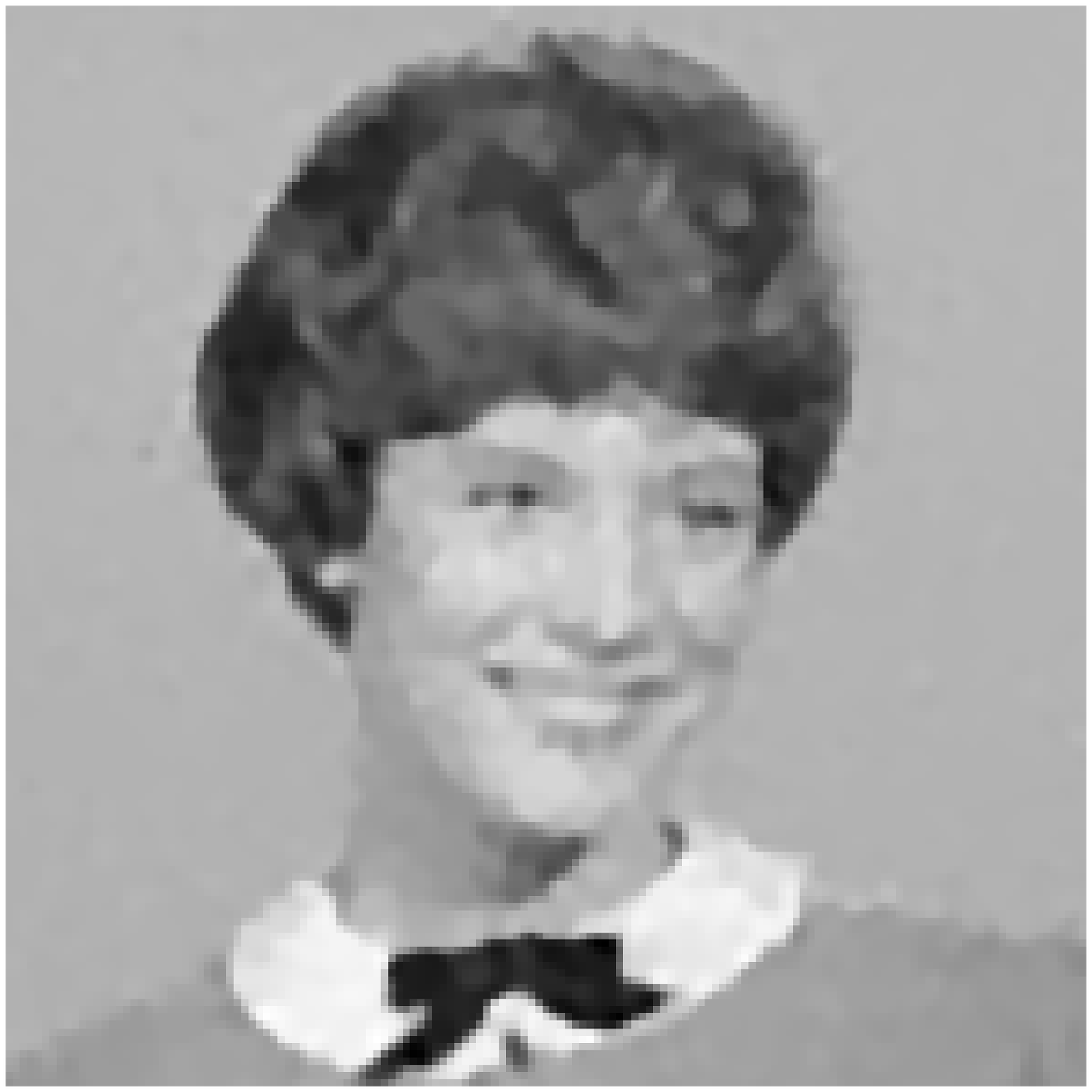}}
\subfigure{\includegraphics[width=31mm]{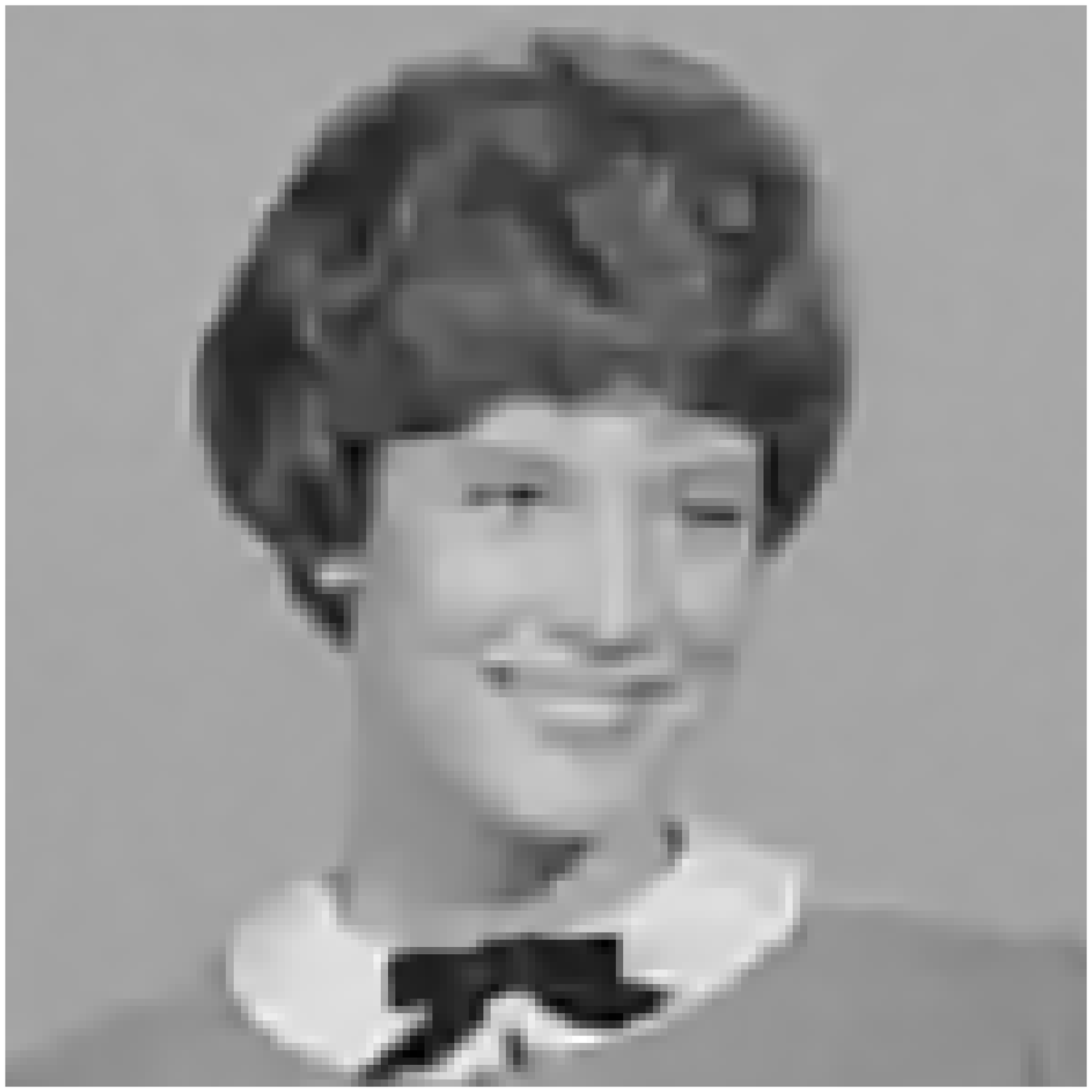}}
\\
\caption {Zoom-in to the texture part of ``downhill'', ``cameraman'',
``bridge'', ``pepper'', ``clock'', and ``portrait I''.
Image from left to right are: original image, observed image, results of
the balanced approach, results of the analysis based approach and
results of the PD method.} \label{image4}
\end{figure}

\begin{figure}[t!]
\centering
\subfigure{\includegraphics[width=31mm]{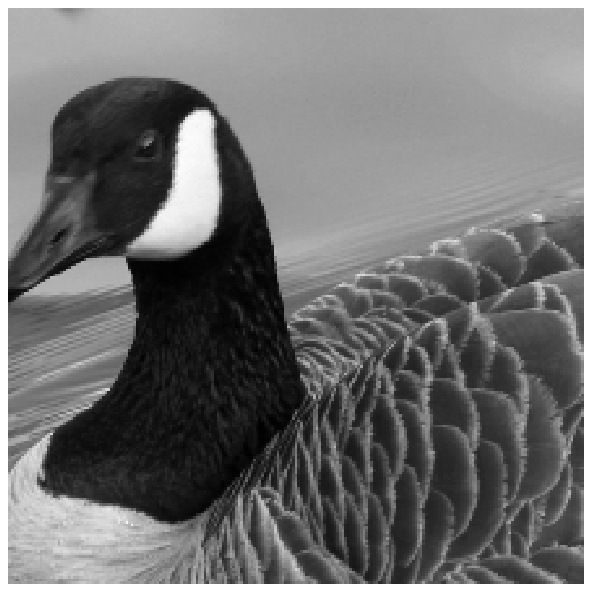}}
\subfigure{\includegraphics[width=31mm]{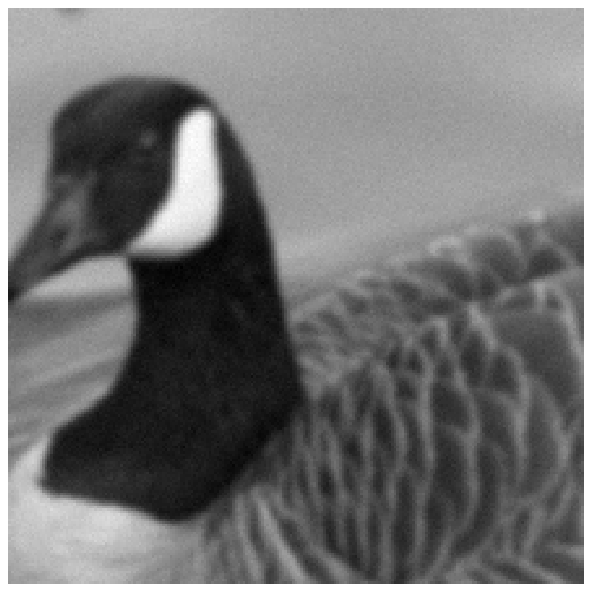}}
\subfigure{\includegraphics[width=31mm]{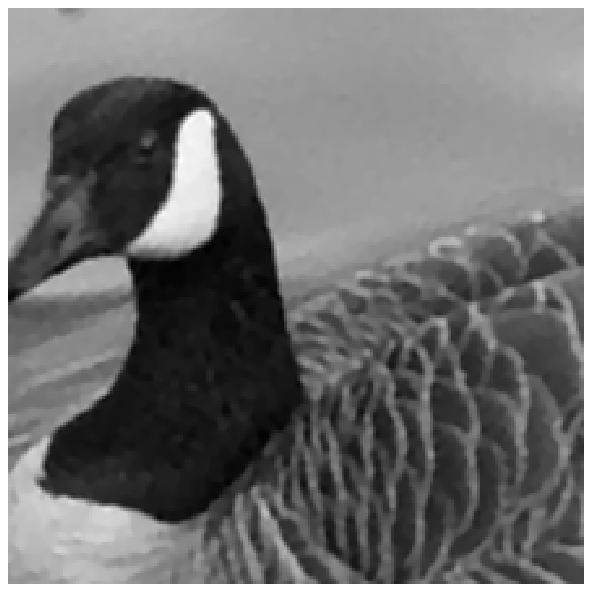}}
\subfigure{\includegraphics[width=31mm]{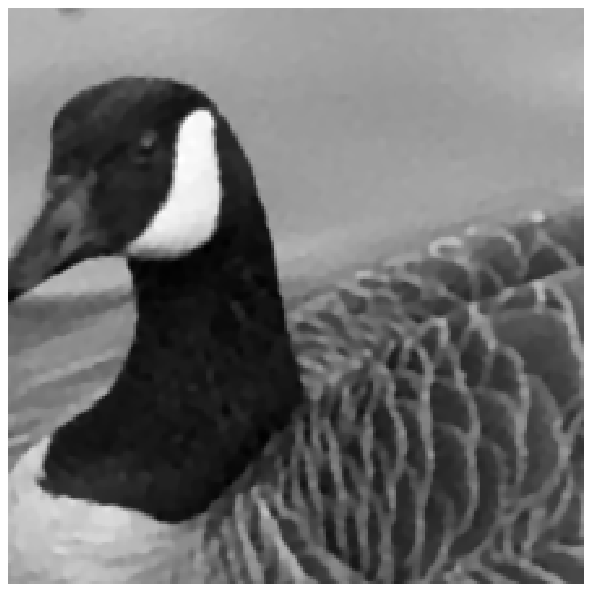}}
\subfigure{\includegraphics[width=31mm]{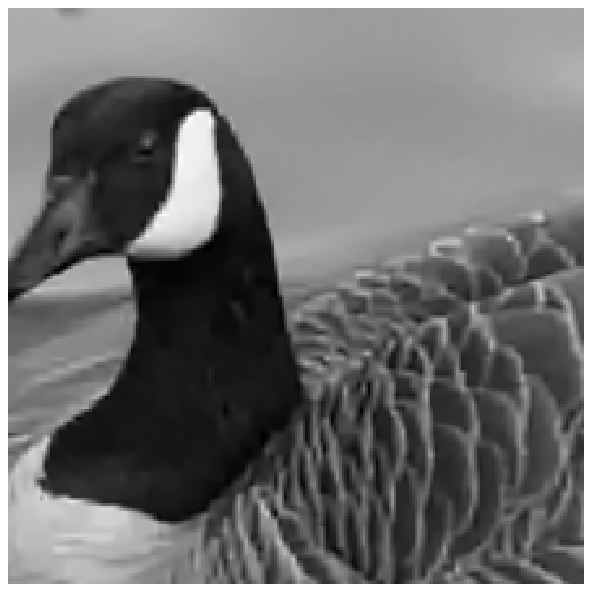}}
\\
\subfigure{\includegraphics[width=31mm]{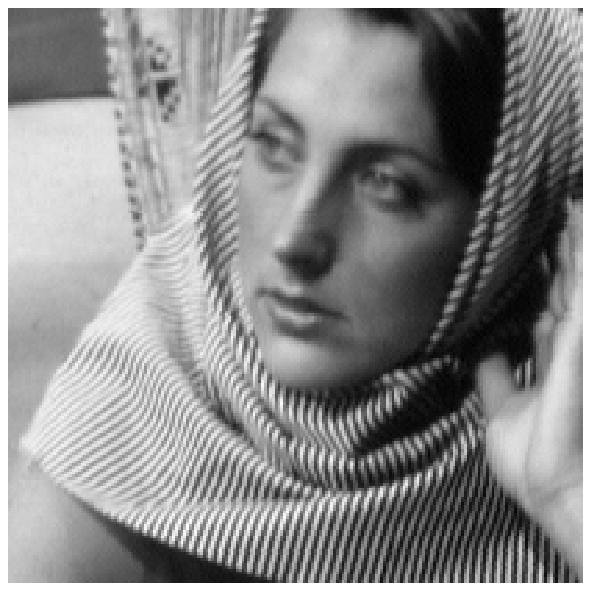}}
\subfigure{\includegraphics[width=31mm]{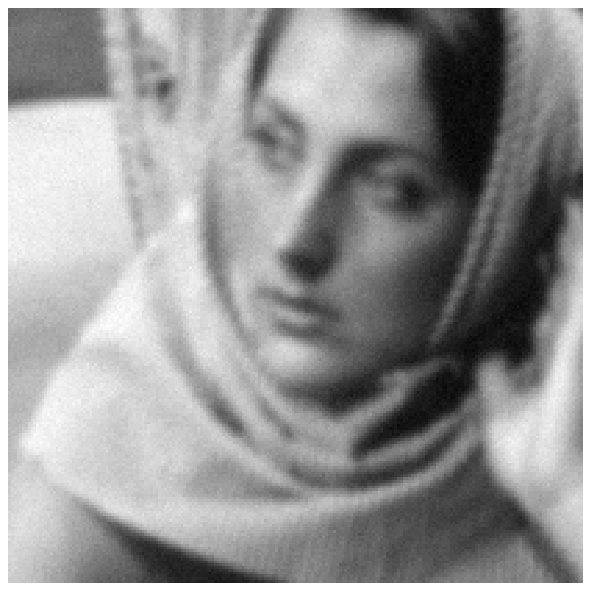}}
\subfigure{\includegraphics[width=31mm]{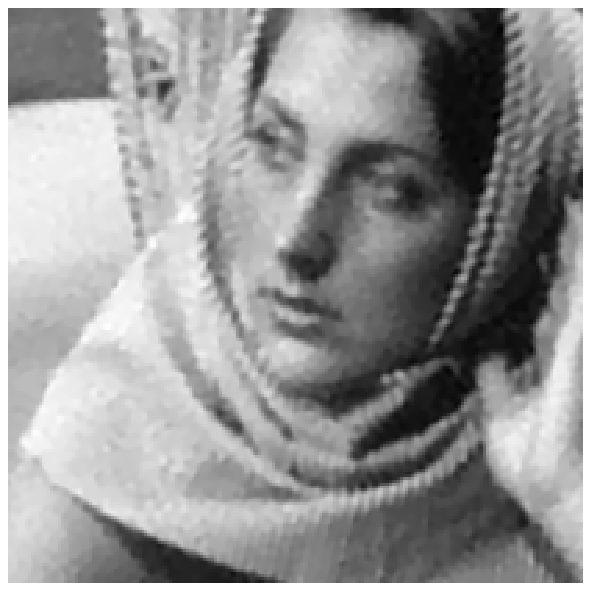}}
\subfigure{\includegraphics[width=31mm]{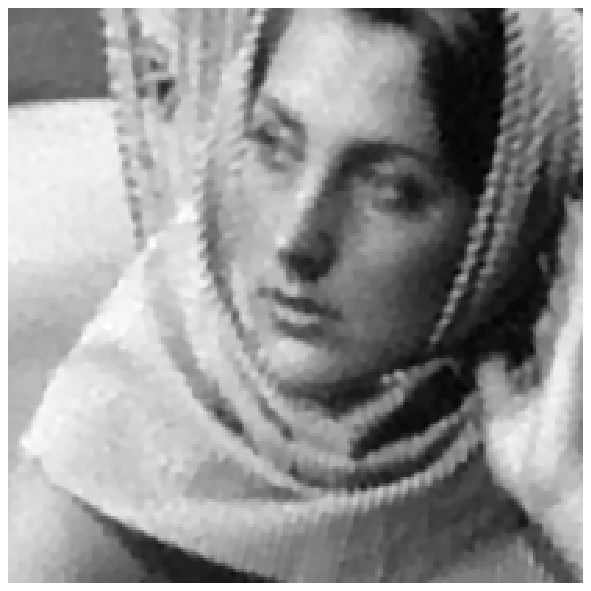}}
\subfigure{\includegraphics[width=31mm]{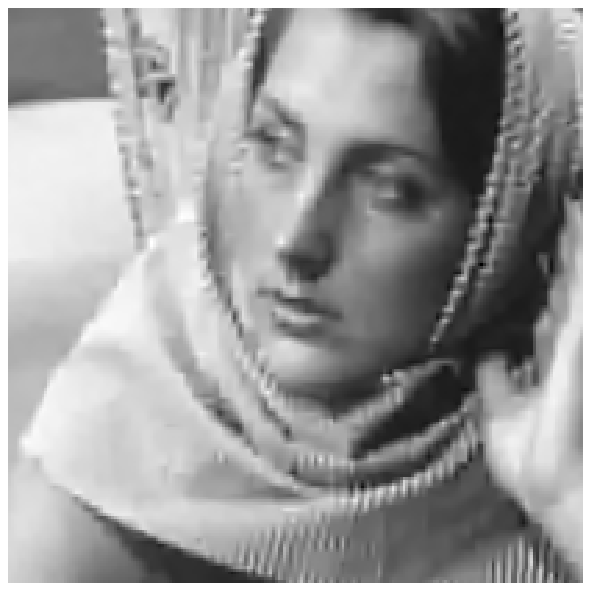}}
\\
\subfigure{\includegraphics[width=31mm]{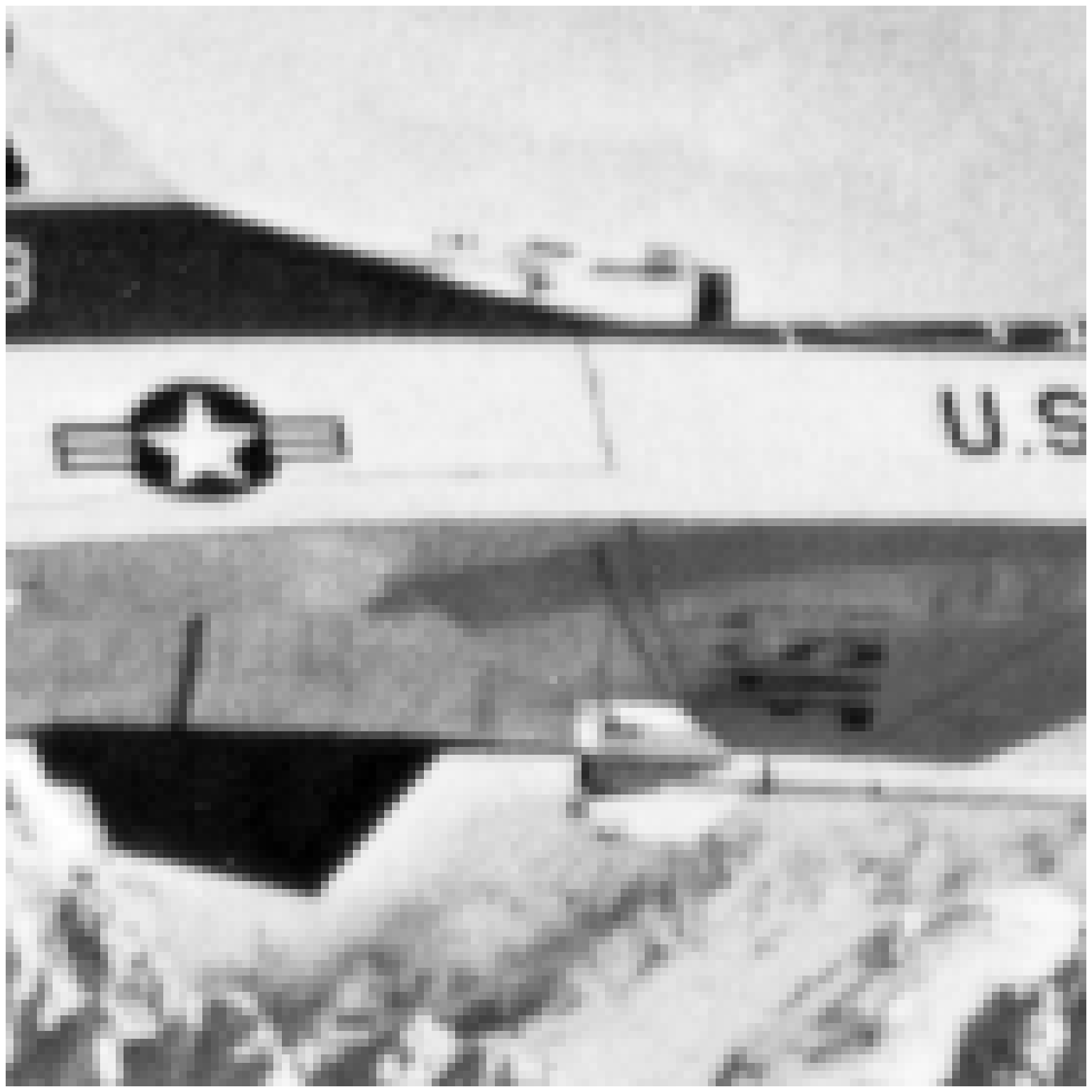}}
\subfigure{\includegraphics[width=31mm]{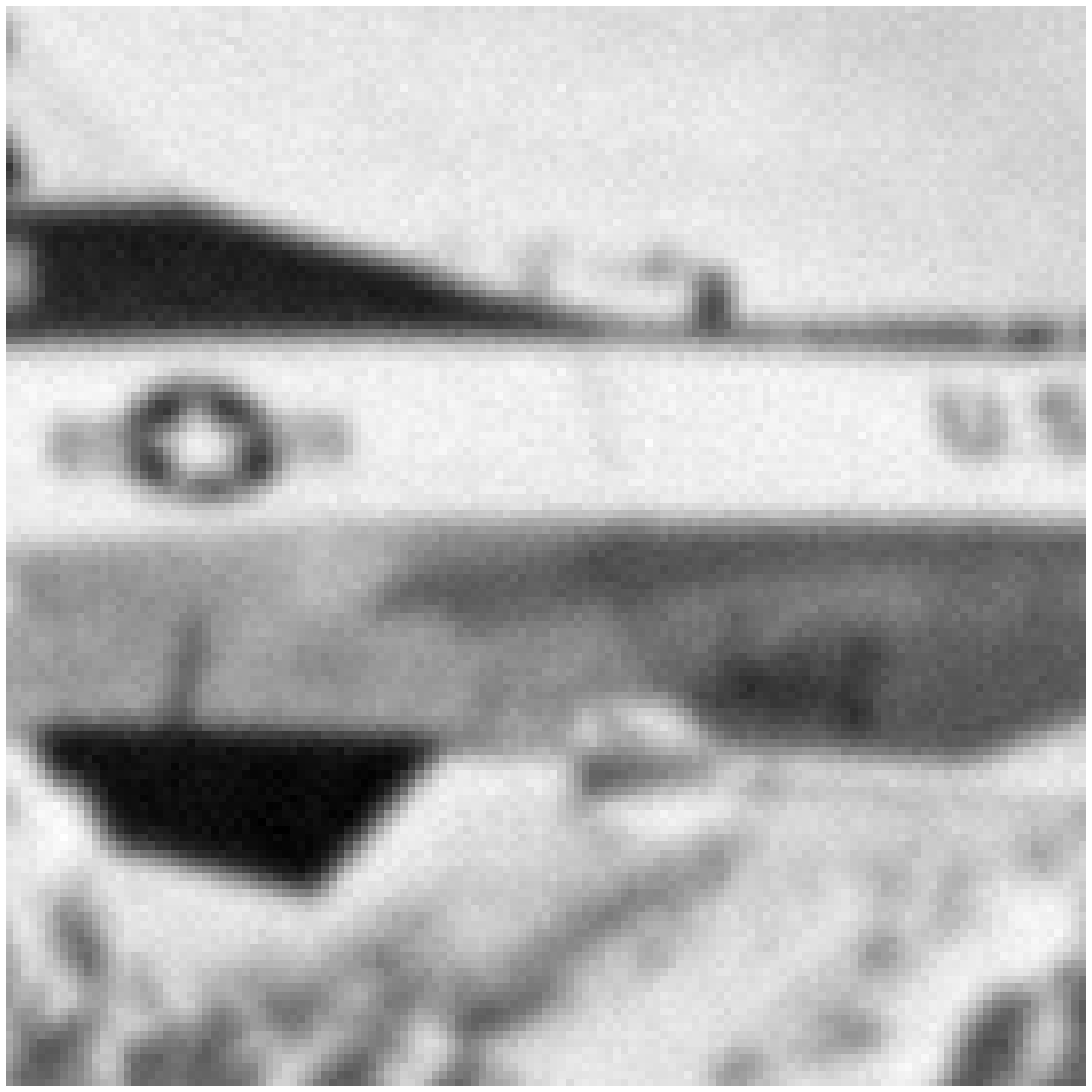}}
\subfigure{\includegraphics[width=31mm]{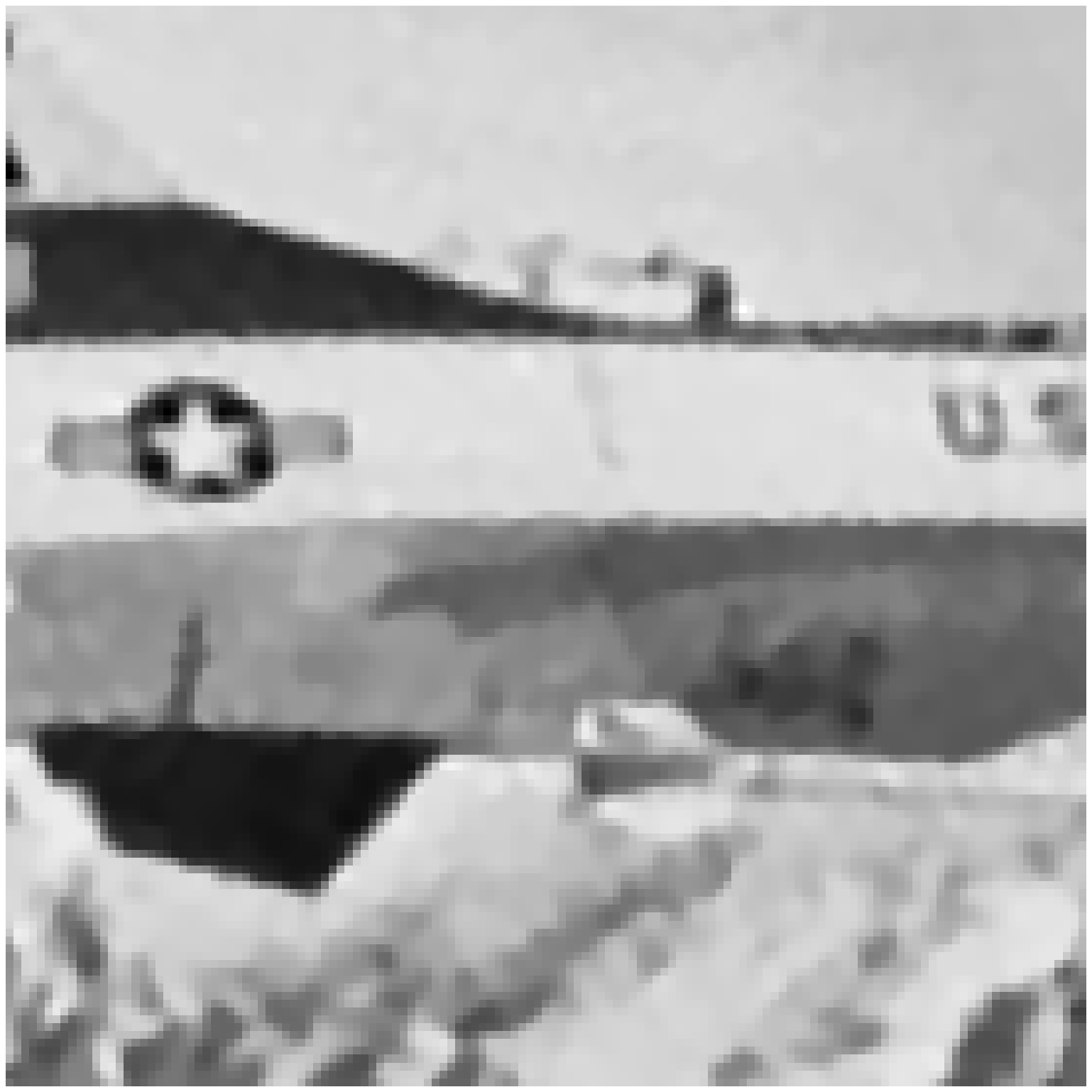}}
\subfigure{\includegraphics[width=31mm]{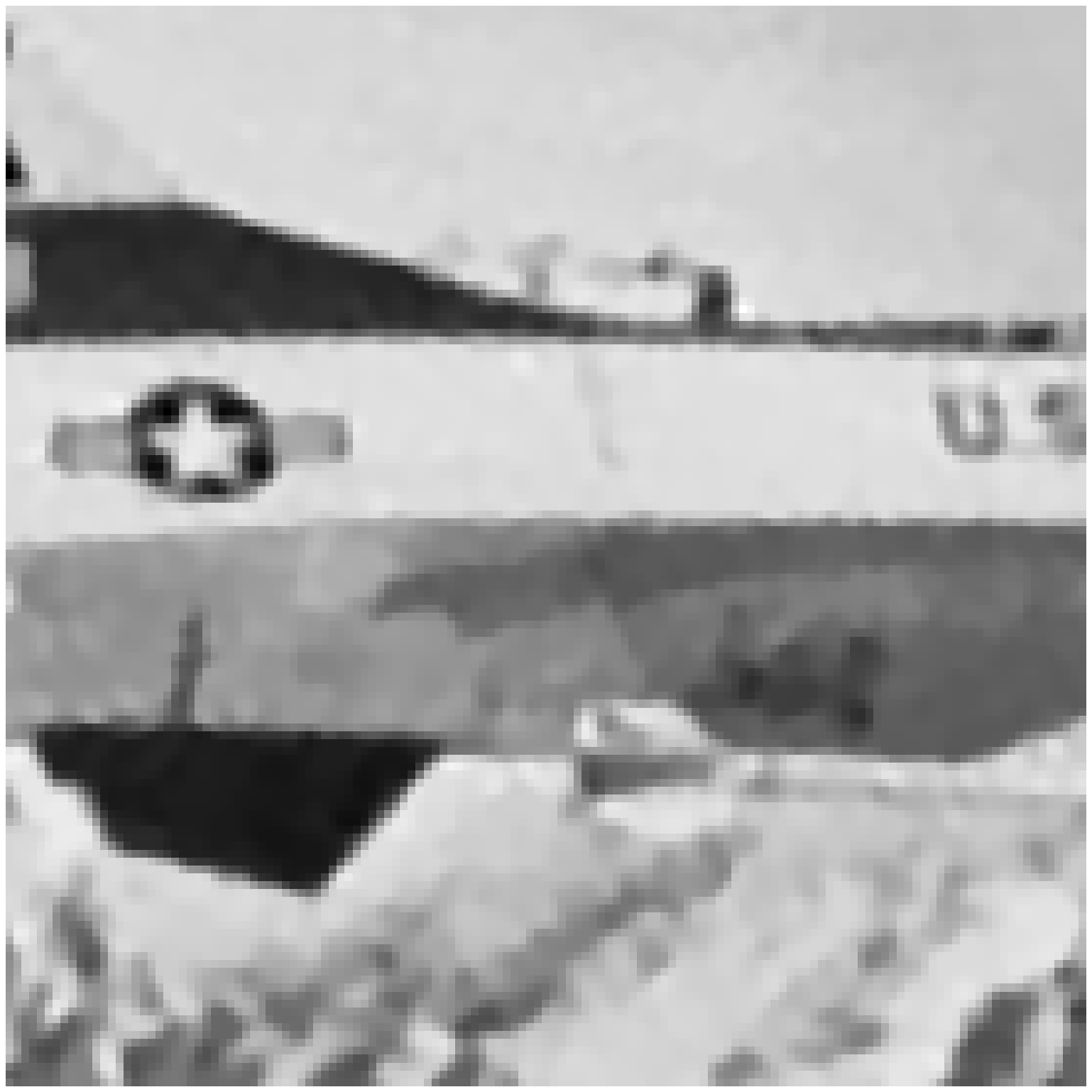}}
\subfigure{\includegraphics[width=31mm]{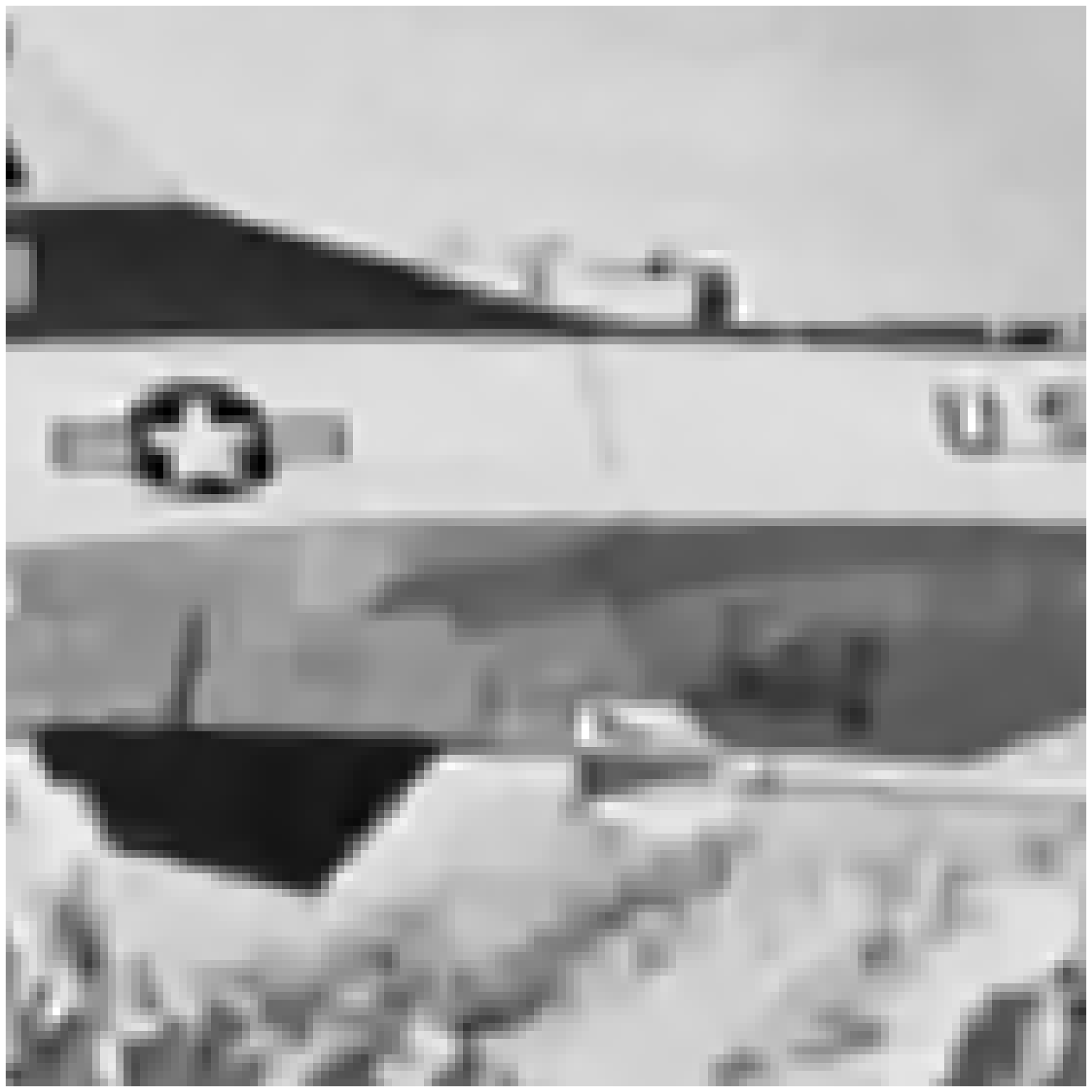}}
\\
\subfigure{\includegraphics[width=31mm]{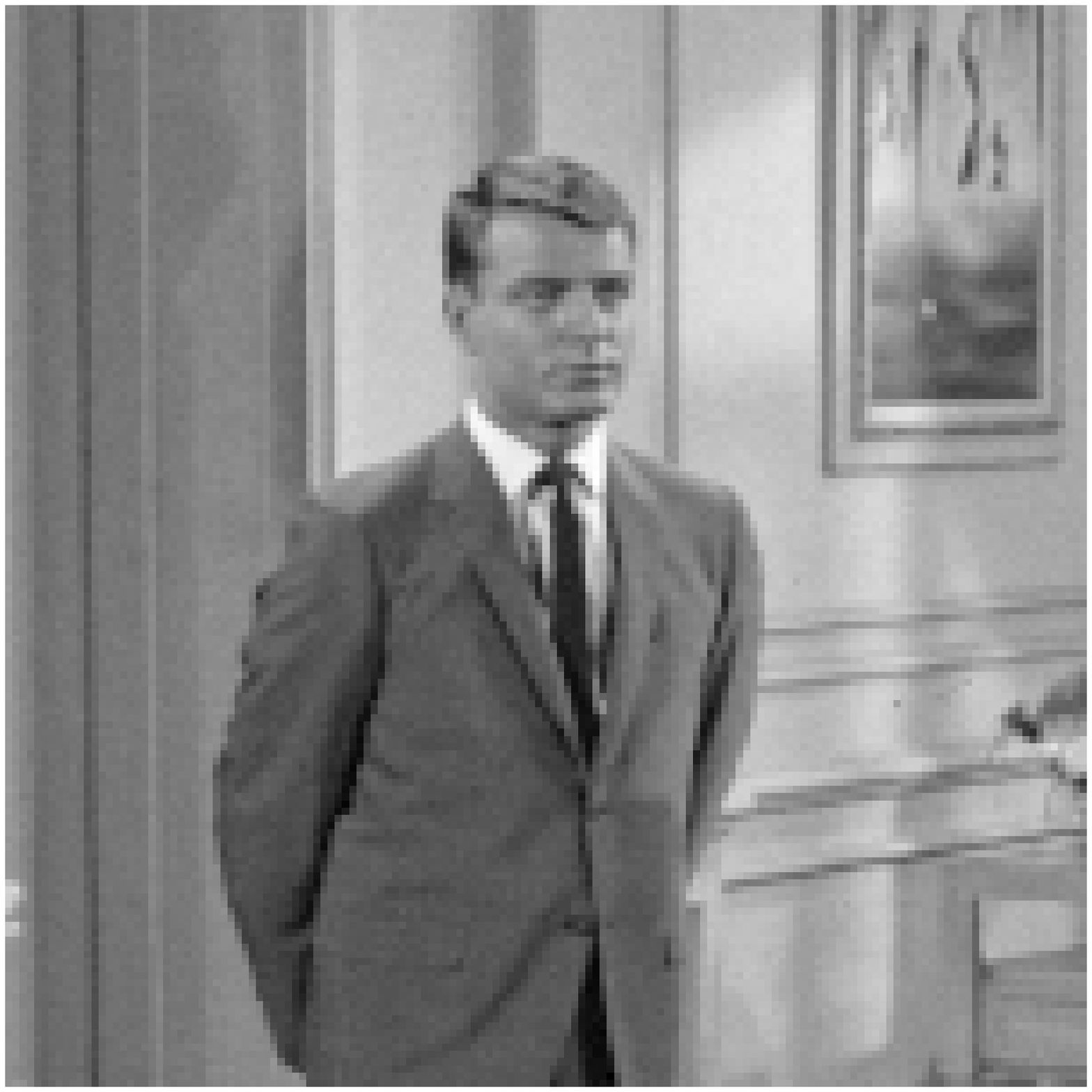}}
\subfigure{\includegraphics[width=31mm]{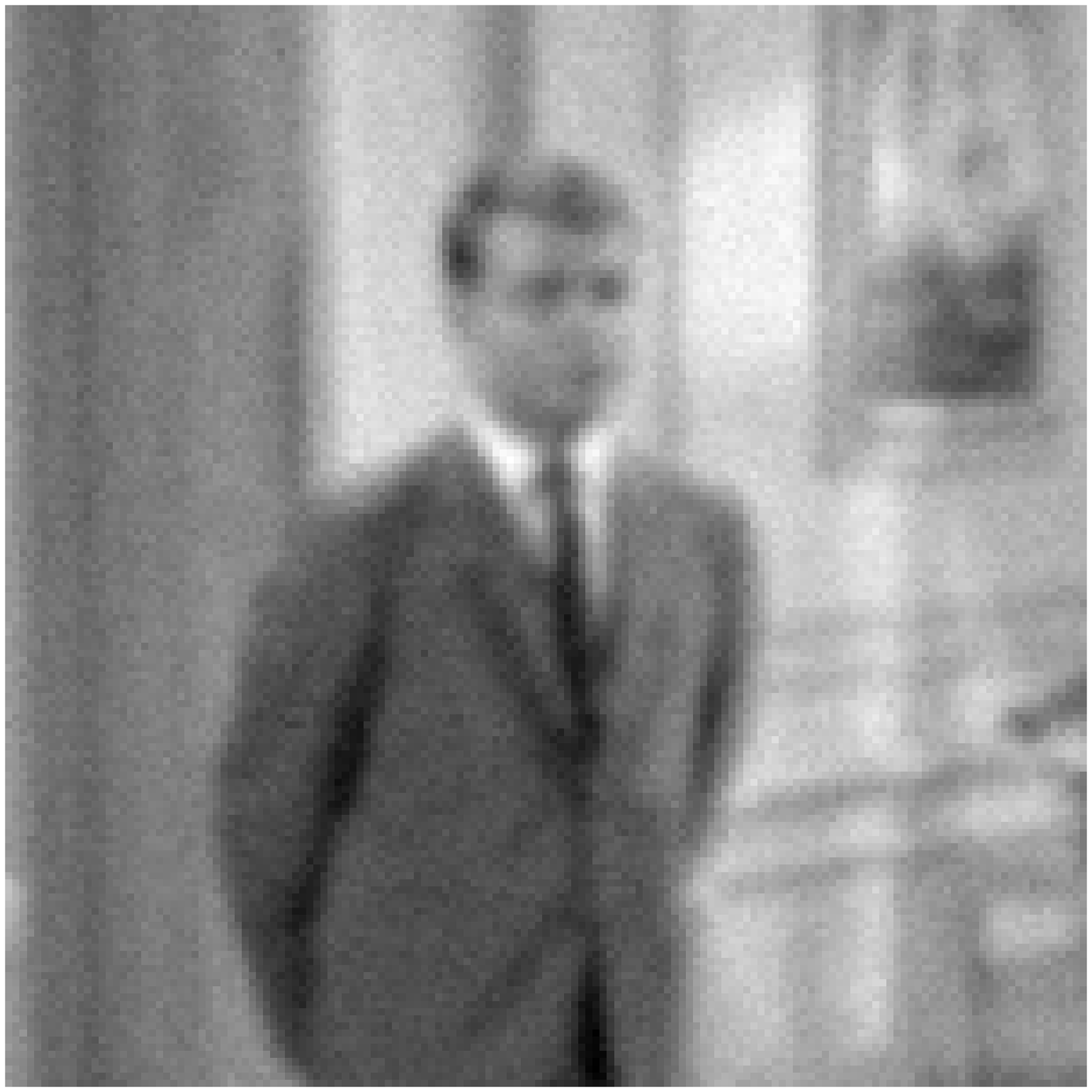}}
\subfigure{\includegraphics[width=31mm]{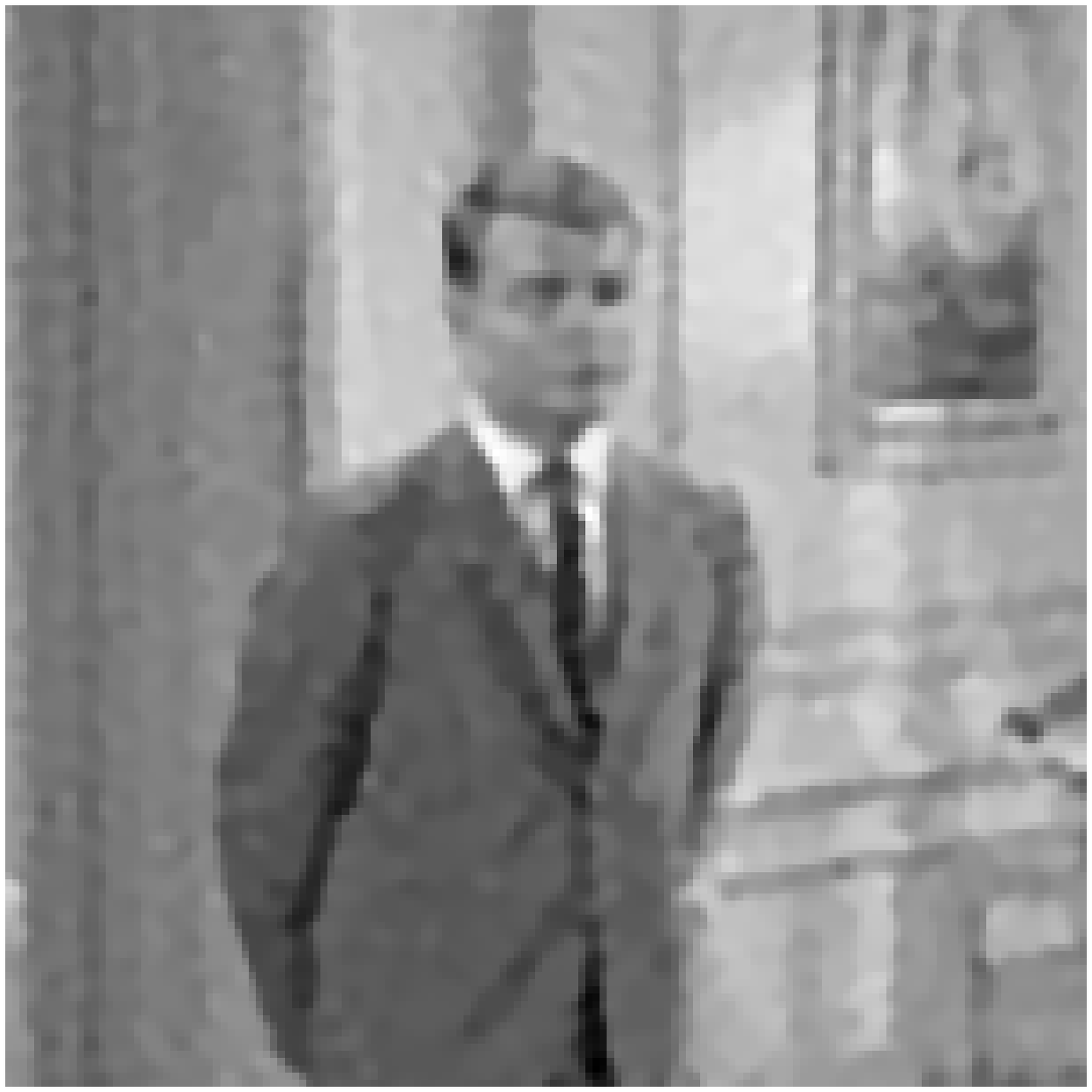}}
\subfigure{\includegraphics[width=31mm]{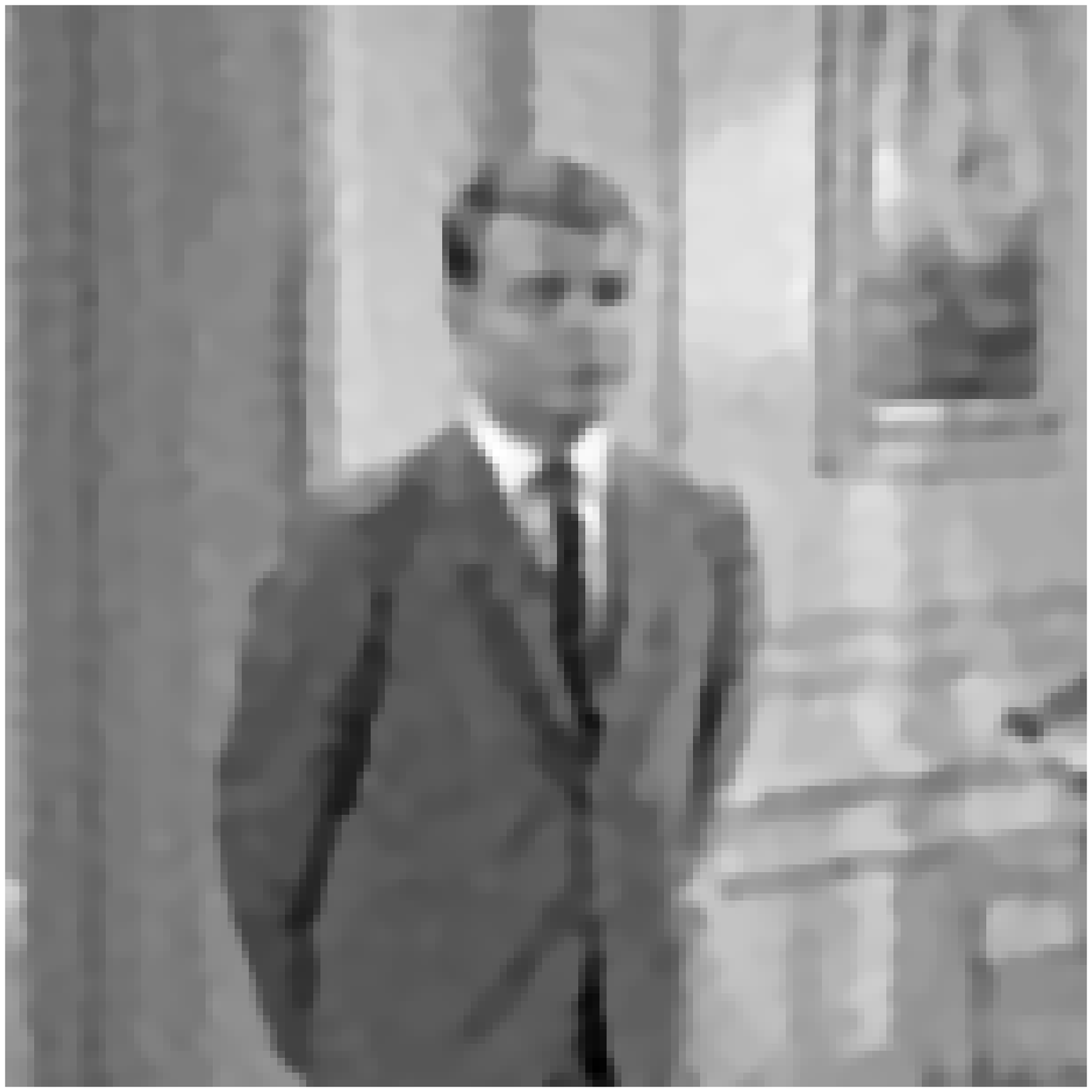}}
\subfigure{\includegraphics[width=31mm]{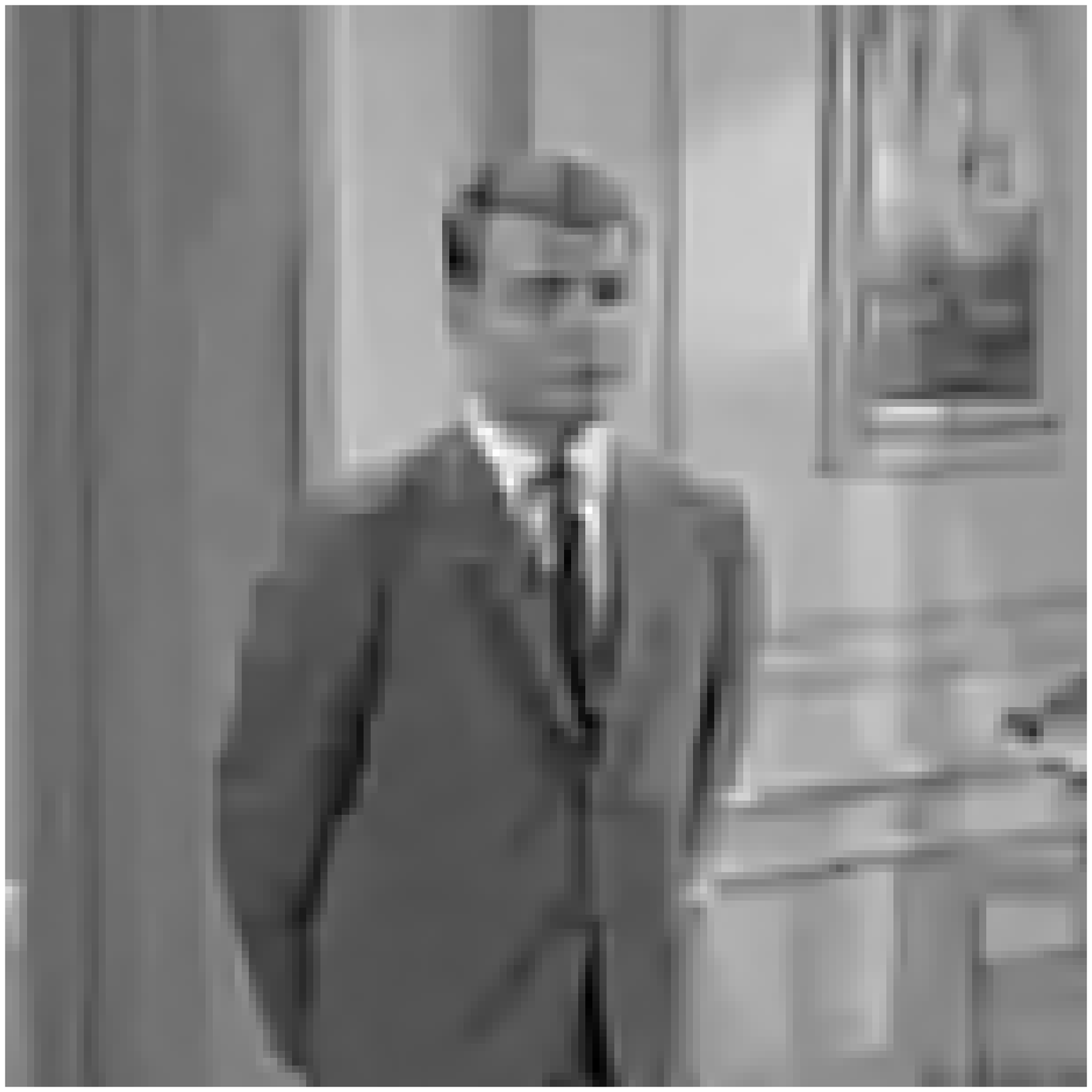}}
\\
\subfigure{\includegraphics[width=31mm]{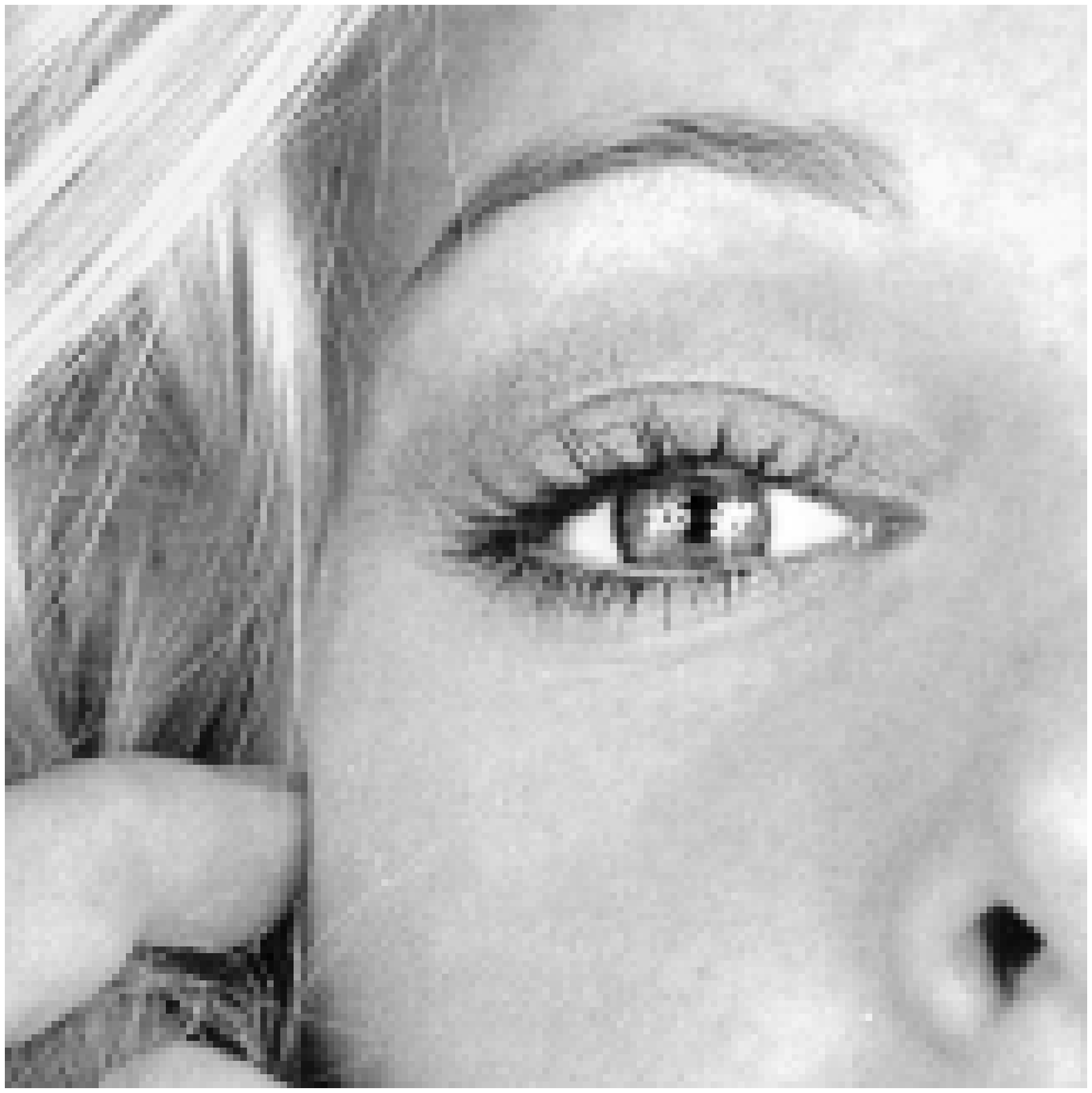}}
\subfigure{\includegraphics[width=31mm]{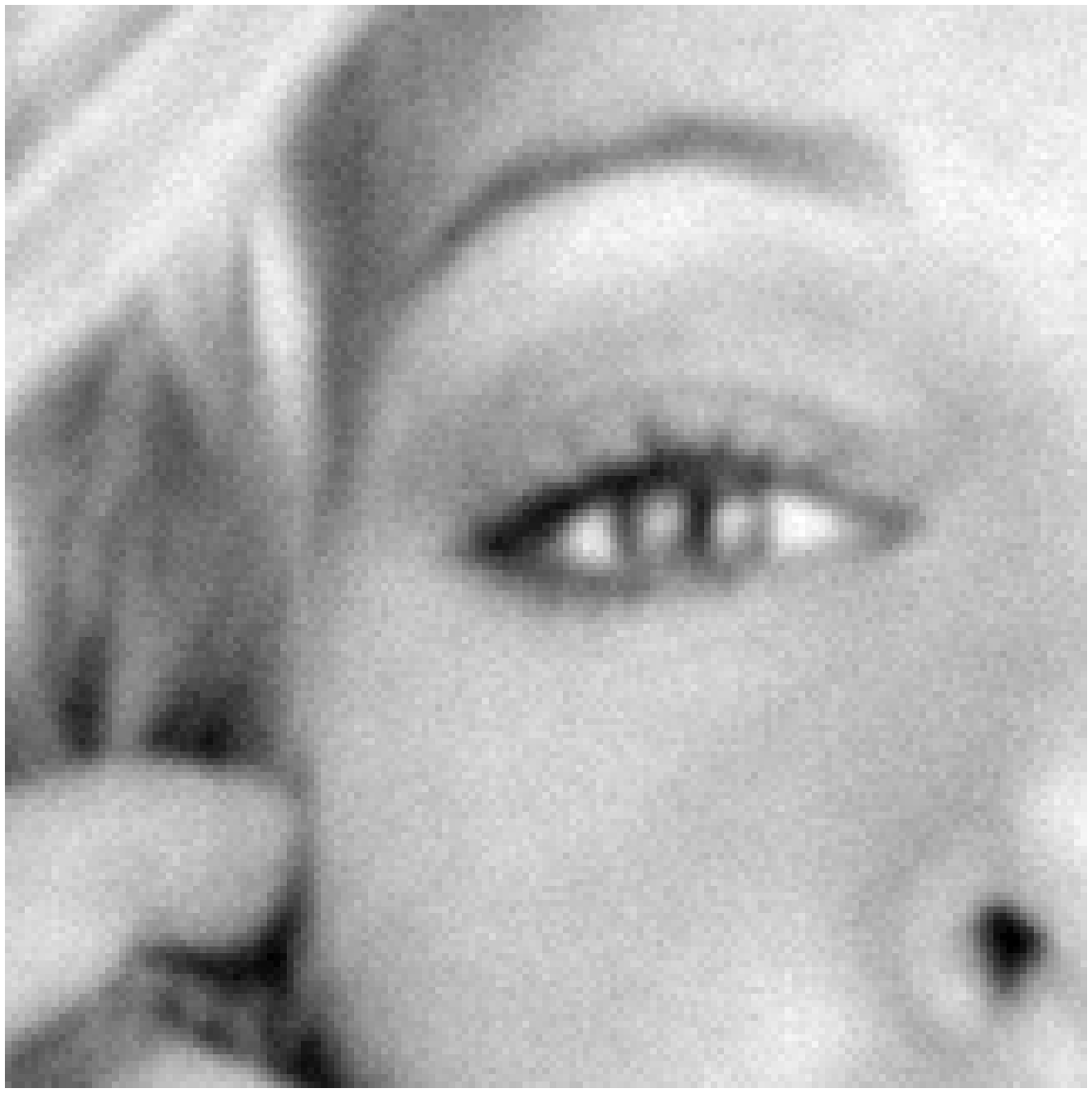}}
\subfigure{\includegraphics[width=31mm]{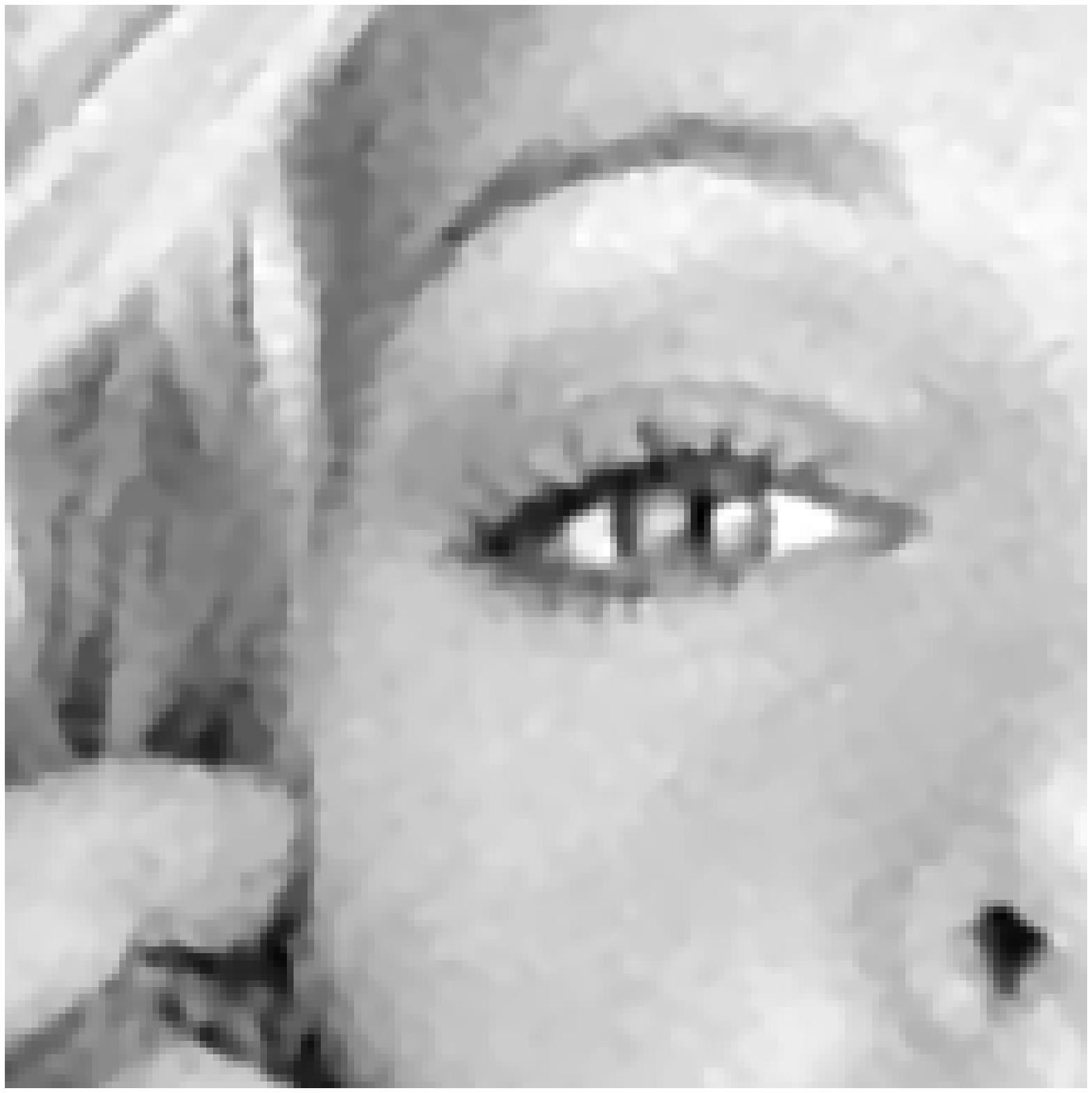}}
\subfigure{\includegraphics[width=31mm]{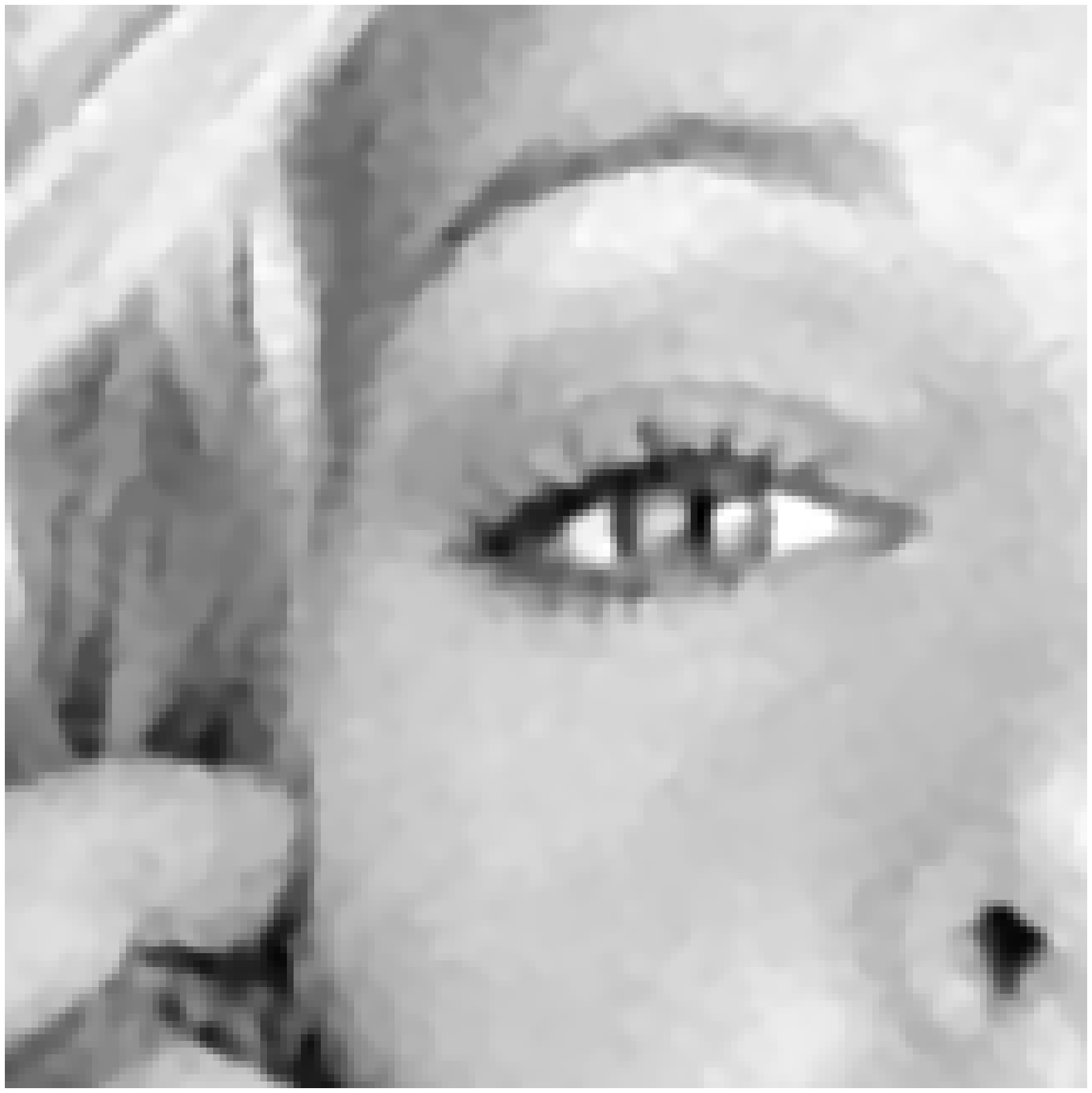}}
\subfigure{\includegraphics[width=31mm]{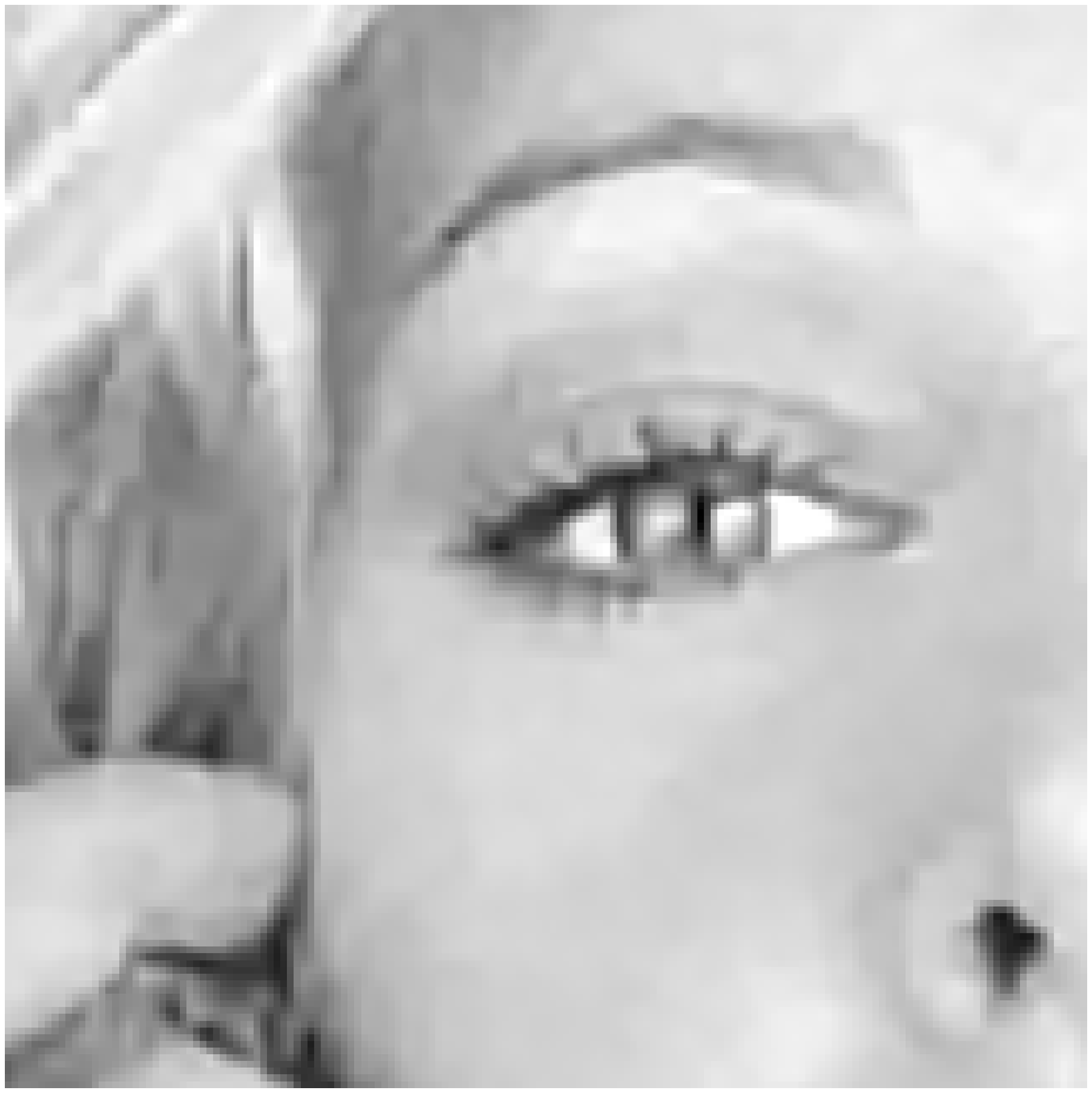}}
\\
\subfigure{\includegraphics[width=31mm]{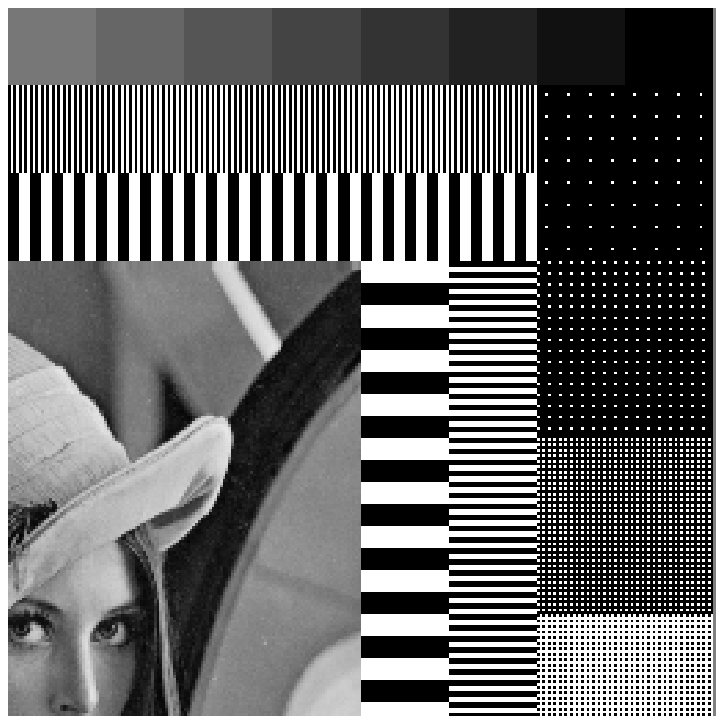}}
\subfigure{\includegraphics[width=31mm]{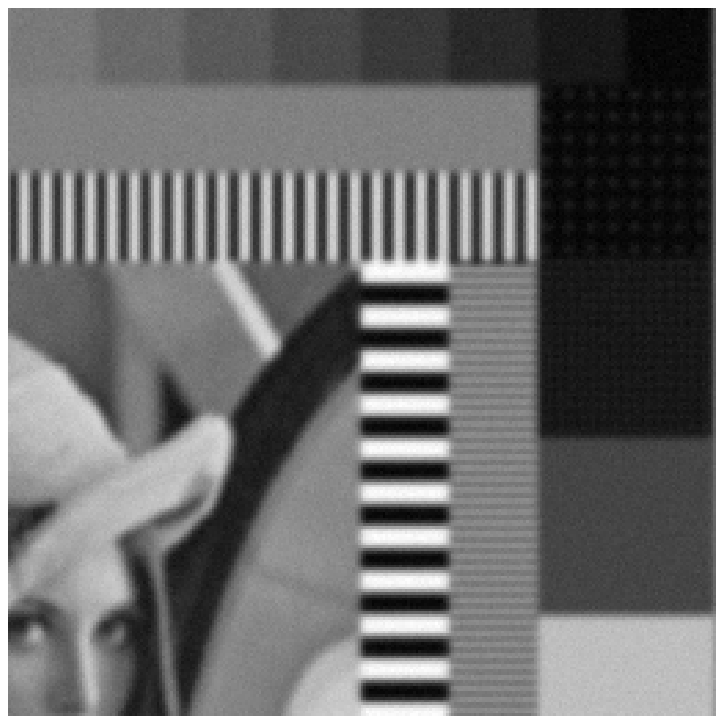}}
\subfigure{\includegraphics[width=31mm]{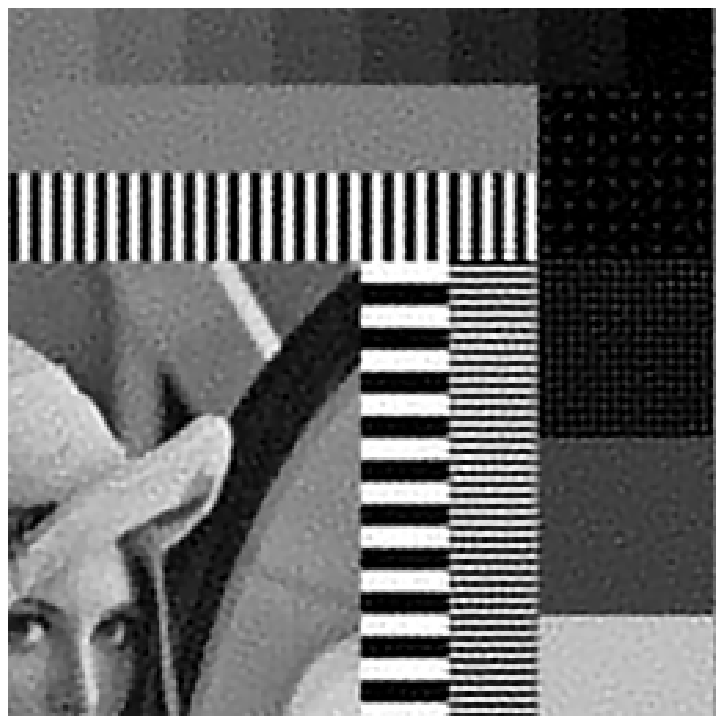}}
\subfigure{\includegraphics[width=31mm]{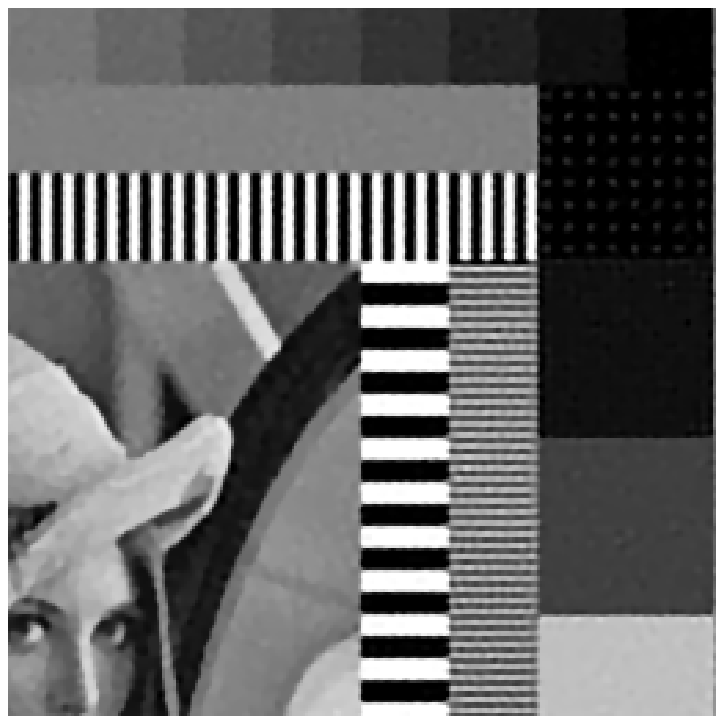}}
\subfigure{\includegraphics[width=31mm]{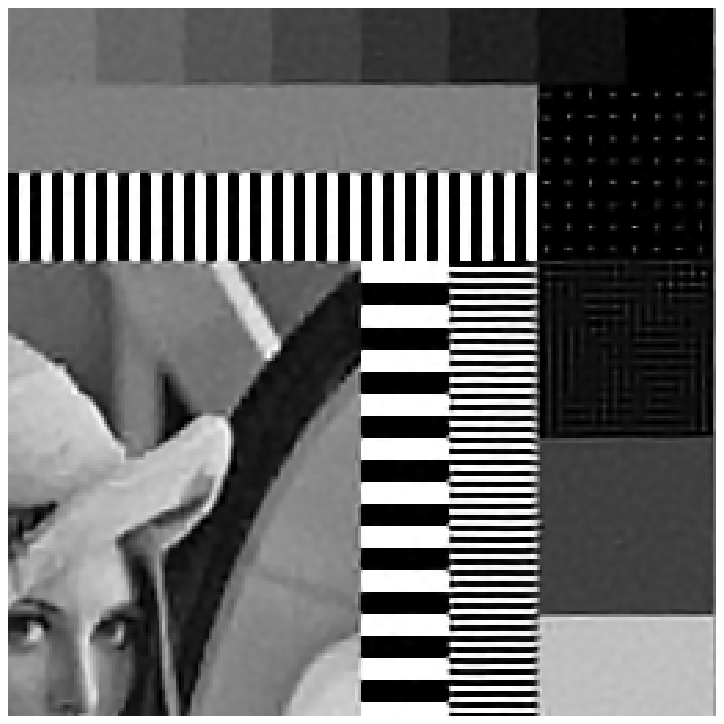}}
\\
\caption {Zoom-in to the texture part of ``duck'', ``barbara'',
``aircraft'', ``couple'', ``portrait II'' and ``lena''. Image
from left to right are: original image, observed image, results of
the balanced approach, results of the analysis based approach and
results of the PD method.} \label{image5}
\end{figure}

\begin{table}[t!]
\caption{Comparisons: image deconvolution} \centering
\label{table2}
\begin{tabular}{|l c||c c|c c|c c|}
\hline
\multicolumn{1}{|l}{} &\multicolumn{1}{c||}{} &
\multicolumn{2}{c|}{Balanced approach} & \multicolumn{2}{c|}{Analysis based approach} & \multicolumn{2}{c|}{PD method}
\\
\multicolumn{1}{|l}{Name} &\multicolumn{1}{c||}{Size} &
\multicolumn{1}{c}{Time} & \multicolumn{1}{c|}{PSNR} &
\multicolumn{1}{c}{Time} & \multicolumn{1}{c|}{PSNR} &
\multicolumn{1}{c}{Time} & \multicolumn{1}{c|}{PSNR}
\\
\hline
Downhill      & 256 & 12.5  & 27.24 & 6.1  & 27.36 & 29.5 & 27.35 \\
Cameraman & 256 & 18.2  & 26.65 & 7.0  & 26.73 & 31.1 & 27.21  \\
Bridge          & 256 & 14.5  & 25.40 & 5.1 & 25.46 & 33.0 & 25.44\\
Pepper         & 256 & 21.6  & 26.82 & 7.5  & 26.63 & 32.1 & 27.29 \\
Clock          & 256  & 17.3  & 29.42  & 19.9 & 29.48 & 22.3 & 29.86 \\
Portrait I     & 256  & 32.7 & 33.93 & 19.3  & 33.98 & 27.1 & 35.44   \\
Duck           & 464 & 30.6  & 31.00 & 16.1 & 31.11 & 72.5 & 31.09 \\
Barbara        & 512 & 38.8  & 24.62 & 12.3 & 24.62 & 77.4 & 24.69 \\
Aircraft       & 512  & 55.9  & 30.75 & 35.1 & 30.81 & 67.5 & 31.29 \\
Couple        & 512  & 91.4 & 28.40 & 41.5 & 28.14 & 139.1 & 29.32 \\
Portrait II     & 512  & 45.2 & 30.23 & 22.1 & 30.20 & 48.9 & 30.90 \\
Lena            & 516  & 89.3  & 12.91 & 31.0 & 12.51 & 67.0 & 13.45 \\
\hline
\end{tabular}
\\
\end{table}

\begin{table}[t!]
\caption{Comparisons among different wavelet representations} \centering
\label{table3}
\begin{tabular}{|l||c c|c c|c c|}
\hline
\multicolumn{1}{|l||}{} &
\multicolumn{2}{c|}{Balanced approach} & \multicolumn{2}{c|}{Analysis based approach} & \multicolumn{2}{c|}{PD method}
\\
\multicolumn{1}{|l||}{Wavelets}  &
\multicolumn{1}{c}{Time} & \multicolumn{1}{c|}{PSNR} &
\multicolumn{1}{c}{Time} & \multicolumn{1}{c|}{PSNR} &
\multicolumn{1}{c}{Time} & \multicolumn{1}{c|}{PSNR}
\\
\hline
Haar                   & 17.9  & 33.63  & 20.2  & 33.80 & 24.3 & 34.68 \\
Piecewise linear & 32.7 & 33.93 & 22.3  & 33.98 & 27.1 & 35.44   \\
Piecewise cubic & 61.0 & 33.95  & 37.3 & 34.00 & 37.8 & 35.20\\

\hline
\end{tabular}
\\
\end{table}

\begin{table}[t!]
\caption{Comparisons among different noise levels for image ``Portrait I"}
\centering \label{table4}
\begin{tabular}{|c||c c|c c|c c|}
\hline
\multicolumn{1}{|c||}{} &
\multicolumn{2}{c|}{Balanced approach} & \multicolumn{2}{c|}{Analysis based approach} & \multicolumn{2}{c|}{PD method}
\\
\multicolumn{1}{|l||}{Variances of noises}  &
\multicolumn{1}{c}{Time} & \multicolumn{1}{c|}{PSNR} &
\multicolumn{1}{c}{Time} & \multicolumn{1}{c|}{PSNR} &
\multicolumn{1}{c}{Time} & \multicolumn{1}{c|}{PSNR}
\\
\hline
$\sigma =3$ & 32.7 & 33.93 & 22.3  & 33.98 & 27.1 & 35.44   \\
$\sigma =5$ & 23.7 & 32.84 & 19.4  & 32.89 & 27.2 & 34.48   \\
$\sigma =7$ & 19.6 & 32.11 & 25.0  & 32.14 & 29.7 & 33.69   \\
\hline
\end{tabular}
\\
\end{table}

\section{Conclusion}
\label{conclude}

In this paper, we proposed a wavelet frame based $\ell_0$
minimization model, which is motivated by the analysis based
approach and balanced approach. The penalty decomposition (PD)
method of \cite{LuZhangTech2010} was used to solve the proposed
optimization problem. Numerical results showed that the proposed
model solved by the PD method can generate images with better
quality than those obtained by either analysis based approach or
balanced approach in terms of restoring sharp features like edges
as well as maintaining smoothness of the recovered images.
Convergence analysis of the sub-iterations in the PD method was
also provided.

\section*{Acknowledgement}
The first author would like to thank Ting Kei Pong for his helpful
discussions on efficiently solving the first subproblem arising in
the BCD method and the convergence results of the BCD method.

\bibliographystyle{plain}
\bibliography{ReferenceLibrary}

\begin{thebibliography}{10}

\bibitem{antoniadis2001regularization}
A.~Antoniadis and J.~Fan.
\newblock {Regularization of wavelet approximations}.
\newblock {\em Journal of the American Statistical Association},
  96(455):939--967, 2001.

\bibitem{aubert2006mathematical}
G.~Aubert and P.~Kornprobst.
\newblock {\em Mathematical problems in image processing: partial differential
  equations and the calculus of variations}.
\newblock Springer, 2006.

\bibitem{beck2009fast}
A.~Beck and M.~Teboulle.
\newblock {A fast iterative shrinkage-thresholding algorithm for linear inverse
  problems}.
\newblock {\em SIAM Journal on Imaging Sciences}, 2(1):183--202, 2009.

\bibitem{birgin2000nonmonotone}
E.G. Birgin, J.M. Mart{\'\i}nez, and M.~Raydan.
\newblock {Nonmonotone spectral projected gradient methods on convex sets}.
\newblock {\em SIAM Journal on Optimization}, 10(4):1196--1211, 2000.

\bibitem{BD08}
T.~Blumensath and M.E. Davies.
\newblock {Iterative thresholding for sparse approximations}.
\newblock {\em Journal of Fourier Analysis and Applications}, 14(5):629--654,
  2008.

\bibitem{blumensath2009iterative}
T.~Blumensath and M.E. Davies.
\newblock {Iterative hard thresholding for compressed sensing}.
\newblock {\em Applied and Computational Harmonic Analysis}, 27(3):265--274,
  2009.

\bibitem{CCSS}
J.F. Cai, R.H. Chan, L.~Shen, and Z.~Shen.
\newblock Convergence analysis of tight framelet approach for missing data
  recovery.
\newblock {\em Advances in Computational Mathematics}, 31(1):87--113, 2009.

\bibitem{cai2008simultaneous}
J.F. Cai, R.H. Chan, and Z.~Shen.
\newblock {Simultaneous cartoon and texture inpainting}.
\newblock {\em Inverse Problems and Imaging}, to appear, 2008.

\bibitem{CDOS2011}
J.F. Cai, B.~Dong, S.~Osher, and Z.~Shen.
\newblock Image restorations: total variation, wavelet frames and beyond.
\newblock {\em preprint}, 2011.

\bibitem{cai2009linearized}
J.F. Cai, S.~Osher, and Z.~Shen.
\newblock {Linearized Bregman iterations for frame-based image deblurring}.
\newblock {\em SIAM Journal on Imaging Sciences}, 2(1):226--252, 2009.

\bibitem{cai2009split}
J.F. Cai, S.~Osher, and Z.~Shen.
\newblock {Split Bregman methods and frame based image restoration}.
\newblock {\em Multiscale Modeling and Simulation: A SIAM Interdisciplinary
  Journal}, 8(2):337--369, 2009.

\bibitem{candes2010compressed}
E.J. Candes, Y.C. Eldar, D.~Needell, and P.~Randall.
\newblock {Compressed sensing with coherent and redundant dictionaries}.
\newblock {\em Applied and Computational Harmonic Analysis}, 2010.

\bibitem{CRT}
E.J. Cand{\`e}s, J.~Romberg, and T.~Tao.
\newblock Robust uncertainty principles: Exact signal reconstruction from
  highly incomplete frequency information.
\newblock {\em IEEE Transactions on Information Theory}, 52(2):489--509, 2006.

\bibitem{CT}
E.J. Candes and T.~Tao.
\newblock {Decoding by linear programming}.
\newblock {\em IEEE Transactions on Information Theory}, 51(12):4203--4215,
  2005.

\bibitem{CT1}
E.J. Candes and T.~Tao.
\newblock {Near-optimal signal recovery from random projections: Universal
  encoding strategies?}
\newblock {\em IEEE Transactions on Information Theory}, 52(12):5406--5425,
  2006.

\bibitem{chai2007deconvolution}
A.~Chai and Z.~Shen.
\newblock {Deconvolution: A wavelet frame approach}.
\newblock {\em Numerische Mathematik}, 106(4):529--587, 2007.

\bibitem{chambolle1997image}
A.~Chambolle and P.L. Lions.
\newblock {Image recovery via total variation minimization and related
  problems}.
\newblock {\em Numerische Mathematik}, 76(2):167--188, 1997.

\bibitem{chan2003wavelet}
R.H. Chan, T.F. Chan, L.~Shen, and Z.~Shen.
\newblock {Wavelet algorithms for high-resolution image reconstruction}.
\newblock {\em SIAM Journal on Scientific Computing}, 24(4):1408--1432, 2003.

\bibitem{chan2004tight}
R.H. Chan, S.D. Riemenschneider, L.~Shen, and Z.~Shen.
\newblock {Tight frame: an efficient way for high-resolution image
  reconstruction}.
\newblock {\em Applied and Computational Harmonic Analysis}, 17(1):91--115,
  2004.

\bibitem{chan2005frameletreport}
R.H. Chan, L.~Shen, and Z.~Shen.
\newblock {A framelet-based approach for image inpainting}.
\newblock {\em preprint}, 4:325, 2005.

\bibitem{chan2007frameletvideo}
R.H. Chan, Z.~Shen, and T.~Xia.
\newblock {A framelet algorithm for enhancing video stills}.
\newblock {\em Applied and Computational Harmonic Analysis}, 23(2):153--170,
  2007.

\bibitem{chan2006total}
T.~Chan, S.~Esedoglu, F.~Park, and A.~Yip.
\newblock Total variation image restoration: Overview and recent developments.
\newblock {\em Handbook of mathematical models in computer vision}, pages
  17--31, 2006.

\bibitem{ChanShen}
T.F. Chan and J.~Shen.
\newblock {\em Image processing and analysis: variational, PDE, wavelet, and
  stochastic methods}.
\newblock Society for Industrial Mathematics, 2005.

\bibitem{Dau}
I.~Daubechies.
\newblock {\em {Ten lectures on wavelets}}, volume CBMS-NSF Lecture Notes,
  SIAM, nr. 61.
\newblock Society for Industrial Mathematics, 1992.

\bibitem{Daubechies2003}
I.~Daubechies, B.~Han, A.~Ron, and Z.~Shen.
\newblock Framelets: \uppercase{MRA}-based constructions of wavelet frames.
\newblock {\em Applied and Computational Harmonic Analysis}, 14(1):1--46, 2003.

\bibitem{daubechies2007iteratively}
I.~Daubechies, G.~Teschke, and L.~Vese.
\newblock {Iteratively solving linear inverse problems under general convex
  constraints}.
\newblock {\em Inverse Problems and Imaging}, 1(1):29--46, 2007.

\bibitem{Dong2010Seg}
B.~Dong, A.~Chien, and Z.~Shen.
\newblock Frame based segmentation for medical images.
\newblock {\em Communications in Mathematical Sciences}, 9(2):551--559, 2010.

\bibitem{Dong2010IASNotes}
B.~Dong and Z.~Shen.
\newblock \uppercase{MRA} based wavelet frames and applications.
\newblock {\em IAS Lecture Notes Series, Summer Program on ``The Mathematics of
  Image Processing", Park City Mathematics Institute}, 2010.

\bibitem{Dong2011SurfRec}
B.~Dong and Z.~Shen.
\newblock Frame based surface reconstruction from unorganized points.
\newblock {\em accepted by Journal of Computational Physics}, 2011.

\bibitem{Do2}
D.L. Donoho.
\newblock {Compressed sensing}.
\newblock {\em IEEE Transactions on Information Theory}, 52:1289--1306, 2006.

\bibitem{elad2005simultaneous}
M.~Elad, J.L. Starck, P.~Querre, and D.L. Donoho.
\newblock {Simultaneous cartoon and texture image inpainting using
  morphological component analysis (MCA)}.
\newblock {\em Applied and Computational Harmonic Analysis}, 19(3):340--358,
  2005.

\bibitem{fadili5sparse}
M.J. Fadili and J.L. Starck.
\newblock {Sparse representations and bayesian image inpainting}.
\newblock {\em Proceedings of SPARS}, 2005.

\bibitem{fadili2009inpainting}
M.J. Fadili, J.L. Starck, and F.~Murtagh.
\newblock {Inpainting and zooming using sparse representations}.
\newblock {\em The Computer Journal}, 52(1):64, 2009.

\bibitem{figueiredo2003algorithm}
M.A.T. Figueiredo and R.D. Nowak.
\newblock {An EM algorithm for wavelet-based image restoration}.
\newblock {\em IEEE Transactions on Image Processing}, 12(8):906--916, 2003.

\bibitem{figueiredo2005bound}
M.A.T. Figueiredo and R.D. Nowak.
\newblock {A bound optimization approach to wavelet-based image deconvolution}.
\newblock {\em IEEE International Conference on Image Processing}, 2005.

\bibitem{geman1995nonlinear}
D.~Geman and C.~Yang.
\newblock Nonlinear image recovery with half-quadratic regularization.
\newblock {\em Image Processing, IEEE Transactions on}, 4(7):932--946, 1995.

\bibitem{GoldO}
T.~Goldstein and S.~Osher.
\newblock {The split Bregman algorithm for L1 regularized problems}.
\newblock {\em SIAM Journal on Imaging Sciences}, 2(2):323--343, 2009.

\bibitem{he06sp}
K.K. Herrity, A.C. Gilbert, and J.A. Tropp.
\newblock {Sparse approximation via iterative thresholding}.
\newblock {\em IEEE International Conference on Acoustics, Speech and Signal
  Processing}, 2006.

\bibitem{Xun2010FrameCBCT}
X.~Jia, Y.~Lou, B.~Dong, and S.~Jiang.
\newblock \uppercase{GPU}-based iterative cone beam \uppercase{CT}
  reconstruction using tight frame regularization.
\newblock {\em preprint}, 2010.

\bibitem{LuZhangTech2010}
Z.~Lu and Y.~Zhang.
\newblock Penalty decomposition methods for $l_0$-norm minimization.
\newblock {\em preprint}, 2010.

\bibitem{meyer2001oscillating}
Y.~Meyer.
\newblock {\em Oscillating patterns in image processing and nonlinear evolution
  equations: the fifteenth Dean Jacqueline B. Lewis memorial lectures},
  volume~22.
\newblock Amer Mathematical Society, 2001.

\bibitem{OF}
S.~Osher and R.P. Fedkiw.
\newblock {\em {Level set methods and dynamic implicit surfaces}}.
\newblock Springer, 2003.

\bibitem{rockafellar2004variational}
R.T. Rockafellar and J.B.W. Roger.
\newblock {\em Variational analysis}, volume 317.
\newblock Springer, 2004.

\bibitem{ron1997affine}
A.~Ron and Z.~Shen.
\newblock {Affine Systems in $ L_2(\mathbb{R}^d)$: The Analysis of the Analysis
  Operator}.
\newblock {\em Journal of Functional Analysis}, 148(2):408--447, 1997.

\bibitem{ROF}
L.~Rudin, S.~Osher, and E.~Fatemi.
\newblock {Nonlinear total variation based noise removal algorithms}.
\newblock {\em Journal of Physics D}, 60:259--268, 1992.

\bibitem{sapiro2001geometric}
G.~Sapiro.
\newblock {\em {Geometric partial differential equations and image analysis}}.
\newblock Cambridge Univ Pr, 2001.

\bibitem{ShenICM2010}
Z.~Shen.
\newblock Wavelet frames and image restorations.
\newblock {\em Proceedings of the International Congress of Mathematicians,
  Hyderabad, India}, 2010.

\bibitem{shenAPG2009}
Z.~Shen, K.~C. Toh, and S.~Yun.
\newblock An accelerated proximal gradient algorithm for frame based image
  restorations via the balanced approach.
\newblock {\em SIAM Journal on Imaging Sciences}, 4(2):573--596, 2011.

\bibitem{siddon1985fast}
R.L. Siddon.
\newblock {Fast calculation of the exact radiological path for a
  three-dimensional CT array}.
\newblock {\em Medical Physics}, 12:252--255, 1985.

\bibitem{starck2005image}
J.L. Starck, M.~Elad, and D.L. Donoho.
\newblock {Image decomposition via the combination of sparse representations
  and a variational approach}.
\newblock {\em IEEE Transactions on Image Processing}, 14(10):1570--1582, 2005.

\bibitem{steidl2005equivalence}
G.~Steidl, J.~Weickert, T.~Brox, P.~Mr{\'a}zek, and M.~Welk.
\newblock On the equivalence of soft wavelet shrinkage, total variation
  diffusion, total variation regularization, and sides.
\newblock {\em SIAM Journal on Numerical Analysis}, pages 686--713, 2005.

\bibitem{tseng2001convergence}
P.~Tseng.
\newblock Convergence of a block coordinate descent method for
  nondifferentiable minimization.
\newblock {\em Journal of optimization theory and applications},
  109(3):475--494, 2001.

\bibitem{wang2008new}
Y.~Wang, J.~Yang, W.~Yin, and Y.~Zhang.
\newblock A new alternating minimization algorithm for total variation image
  reconstruction.
\newblock {\em SIAM Journal on Imaging Sciences}, 1(3):248--272, 2008.

\end{thebibliography}
\end{document}